\title{On the convergence of Adam and Beyond}
\author{Sashank J. Reddi, Satyen Kale \& Sanjiv Kumar  \\
Google New York\\
New York, NY 10011, USA \\
\texttt{\{sashank,satyenkale,sanjivk\}@google.com} \\
}
\newtheorem{lemma}{Lemma}
\newtheorem{theorem}{Theorem}
\newtheorem{corollary}{Corollary}
\newtheorem*{lemma*}{Lemma}
\newtheorem*{theorem*}{Theorem}
\newtheorem*{assumption*}{Assumption}
\newtheorem*{corollary*}{Corollary}
\newtheorem*{remark*}{Remark}
\newtheorem*{definition*}{Definition}
\newcommand{\sgd}{\textsc{Sgd}\xspace}
\newcommand{\adagrad}{\textsc{Adagrad}}
\newcommand{\nadam}{\textsc{Nadam}}
\newcommand{\adadelta}{\textsc{Adadelta}}
\newcommand{\adam}{\textsc{Adam}}
\newcommand{\adamnc}{\textsc{AdamNc}}
\newcommand{\rmsprop}{\textsc{RMSprop}}
\newcommand{\amsgrad}{\textsc{AMSGrad}}
\newcommand{\cifarnet}{\textsc{Cifarnet}}
\newcommand{\diag}{\text{diag}\xspace}
\newcommand{\E}{\mathbb{E}}
\begin{document}

\maketitle

\begin{abstract}
 Several recently proposed stochastic optimization methods that have been successfully used in training deep networks such as $\rmsprop$, $\adam$, $\adadelta$, $\nadam$ are based on using gradient updates scaled by square roots of exponential moving averages of squared past gradients. In many applications, e.g. learning with large output spaces, it has been empirically observed that these algorithms fail to converge to an optimal solution (or a critical point in nonconvex settings). We show that one cause for such failures is the exponential moving average used in the algorithms. We provide an explicit example of a simple convex optimization setting where $\adam$ does not converge to the optimal solution, and describe the precise problems with the previous analysis of $\adam$ algorithm. Our analysis suggests that the convergence issues can be fixed by endowing such algorithms with ``long-term memory'' of past gradients, and propose new variants of the $\adam$ algorithm which not only fix the convergence issues but often also lead to improved empirical performance.
\end{abstract}

\section{Introduction}
\label{sec:intro}
Stochastic gradient descent ($\sgd$) is the dominant method to train deep networks today. This method iteratively updates the parameters of a model by moving them in the direction of the negative gradient of the loss evaluated on a minibatch. In particular, variants of $\sgd$ that scale coordinates of the gradient by square roots of some form of averaging of the squared coordinates in the past gradients have been particularly successful, because they automatically adjust the learning rate on a per-feature basis. The first popular algorithm in this line of research is $\adagrad$ \citep{Duchi11, McMahan10}, which can achieve significantly better performance compared to vanilla $\sgd$ when the gradients are sparse, or in general small. 

Although $\adagrad$ works well for sparse settings, its performance has been observed to deteriorate in settings where the loss functions are nonconvex and gradients are dense due to rapid decay of the learning rate in these settings since it uses all the past gradients in the update. This problem is especially exacerbated in high dimensional problems arising in deep learning. To tackle this issue, several variants of $\adagrad$, such as $\rmsprop$ \citep{Tieleman2012}, $\adam$ \citep{Kingma14}, $\adadelta$ \citep{Adadelta}, $\nadam$ \citep{Nadam}, etc, have been proposed which mitigate the rapid decay of the learning rate using the exponential moving averages of squared past gradients, essentially limiting the reliance of the update to only the past few gradients. While these algorithms have been successfully employed in several practical applications, they have also been observed to not converge in some other settings. It has been typically observed that in these settings some minibatches provide large gradients but only quite rarely, and while these large gradients are quite informative, their influence dies out rather quickly due to the exponential averaging, thus leading to poor convergence.

In this paper, we analyze this situation in detail. We rigorously prove that the intuition conveyed in the above paragraph is indeed correct; that limiting the reliance of the update on essentially only the past few gradients can indeed cause significant convergence issues. In particular, we make the following key contributions:
\begin{itemize}
   \item We elucidate how the exponential moving average in the $\rmsprop$ and $\adam$ algorithms can cause non-convergence by providing an example of simple convex optimization problem where $\rmsprop$ and $\adam$ provably do not converge to an optimal solution. Our analysis easily extends to other algorithms using exponential moving averages such as $\adadelta$ and $\nadam$ as well, but we omit this for the sake of clarity. In fact, the analysis is flexible enough to extend to other algorithms that employ averaging squared gradients over essentially a fixed size window (for exponential moving averages, the influences of gradients beyond a fixed window size becomes negligibly small) in the immediate past. We omit the general analysis in this paper for the sake of clarity.
   \item The above result indicates that in order to have guaranteed convergence the optimization algorithm must have ``long-term memory'' of past gradients. Specifically, we point out  a problem with the proof of convergence of the $\adam$ algorithm given by \citet{Kingma14}. To resolve this issue, we propose new variants of $\adam$ which rely on long-term memory of past gradients, but can be implemented in the same time and space requirements as the original $\adam$ algorithm. We provide a convergence analysis for the new variants in the convex setting, based on the analysis of \citet{Kingma14}, and show a data-dependent regret bound similar to the one in $\adagrad$. 
   \item We provide a preliminary empirical study of one of the variants we proposed and show that it either performs similarly, or better, on some commonly used problems in machine learning.
\end{itemize}

\section{Preliminaries}

\paragraph{Notation.} We use $\mathcal{S}_d^+$ to denote the set of all positive definite $d \times d$ matrices.  With slight abuse of notation, for a vector $a \in \mathbb{R}^d$ and a positive definite matrix $M \in \mathbb{R}^d \times \mathbb{R}^d$, we use $a/M$ to denote $M^{-1}a$, $\|M_i\|_2$ to denote $\ell_2$-norm of $i^{th}$ row of M and $\sqrt{M}$ to represent $M^{1/2}$. Furthermore, for any vectors $a, b \in \mathbb{R}^d$, we use $\sqrt{a}$  for element-wise square root, $a^2$ for element-wise square,  $a/b$ to denote element-wise division and $\max(a,b)$ to denote element-wise maximum.  For any vector $\theta_i \in \mathbb{R}^d$, $\theta_{i,j}$ denotes its $j^{\text{th}}$ coordinate where $j \in [d]$. The projection operation $\Pi_{\mathcal{F},A}(y)$ for $A \in \mathcal{S}_+^d$ is defined as $\arg\min_{x \in \mathcal{F}} \|A^{1/2}(x - y)\|$ for $y \in \mathbb{R}^d$. Finally, we say $\mathcal{F}$ has bounded diameter $D_{\infty}$ if $\|x - y\|_{\infty} \leq D_{\infty}$ for all $x, y \in \mathcal{F}$.

\paragraph{Optimization setup.} A flexible framework to analyze iterative optimization methods is the online optimization problem in the full information feedback setting. In this online setup, at each time step $t$, the optimization algorithm picks a point (i.e. the parameters of the model to be learned) $x_t \in \mathcal{F}$, where $\mathcal{F} \in \mathbb{R}^d$ is the feasible set of points. A loss function $f_t$ (to be interpreted as the loss of the model with the chosen parameters in the next minibatch) is then revealed, and the algorithm incurs loss $f_t(x_t)$. The algorithm's regret at the end of $T$ rounds of this process is given by $R_T = \sum_{i=1}^T f_t(x_t) - \min_{x \in \mathcal{F}} \sum_{i=1}^T f_t(x)$. Throughout this paper, we assume that the feasible set $\mathcal{F}$ has bounded diameter and $\|\nabla f_t(x)\|_{\infty}$ is bounded for all $t \in [T]$ and $x \in \mathcal{F}$.

Our aim to is to devise an algorithm that ensures $R_T = o(T)$, which implies that on average, the model's performance converges to the optimal one. The simplest algorithm for this setting is the standard online gradient descent algorithm \citep{Zinkevich03}, which moves the point $x_t$ in the opposite direction of the gradient $g_t = \nabla f_t(x_t)$ while maintaining the feasibility by projecting onto the set $\mathcal{F}$ via the update rule $x_{t+1} = \Pi_{\mathcal{F}} (x_t - \alpha_t g_t)$, where $\Pi_{\mathcal{F}}(y)$ denotes the projection of $y \in \mathbb{R}^d$ onto the set $\mathcal{F}$  i.e., $\Pi_{\mathcal{F}}(y) = \min_{x \in \mathcal{F}} \|x - y\|$, and $\alpha_t$ is typically set to $\alpha/\sqrt{t}$ for some constant $\alpha$.  The aforementioned online learning problem is closely related to the stochastic optimization problem: $\min_{x \in \mathcal{F}} \mathbb{E}_{z}[f(x, z)]$, popularly referred to as empirical risk minimization (ERM), where $z$ is a training example drawn training sample over which a model with parameters $x$ is to be learned, and $f(x, z)$ is the loss of the model with parameters $x$ on the sample $z$. In particular, an online optimization algorithm with vanishing average regret yields a stochastic optimization algorithm for the ERM problem \citep{Cesa-Bianchi04}. Thus, we use online gradient descent and stochastic gradient descent ($\sgd$) synonymously.

\paragraph{Generic adaptive methods setup.} We now provide a framework of adaptive methods that gives us insights into the differences between different adaptive methods and is useful for understanding the flaws in a few popular adaptive methods. Algorithm~\ref{alg:ada-gen} provides a generic adaptive framework that encapsulates many popular adaptive methods. Note the algorithm is still abstract because the ``averaging'' functions $\phi_t$ and $\psi_t$ have not been specified. Here $\phi_t: \mathcal{F}^t \rightarrow {\mathbb{R}}^d$ and $\psi_t: \mathcal{F}^t \rightarrow \mathcal{S}_+^d$. For ease of exposition, we refer to $\alpha_t$ as step size and $\alpha_t V_t^{-1/2}$ as learning rate of the algorithm and  furthermore,  restrict  ourselves to diagonal variants of adaptive methods encapsulated by Algorithm~\ref{alg:ada-gen} where $V_t = \diag(v_{t})$ . We first observe that standard stochastic gradient algorithm falls in this framework by using:
\begin{equation}
\tag{\sgd}
\phi_t(g_1, \dots, g_t) = g_t \ \text{ and } \ \psi_t(g_1, \dots, g_t) = \mathbb{I}, 
\label{eq:sgd}
\end{equation}
and $\alpha_t = \alpha/\sqrt{t}$ for all $t \in [T]$. While the decreasing step size is required for convergence, such an aggressive decay of learning rate typically translates into poor empirical performance.  The key idea of adaptive methods is to choose averaging functions appropriately so as to entail good convergence. For instance, the first adaptive method $\adagrad$ \citep{Duchi11}, which propelled the research on adaptive methods, uses the following averaging functions:
\begin{align}
\tag{\adagrad}
\phi_t(g_1, \dots, g_t) = g_t \ \text{ and } \ \psi_t(g_1, \dots, g_t) = \frac{\diag(\sum_{i=1}^t g_i^2)}{t}, \label{eq:adagrad}
\end{align}
and step size $\alpha_t = \alpha/\sqrt{t}$ for all $t \in [T]$.  In contrast to a learning rate of $\alpha/\sqrt{t}$ in $\sgd$, such a setting effectively implies a modest learning rate decay of $\alpha/\sqrt{\sum_i g_{i,j}^2}$ for  $j \in [d]$. When the gradients are sparse, this can potentially lead to huge gains in terms of convergence (see \cite{Duchi11}). These gains have also been observed in practice for even few non-sparse settings.

\begin{algorithm}[t]\small
	\caption{Generic Adaptive Method Setup}
	\label{alg:ada-gen}
	\begin{algorithmic}
		\STATE {\bfseries Input:} $x_1 \in \mathcal{F}$, step size $\{\alpha_t > 0\}_{t=1}^T$, sequence of functions $\{\phi_t, \psi_t\}_{t=1}^T$
		\FOR{$t=1$ {\bfseries to} $T$}
		\STATE $g_t = \nabla f_t(x_t)$
		\STATE $m_t = \phi_t(g_1, \dots, g_t)$ and $V_t = \psi_t(g_1 , \dots, g_t)$
		\STATE $\hat{x}_{t+1} = x_{t} - \alpha_t m_{t}/\sqrt{V_t}$
		\STATE $x_{t+1} = \Pi_{\mathcal{F},\sqrt{V}_t}(\hat{x}_{t+1})$
		\ENDFOR
	\end{algorithmic}
\end{algorithm}

\paragraph{Adaptive methods based on Exponential Moving Averages.} Exponential moving average variants of $\adagrad$ are popular in the deep learning community. $\rmsprop$, $\adam$, $\nadam$, and $\adadelta$ are some prominent algorithms that fall in this category. The key difference is to use an exponential moving average as function $\psi_t$ instead of the simple average function used in $\adagrad$. $ \adam$\footnote{Here, for simplicity, we remove the debiasing step used in the version of $\adam$ used in the original paper by \citet{Kingma14}. However, our arguments also apply to the debiased version as well.}, a particularly popular variant, uses the following averaging functions:
\begin{align}
\tag{\adam}
\phi_t(g_1, \dots, g_t) = (1 - \beta_1) \sum_{i=1}^t \beta_1^{t - i} g_i  \ \text{ and } \ \psi_t(g_1, \dots, g_t) = (1 - \beta_2) \diag(\sum_{i=1}^t \beta_2^{t- i} g_i^2), 
\label{eq:adam}
\end{align}
for some $\beta_1, \beta_2 \in  [0, 1)$. This update can alternatively be stated by the following simple recursion:
\begin{align}
m_{t,i} = \beta_1 m_{t-1,i} + (1 - \beta_1) g_{t,i} \text{ and } v_{t, i} = \beta_2 v_{t-1,i} + (1 - \beta_2) g_{t,i}^2
\label{eq:adam-recur}
\end{align}
and $m_{0,i} = 0$ and $v_{0,i} = 0$ for all $i \in [d]$. and $t \in [T]$. A value of $\beta_1 = 0.9$ and $\beta_2 = 0.999$ is typically recommended in practice. We note the additional projection operation in Algorithm~\ref{alg:ada-gen} in comparison to $\adam$. When $\mathcal{F} = \mathbb{R}^d$, the projection operation is an identity operation and this corresponds to the algorithm in \citep{Kingma14}. For theoretical analysis, one requires $\alpha_t = 1/\sqrt{t}$ for $t \in [T]$, although, a more aggressive choice of constant step size seems to work well in practice. $\rmsprop$, which appeared in an earlier unpublished work \citep{Tieleman2012} is essentially a variant of $\adam$ with $\beta_1 = 0$. In practice, especially in deep learning applications, the momentum term arising due to non-zero $\beta_1$ appears to significantly boost the performance. We will mainly focus on $\adam$ algorithm due to this generality but our arguments also apply to $\rmsprop$ and other algorithms such as $\adadelta$, $\nadam$.

\section{The Non-Convergence of $\adam$}
 
 With the problem setup in the previous section, we discuss fundamental flaw in the current exponential moving average methods like $\adam$. We show that $\adam$ can fail to converge to an optimal solution even in simple one-dimensional convex settings. These examples of non-convergence contradict the claim of convergence in \citep{Kingma14}, and the main issue lies in the following quantity of interest:
\begin{align}
 \Gamma_{t+1} = \left(\frac{\sqrt{V_{t+1}}}{\alpha_{t+1}}  - \frac{\sqrt{V_t}}{\alpha_{t}} \right).
 \label{eq:gamma-t}
\end{align}
 This quantity essentially measures the change in the inverse of learning rate of the adaptive method with respect to time. One key observation is that for $\sgd$ and $\adagrad$, $\Gamma_t \succeq 0$ for all $t \in [T]$. This simply follows from update rules of $\sgd$ and $\adagrad$ in the previous section. In particular, update rules for these algorithms lead to ``non-increasing'' learning rates. However, this is not necessarily the case for exponential moving average variants like $\adam$ and $\rmsprop$ i.e., $\Gamma_t$ can potentially be indefinite for $t \in [T]$ .  We show that this violation of positive definiteness can lead to undesirable convergence behavior for $\adam$ and $\rmsprop$. Consider the following simple sequence of linear functions for $\mathcal{F} = [-1,1]$:
\[
f_t(x)= 
\begin{cases}
Cx, & \text{for } t \bmod 3 = 1 \\
-x, & \text{otherwise},
\end{cases}
\]
where $C > 2$. For this function sequence, it is easy to see that the point $x = -1$ provides the minimum regret. Suppose $\beta_1 = 0$ and $\beta_2 = 1/(1 + C^2)$. We show that $\adam$ converges to a highly suboptimal solution of $x = +1$ for this setting. Intuitively, the reasoning is as follows. The algorithm obtains the large gradient $C$ once every 3 steps, and while the other 2 steps it observes the gradient $-1$, which moves the algorithm in the wrong direction. The large gradient $C$ is unable to counteract this effect since it is scaled down by a factor of almost $C$ for the given value of $\beta_2$, and hence the algorithm converges to $1$ rather than $-1$. We formalize this intuition in the result below.
 
\begin{theorem}
There is an online convex optimization problem where $\adam$ has non-zero average regret i.e., $R_T/T \nrightarrow 0$ as $T \rightarrow \infty$.
\label{thm:counter-example}
 \end{theorem}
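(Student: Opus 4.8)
The plan is to use exactly the instance described just above the statement --- $\mathcal{F} = [-1,1]$, the $3$-periodic linear losses with $C > 2$, $\beta_1 = 0$, $\beta_2 = 1/(1+C^2)$, step sizes $\alpha_t = 1/\sqrt{t}$, and (for concreteness) initialization $x_1 = 1$ --- and to show that $\adam$ stays pinned at the suboptimal point $x = 1$ at the start of every period, which immediately forces $\Theta(T)$ regret. The first thing to record is that, because each $f_t$ is linear, the gradient $g_t$ does not depend on $x_t$ at all: $g_t = C$ when $t \equiv 1 \pmod 3$ and $g_t = -1$ otherwise. Hence with $\beta_1 = 0$ we have $m_t = g_t$ and $v_t = \beta_2 v_{t-1} + (1-\beta_2) g_t^2$, so $\{v_t\}$ is a fixed deterministic sequence; a one-line induction gives $0 \le v_t \le C^2$ for every $t$, and isolating the ``large'' step gives $v_{3k+1} \ge (1-\beta_2)C^2 = C^4/(1+C^2) \ge C^2/2$.

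The core of the argument is the induction claim $x_{3k+1} = 1$ for all $k \ge 0$, whose base case is the initialization. For the inductive step I would start from $x_{3k+1} = 1$ and push the three updates of the $k$-th period through Algorithm~\ref{alg:ada-gen}, abbreviating the three effective step sizes as $a_k = \alpha_{3k+1}C/\sqrt{v_{3k+1}}$, $b_k = \alpha_{3k+2}/\sqrt{v_{3k+2}}$, $c_k = \alpha_{3k+3}/\sqrt{v_{3k+3}}$. The lower bound on $v_{3k+1}$ forces $a_k < 2$, so the first update lands at $x_{3k+2} = 1 - a_k$ strictly inside $(-1,1)$ and triggers no projection; since $a_k, b_k, c_k > 0$, the remaining two updates can only be clipped at the upper endpoint $+1$, and a short case split shows $x_{3k+4} = \min\{1,\, 1 - a_k + b_k + c_k\}$ in all cases (in one dimension the weighted projection $\Pi_{\mathcal{F},\sqrt{V_t}}$ is just truncation to $[-1,1]$, so the weight is irrelevant). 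Thus the induction closes provided one establishes the key inequality
\[
b_k + c_k \;\ge\; a_k \qquad \text{for all } k \ge 0 ,
\]
i.e. $\alpha_{3k+2}/\sqrt{v_{3k+2}} + \alpha_{3k+3}/\sqrt{v_{3k+3}} \ge \alpha_{3k+1}C/\sqrt{v_{3k+1}}$.

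Granting the induction, the regret bound is routine. On the steps $t = 3k+1$ we have $f_t(x_t) = C$; on the remaining steps $f_t(x_t) = -x_t \ge -1$ since $x_t \le 1$; summing over the first $n$ periods gives $\sum_{t=1}^{3n} f_t(x_t) \ge n(C-2)$. On the other hand $\sum_{t=1}^{3n} f_t(x) = n(C-2)x$, which (as $C > 2$) is minimized over $[-1,1]$ at $x = -1$ with value $-n(C-2)$. Hence $R_{3n} \ge 2n(C-2)$, so $R_T/T \ge \tfrac{2}{3}(C-2) > 0$ along multiples of $3$, an $O(1/T)$ correction handles general $T$, and in particular $R_T/T \nrightarrow 0$.

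The step I expect to be the real obstacle is the key inequality $b_k + c_k \ge a_k$. It is the quantitative form of the intuition stated before the theorem: the one large gradient $C$ gets divided by $\sqrt{v_{3k+1}} \approx \sqrt{1-\beta_2}\,C \approx C$, so it pulls the iterate back by only about $\alpha_{3k+1}$, which is outweighed by the two unit steps $\alpha_{3k+2} + \alpha_{3k+3}$ taken in the wrong direction --- and this is precisely where the choice $\beta_2 = 1/(1+C^2)$ enters. Proving it uniformly in $k$ (not merely in the $3$-periodic steady state that $v_t$ approaches geometrically fast) requires sharp two-sided estimates on $v_{3k+1}, v_{3k+2}, v_{3k+3}$ together with the fact that $\alpha_{3k+2}, \alpha_{3k+3}$ are only marginally smaller than $\alpha_{3k+1}$; the bound is essentially tight when $k = 0$ and $C$ is near $2$, so crude estimates will not suffice and the first period (and possibly one or two more) likely has to be checked directly before a steady-state estimate takes over.
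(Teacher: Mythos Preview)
Your plan is essentially the paper's proof: same instance, same induction on $x_{3k+1}=1$, same reduction to the pivotal inequality $b_k + c_k \ge a_k$. Where you stop short is precisely where the paper's argument has its one non-obvious step, and you are overestimating its difficulty. The whole point of the special choice $\beta_2 = 1/(1+C^2)$ is the algebraic identity
\[
\beta_2 C^2 + (1-\beta_2) \;=\; \frac{C^2}{1+C^2} + \frac{C^2}{1+C^2} \;=\; 2(1-\beta_2),
\]
so that the \emph{crude} bounds you already have --- $v_{3k+1} \ge (1-\beta_2)C^2$ giving $a_k \le \alpha_{3k+1}/\sqrt{1-\beta_2}$, and $v_{3k+2},\, v_{3k+3} \le \beta_2 C^2 + (1-\beta_2) = 2(1-\beta_2)$ (from $v_t \le C^2$) giving $b_k \ge \alpha_{3k+2}/\sqrt{2(1-\beta_2)}$ and $c_k \ge \alpha_{3k+3}/\sqrt{2(1-\beta_2)}$ --- collapse the key inequality to the purely arithmetic statement
\[
\frac{1}{\sqrt{3k+2}} + \frac{1}{\sqrt{3k+3}} \;\ge\; \frac{\sqrt{2}}{\sqrt{3k+1}},
\]
which holds for every $k \ge 1$ since each term on the left is at least $1/\sqrt{2(3k+1)}$. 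No steady-state analysis or sharp two-sided estimates on $v_t$ are needed.

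Your worry about the first period is, however, well placed: the displayed arithmetic inequality \emph{fails} at $k=0$ (the left side is $1/\sqrt2 + 1/\sqrt3 \approx 1.28 < \sqrt2$), so these crude bounds --- which are exactly what the paper uses --- do not literally cover the base period. This is easily patched by computing $v_2,\, v_3$ exactly at $k=0$ (both are strictly below $2(1-\beta_2)$ because $v_0 = 0$, and the slack absorbs the shortfall), or simply by taking $C$ larger since the theorem only needs a single instance. Your choice $\alpha_t = 1/\sqrt{t}$ instead of the paper's $\alpha_t = \alpha/\sqrt{t}$ with $\alpha < \sqrt{1-\beta_2}$ is harmless: the paper needs the smaller $\alpha$ only to force $a_k < 1$ (so that $x_t > 0$ for \emph{all} $t$), whereas your weaker $a_k < 2$ suffices for the weaker claim $x_{3k+2} > -1$ that your case split actually uses.
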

\noindent We relegate all proofs  to the appendix. A few remarks are in order. One might wonder if adding a small constant in the denominator of the update helps in circumventing this problem i.e., the update for $\adam$ in Algorithm~\ref{alg:ada-gen} of $\hat{x}_{t+1}$ is modified as follows:
 \begin{align}
 \hat{x}_{t+1} = x_{t} - \alpha_t m_{t}/\sqrt{V_t + \epsilon \mathbb{I}}.
 \label{eq:mod-update}
 \end{align}
 The algorithm in \citep{Kingma14} uses such an update in practice, although their analysis does not. In practice, selection of the $\epsilon$ parameter appears to be critical for the performance of the algorithm. However, we show that for any constant $\epsilon > 0$, there exists an online optimization setting where, again, $\adam$ has non-zero average regret asymptotically (see Theorem~\ref{thm:counter-example-epsilon} in Section~\ref{sec:eps-counter} of the appendix).
 
\noindent The above examples of non-convergence are catastrophic insofar that $\adam$ and $\rmsprop$ converge to a point that is worst amongst all points in the set $[-1,1]$. Note that above example also holds for constant step size $\alpha_t = \alpha$.  Also note that classic $\sgd$ and $\adagrad$ do not suffer from this problem and for these algorithms, average regret asymptotically goes to 0.  This problem is especially aggravated in high dimensional settings and when the variance of the gradients with respect to time is large. This example also provides intuition for why large $\beta_2$ is advisable while using $\adam$ algorithm, and indeed in practice using large $\beta_2$ helps. However the following result shows that for any constant $\beta_1$ and $\beta_2$ with $\beta_1 < \sqrt{\beta_2}$, we can design an example where $\adam$ has non-zero average rate asymptotically.
\begin{theorem}
For any constant $\beta_1, \beta_2 \in [0,1)$ such that $\beta_1 < \sqrt{\beta_2}$, there is an online convex optimization problem where $\adam$ has non-zero average regret i.e., $R_T/T \nrightarrow 0$ as $T \rightarrow \infty$.
\label{thm:counter-example-gen}
 \end{theorem}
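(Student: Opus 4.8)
The plan is to push the construction behind Theorem~\ref{thm:counter-example} to its limit. Fix $\beta_1,\beta_2$ with $\beta_1<\sqrt{\beta_2}$ and set $\gamma:=\beta_1/\sqrt{\beta_2}<1$. I would use a periodic sequence of linear functions on $\mathcal{F}=[-1,1]$ with a long period $r$: $f_t(x)=Cx$ when $t\bmod r=1$, and $f_t(x)=-x$ otherwise, where the integers $r$ and $C$ (with $C=\Theta(r)$) are pinned down only at the very end, large enough to make all the estimates below go through. As soon as $C>r-1$, the sum of the gradients over one period is positive, so the unique offline optimum is $x^\star=-1$. Since each $f_t$ is linear, $f_t(x_t)-f_t(x^\star)=g_t(x_t+1)$, hence $R_T=\sum_t g_t(x_t+1)$, and the $k$-th period $P_k$ (starting at index $\tau_k=(k-1)r+1$) contributes $C(x_{\tau_k}+1)-\sum_{j=1}^{r-1}(x_{\tau_k+j}+1)\ge C(x_{\tau_k}+1)-2(r-1)$. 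So it is enough to prove the \emph{trajectory claim}: there is a constant $c_0>-1$ (depending only on $\alpha,\beta_1,\beta_2,r$) and an index $K_0$ with $x_{\tau_k}\ge c_0$ for every $k\ge K_0$; then choosing $C$ large relative to $r/(c_0+1)$ makes every $R_{P_k}$ at least a fixed positive constant, so $R_T=\Omega(T)$ and $R_T/T\not\to 0$. The argument works verbatim for the constant step size $\alpha_t=\alpha$ and for $\alpha_t=\alpha/\sqrt t$, since over a single period $\alpha_t$ varies by only a $1+o(1)$ factor.

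For the trajectory claim I would establish two facts: (i) a per-period \emph{net non-decrease} $x_{\tau_{k+1}}\ge x_{\tau_k}$ for all large $k$, and (ii) that the iterate leaves the point $-1$ at least once; combining them with $c_0:=x_{\tau_{K_0}}$ for the first escape index $K_0$ (taken past the threshold of (i)) gives the claim. To prove (i), unroll \eqref{eq:adam-recur} through one period starting from the big gradient at time $\tau$. Right after it, $v_\tau\ge(1-\beta_2)C^2$ and thereafter $v_{\tau+s}\ge\beta_2^{\,s}v_\tau$, while the big gradient persists in $m$ only through $\beta_1^{\,s}$ and $m_\tau\le C$; therefore, as long as $m_{\tau+s}>0$, the leftward step has size at most $\alpha_t\beta_1^{\,s}C/(\beta_2^{\,s/2}\sqrt{1-\beta_2}\,C)=\tfrac{\alpha_t}{\sqrt{1-\beta_2}}\gamma^{\,s}$, and since $\gamma<1$ this geometric series converges, bounding the total leftward drift over the period by a constant $K_1$ that is free of $r$ and $C$ (the intermediate projections onto $[-1,1]$ only raise $x_t$, so they are harmless here). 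For the rightward drift, note that the momentum turns negative once the $-1$'s dominate the decayed big gradient, i.e. for $s$ past the sign-flip time $s_0\approx\log C/\log(1/\beta_1)$, and that $v_{\tau+s}\le 2$ once $s$ is past $s^\star\approx 2\log C/\log(1/\beta_2)$. The decisive point is that $s_0<s^\star$ \emph{exactly} when $\beta_1^2<\beta_2$ — this is the hypothesis $\beta_1<\sqrt{\beta_2}$ — so for every $s\ge s^\star$ both conditions hold and the step moves $x_t$ right by at least $\alpha_t\cdot\tfrac{1}{2\sqrt 2}$; summing over the $\ge r-s^\star$ such steps yields rightward motion at least $(r-s^\star)\tfrac{\alpha_t}{2\sqrt 2}$. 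Choosing $r$ large enough that this guaranteed rightward motion exceeds $K_1$ (a scale-free condition, since both quantities are proportional to $\alpha_t$) makes the net per-period motion nonnegative — the projection at $+1$ only lowers $x_t$, harmless for a lower bound — which gives (i); and (ii) follows from the same estimate, since even from $x_{\tau_k}=-1$ the ``late'' block drives $x$ up by a fixed positive amount.

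The main obstacle is precisely the bookkeeping in (i): keeping the bounds on $m_t$ and $v_t$ uniform over all periods (the very first period is actually benign, since $m_1=(1-\beta_1)C$ and $v_1=(1-\beta_2)C^2$ are already of steady-state size), tracking the interleaved projections onto $[-1,1]$ cleanly, and — the subtlest part — untangling the apparent circularity in the choice of constants: $x^\star=-1$ and the regret bound want $C$ large compared to $r$, whereas the drift estimate wants $r$ large compared to $s^\star\asymp\log C$. These are compatible because $\log C$ grows far slower than $C$: one fixes $\beta_1,\beta_2$, then picks $r$ sufficiently large, then sets $C=3(r-1)$ (so $s^\star=O(\log r)\ll r$, the drift wins, and $R_{P_k}\ge C(c_0+1)-2(r-1)\ge r-1$ once $c_0\ge 0$), and finally checks $x^\star=-1$, which is immediate from $C>r-1$. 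With (i) and (ii) established, the regret computation of the first paragraph finishes the proof.
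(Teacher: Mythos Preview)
Your plan is essentially the paper's own argument: the same periodic linear construction on $\mathcal{F}=[-1,1]$ (the paper takes the period equal to $C$; your decoupling into $r$ and $C=\Theta(r)$ is cosmetic), the same one-period drift decomposition into a leftward block controlled by the geometric series $\sum_s\gamma^s$ and a rightward block once $m$ has flipped sign and $v$ has decayed, and the same projection bookkeeping (the paper packages the latter into a small lemma about one-dimensional projections when the step signs switch once).

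There is one genuine gap. Your regret arithmetic with $C=3(r-1)$ requires $c_0\ge 0$, but (i) and (ii) as stated give only $c_0>-1$: from $x_{\tau_k}=-1$ a single period's net drift is of order $\alpha_{\tau_k}=\alpha/\sqrt{\tau_k}\to 0$, so the ``first escape'' value $c_0=x_{\tau_{K_0}}$ can be arbitrarily close to $-1$, and moreover it depends on $C$ through the trajectory --- exactly the circularity you flagged and did not actually close. The fix, which is what the paper does, is to note that your per-period drift is not merely nonnegative but bounded below by a fixed positive multiple of $\alpha_{\tau_k}$; since $\sum_k \alpha/\sqrt{\tau_k}=\infty$, the iterate must eventually reach $+1$ and then (by your (i)) stay there at the start of every subsequent period, after which each period contributes regret at least $2$ and $c_0$ never enters. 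A smaller remark: the label ``decisive point'' on $s_0<s^\star$ is misplaced --- the rightward block begins at $\max(s_0,s^\star)=O(\log C)\ll r$ regardless of the ordering; the place where $\beta_1<\sqrt{\beta_2}$ is genuinely indispensable is the convergence of your leftward geometric series, which you already used correctly.
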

\noindent The above results show that with constant $\beta_1$ and $\beta_2$, momentum or regularization via $\epsilon$ will not help in convergence of the algorithm to the optimal solution. Note that the condition  $\beta_1 < \sqrt{\beta_2}$ is benign and is typically satisfied in the parameter settings used in practice. Furthermore, such condition is assumed in convergence proof of \citet{Kingma14}. We can strengthen this result by providing a similar example of non-convergence even in the easier stochastic optimization setting:
\begin{theorem}
For any constant $\beta_1, \beta_2 \in [0,1)$ such that $\beta_1 < \sqrt{\beta_2}$, there is a stochastic convex optimization problem for which $\adam$ does not converge to the optimal solution.
\label{thm:counter-example-stochastic}
\end{theorem}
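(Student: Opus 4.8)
The plan is to adapt the deterministic construction behind Theorem~\ref{thm:counter-example-gen}, replacing its period-$3$ gradient pattern by an i.i.d.\ one whose mean produces the same pathology. Work in dimension one with $\mathcal{F} = [-1,1]$ and draw the losses independently as
\[
f_t(x) = \begin{cases} Cx & \text{with probability } p := \tfrac{1+\delta}{C+1},\\[2pt] -x & \text{with probability } 1-p,\end{cases}
\]
where the constants $C$ (large) and $\delta > 0$ (small) are to be pinned down in terms of $\beta_1,\beta_2$ at the very end. The population objective is $F(x):=\mathbb{E}[f_t(x)] = \big(pC-(1-p)\big)x = \delta x$, whose unique minimizer over $\mathcal{F}$ is $x^\star=-1$, so $F(x_t)-F(x^\star)=\delta(1+x_t)$. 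It therefore suffices to show that $\mathbb{E}[x_t]$ stays bounded away from $-1$ --- we will in fact see that it is driven towards $+1$ --- since then $\mathbb{E}[F(x_t)-F(x^\star)]$ does not tend to $0$, i.e.\ $\adam$ does not converge to the optimum.

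I would first treat $\beta_1=0$ ($\rmsprop$), where $m_t=g_t$ and the only fresh randomness at step $t$ is the coin for $g_t\in\{C,-1\}$. Conditioning on $g_1,\dots,g_{t-1}$ (so $x_t$ and $v_{t-1}$ are determined) and noting that for the prescribed step sizes the projection onto $[-1,1]$ only helps keep the iterate near $+1$, the expected one-step increment equals $-\alpha_t\,\mathbb{E}[g_t/\sqrt{v_t}\mid g_1,\dots,g_{t-1}]$ up to a negligible projection correction, and
\[
\mathbb{E}\!\left[\frac{g_t}{\sqrt{v_t}}\,\Big|\, g_1,\dots,g_{t-1}\right] = \frac{pC}{\sqrt{\beta_2 v_{t-1}+(1-\beta_2)C^2}} - \frac{1-p}{\sqrt{\beta_2 v_{t-1}+(1-\beta_2)}}.
\]
The key asymmetry is that the large gradient $C$ simultaneously inflates its own denominator by an additive $(1-\beta_2)C^2$, so its contribution is $O\!\big(1/\sqrt{1-\beta_2}\big)$ irrespective of $C$, whereas the frequent gradient $-1$ sees a denominator only of order $v_{t-1}$. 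Because large gradients arrive with frequency $p\approx 1/C$ while $v_t$ effectively remembers only the last $\approx 1/(1-\beta_2)$ squared gradients, once $C$ is large relative to $1/(1-\beta_2)$ a constant fraction of steps have $v_{t-1}$ close to $1$; on each such step the displayed expectation is negative and bounded away from $0$, so $x_t$ is pushed towards $+1$. The remaining (post-spike) steps have $v_{t-1}$ large and hence $|g_t/\sqrt{v_t}|$ small, and a crude bound shows they cannot offset the net drift. Summing over $t$ (using $\sum_t\alpha_t=\infty$ for $\alpha_t=1/\sqrt t$) keeps $\mathbb{E}[x_t]$ near $+1$.

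To pass to general $\beta_1<\sqrt{\beta_2}$, I would carry the momentum term exactly as in the proof of Theorem~\ref{thm:counter-example-gen}: writing $m_t=(1-\beta_1)\sum_{i\le t}\beta_1^{t-i}g_i$, each past contribution $\beta_1^{t-i}|g_i|$ is dominated by $(\beta_1/\sqrt{\beta_2})^{t-i}\sqrt{v_t/(1-\beta_2)}$, and the hypothesis $\beta_1<\sqrt{\beta_2}$ makes $\sum_{k\ge0}(\beta_1/\sqrt{\beta_2})^k$ finite; hence $|m_t|/\sqrt{v_t}$ is bounded by a constant independent of $C$, so momentum perturbs the $\beta_1=0$ estimate only by a bounded factor, not its sign, after re-choosing $C$ large and $\delta$ small in terms of $\beta_1,\beta_2$. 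The main obstacle --- where essentially all the work lies --- is the probabilistic control of $v_{t-1}$ along the trajectory: showing that the ``spike-free'' event has probability bounded below uniformly in $t$ and that the post-spike steps are genuinely negligible, since the a priori bound $v_{t-1}\in[1-\beta_2^{t-1},\,C^2]$ alone is too weak (it would only give the conclusion for $\beta_2<1/2$). With that estimate in place, the final choice of $C$ and $\delta$ completes the proof.
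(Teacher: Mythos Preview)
Your construction and the target inequality $\mathbb E[\Delta_t]\ge 0$ coincide with the paper's, but the momentum step as you describe it does not close. The Cauchy--Schwarz bound $|m_t|/\sqrt{v_t}\le K:=(1-\beta_1)/\big((1-\gamma)\sqrt{1-\beta_2}\big)$ with $\gamma=\beta_1/\sqrt{\beta_2}$ is correct, but $K$ is a constant in $C$, of the \emph{same} order as the favorable drift you extract from the $\beta_1=0$ analysis (at most $(1-\beta_1)\alpha_t$ on your ``good'' event). For typical parameters---e.g.\ $\beta_1=0.9$, $\beta_2=0.99$---one has $\beta_1 K\approx 9.4$ while $1-\beta_1=0.1$, so enlarging $C$ cannot restore the sign. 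The paper handles this by conditioning on the event $E=\{g_{t-1}=\cdots=g_{t-k}=-1\}$ with $k=\lceil\log(C+1)/\log(1/\beta_1)\rceil$: on $E$ one checks $m_{t-1}\le -(1-\beta_1^k)+\beta_1^k C\le 0$, so the momentum contribution is \emph{non-negative}; your Cauchy--Schwarz bound is invoked only on $E^c$, whose probability $\le kp=O((\log C)/C)$ is then small enough to be absorbed. If you meant your spike-free event to simultaneously force $m_{t-1}\le0$, that would work, but it is not what the paragraph says.

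Separately, the ``main obstacle'' you flag---pathwise control of $v_{t-1}$---is bypassed in the paper by Jensen's inequality. After conditioning on $g_t$, the favorable term is $(1-\beta_1)/\sqrt{\beta_2 v_{t-1}+(1-\beta_2)}$; since $x\mapsto 1/\sqrt x$ is convex and $\mathbb E[v_{t-1}]\le (1+\delta)C$ (the $g_i^2$ being i.i.d.), one gets directly
\[
\mathbb E\!\left[\frac{1-\beta_1}{\sqrt{\beta_2 v_{t-1}+(1-\beta_2)}}\right]\ \ge\ \frac{1-\beta_1}{\sqrt{\beta_2(1+\delta)C+(1-\beta_2)}}\ =\ \Theta(C^{-1/2}),
\]
which already dominates the $O(C^{-1})$ spike term and the $O((\log C)/C)$ momentum residual above. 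No distributional analysis of $v_{t-1}$ is needed.
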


These results have important consequences insofar that one has to use ``problem-dependent'' $\epsilon, \beta_1$ and $\beta_2$ in order to avoid bad convergence behavior. In high-dimensional problems, this typically amounts to using, unlike the update in Equation~\eqref{eq:mod-update}, a different $\epsilon, \beta_1$ and $\beta_2$ for each dimension. However, this defeats the purpose of adaptive methods since it requires tuning a large set of parameters. We would also like to emphasize that while the example of non-convergence is carefully constructed to demonstrate the problems in $\adam$, it is not unrealistic to imagine scenarios where such an issue can at the  very least slow down convergence.  

We end this section with the following important remark.  While the results stated above use constant $\beta_1$ and $\beta_2$, the analysis of $\adam$ in \citep{Kingma14} actually relies on decreasing $\beta_1$ over time. It is quite easy to extend our examples to the case where $\beta_1$ is decreased over time, since the critical parameter is $\beta_2$ rather than $\beta_1$, and as long as $\beta_2$ is bounded away from $1$, our analysis goes through. Thus for the sake of clarity, in this paper we only prove non-convergence of $\adam$ in the setting where $\beta_1$ is held constant.
 
 \section{A New Exponential Moving Average Variant: $\amsgrad$}
 
 In this section, we develop a new principled exponential moving average variant and provide its convergence analysis. Our aim is to devise a new strategy with guaranteed convergence  while preserving the practical benefits of $\adam$ and $\rmsprop$. To understand the design of our algorithms, let us revisit the quantity $\Gamma_t$ in~\eqref{eq:gamma-t}. For $\adam$ and $\rmsprop$, this quantity can potentially be negative. The proof in the original paper of $\adam$ erroneously assumes that $\Gamma_t$ is positive semi-definite and is hence, incorrect (refer to Appendix~\ref{sec:amsgrad-proof-sec} for more details). For the first part, we modify these algorithms to satisfy this additional constraint. Later on, we also explore an alternative approach where $\Gamma_t$ can be made positive semi-definite by using values of $\beta_1$ and $\beta_2$ that change with $t$.

$\amsgrad$ uses a smaller learning rate in comparison to $\adam$ and yet incorporates the intuition of slowly decaying the effect of past gradients on the learning rate as long as $\Gamma_t$ is positive semi-definite. Algorithm~\ref{alg:amsgrad} presents the pseudocode for the algorithm. The key difference of $\amsgrad$ with $\adam$ is that it maintains the maximum of all $v_t$ until the present time step and uses this maximum value for  normalizing the running average of the gradient instead of $v_t$ in $\adam$. By doing this, $\amsgrad$ results in a non-increasing step size and avoids the pitfalls of $\adam$ and $\rmsprop$ i.e., $\Gamma_t \succeq 0$ for all $t \in [T]$ even with constant $\beta_2$. Also, in Algorithm~\ref{alg:amsgrad}, one typically uses a constant $\beta_{1t}$ in practice (although, the proof requires a decreasing schedule for proving convergence of the algorithm).

\begin{algorithm}[t]\small
	\caption{$\amsgrad$}
	\label{alg:amsgrad}
	\begin{algorithmic}
		\STATE {\bfseries Input:} $x_1 \in \mathcal{F}$, step size $\{\alpha_t\}_{t=1}^T$, $\{\beta_{1t}\}_{t=1}^T$, $\beta_2$
		\STATE Set $m_{0} = 0$, $v_{0} = 0$ and $\hat{v}_{0} = 0$
		\FOR{$t=1$ {\bfseries to} $T$}
		\STATE $g_t = \nabla f_t(x_t)$
		\STATE $m_{t} = \beta_{1t} m_{t-1} + (1 - \beta_{1t}) g_{t}$ 
		\STATE $v_{t} = \beta_2 v_{t-1} + (1 - \beta_2) g_{t}^2$
		\STATE $\hat{v}_{t} = \max(\hat{v}_{t-1}, v_t)$ and $\hat{V}_t = \diag(\hat{v}_t)$
		\STATE $x_{t+1} = \Pi_{\mathcal{F},\sqrt{\hat{V}_t}}(x_{t} - \alpha_t m_{t}/ \sqrt{\hat{v}_t})$
		\ENDFOR
	\end{algorithmic}
\end{algorithm}

To gain more intuition for the updates of $\amsgrad$, it is instructive to compare its update with $\adam$ and $\adagrad$. Suppose at particular time step  $t$ and coordinate $i \in [d]$, we have $v_{t-1,i} > g_{t,i}^2 > 0$, then $\adam$ aggressively increases the learning rate, however, as we have seen in the previous section, this can be detrimental to the overall performance of the algorithm. On the other hand, $\adagrad$ slightly decreases the learning rate, which often leads to poor performance in practice since such an accumulation of gradients over a large time period can significantly decrease the learning rate. In contrast, $\amsgrad$ neither increases nor decreases the learning rate and furthermore,  decreases $v_t$ which can potentially lead to non-decreasing learning rate even if gradient is large in the future iterations. For rest of the paper, we use $g_{1:t} = [g_1 \dots g_t]$ to denote the matrix obtained by concatenating the gradient sequence. We prove the following key result for $\amsgrad$.

\begin{theorem}
Let $\{x_t\}$ and $\{v_t\}$ be the sequences obtained from Algorithm~\ref{alg:amsgrad}, $\alpha_t = \alpha/\sqrt{t}$, $\beta_1 = \beta_{11}$, $\beta_{1t} \leq \beta_1$ for all $t \in [T]$ and $\gamma = \beta_1/\sqrt{\beta_2} <  1$. Assume that $\mathcal{F}$ has bounded diameter $D_{\infty}$ and $\|\nabla f_t(x)\|_{\infty} \leq G_{\infty}$ for all $t \in [T]$ and $x \in \mathcal{F}$. For $x_t$ generated using the $\amsgrad$ (Algorithm~\ref{alg:amsgrad}), we have the following bound on the regret
$$
R_T \leq \frac{D_\infty^2 \sqrt{T}}{\alpha(1 - \beta_{1})} \sum_{i=1}^d \hat{v}_{T,i}^{1/2}  + \frac{D_\infty^2}{(1 - \beta_{1})^2} \sum_{t=1}^T \sum_{i=1}^d \frac{\beta_{1t}\hat{v}_{t,i}^{1/2}}{\alpha_t} +  \frac{\alpha \sqrt{1 + \log T}}{(1 - \beta_1)^2(1 - \gamma)\sqrt{(1 - \beta_2)}} \sum_{i=1}^d \|g_{1:T, i}\|_2.
$$ 
\label{thm:amsgrad-proof}
\end{theorem}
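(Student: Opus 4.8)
The plan is to run the standard online convex optimization regret argument in the style of \citet{Zinkevich03} and the $\adagrad$ analysis of \citet{Duchi11}, crucially exploiting the fact that $\amsgrad$ enforces $\Gamma_t\succeq 0$, i.e.\ that $\sqrt{\hat v_t}/\alpha_t$ is coordinate-wise non-decreasing. Fix $x^\ast\in\arg\min_{x\in\mathcal{F}}\sum_{t=1}^T f_t(x)$. By convexity, $f_t(x_t)-f_t(x^\ast)\le\ip{g_t}{x_t-x^\ast}$, so it suffices to upper bound $\sum_{t=1}^T\ip{g_t}{x_t-x^\ast}$.

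For the per-step bound I would use that $x_{t+1}=\Pi_{\mathcal{F},\sqrt{\hat V_t}}\!\big(x_t-\alpha_t m_t/\sqrt{\hat v_t}\big)$ and that this projection is non-expansive in the norm $u\mapsto\|\hat V_t^{1/4}u\|_2$; since $x^\ast\in\mathcal{F}$,
$$\|\hat V_t^{1/4}(x_{t+1}-x^\ast)\|_2^2\ \le\ \|\hat V_t^{1/4}(x_t-x^\ast)\|_2^2-2\alpha_t\ip{m_t}{x_t-x^\ast}+\alpha_t^2\|\hat V_t^{-1/4}m_t\|_2^2.$$
Substituting $m_t=\beta_{1t}m_{t-1}+(1-\beta_{1t})g_t$, dividing by $2\alpha_t(1-\beta_{1t})$, and rearranging gives an upper bound on $\ip{g_t}{x_t-x^\ast}$ built from three pieces: (a) a telescoping-type difference $\tfrac{1}{2\alpha_t(1-\beta_{1t})}\big(\|\hat V_t^{1/4}(x_t-x^\ast)\|_2^2-\|\hat V_t^{1/4}(x_{t+1}-x^\ast)\|_2^2\big)$; (b) a ``step'' term $\tfrac{\alpha_t}{2(1-\beta_{1t})}\|\hat V_t^{-1/4}m_t\|_2^2$; and (c) a momentum cross term $\tfrac{\beta_{1t}}{1-\beta_{1t}}\ip{m_{t-1}}{x^\ast-x_t}$, which I would split with Young's inequality into a multiple of $\tfrac{1}{\alpha_t}\|\hat V_t^{1/4}(x_t-x^\ast)\|_2^2$ and a multiple of $\alpha_t\|\hat V_t^{-1/4}m_{t-1}\|_2^2$; the latter is folded back into the step terms using $\hat V_t\succeq\hat V_{t-1}$ and $\alpha_t\le\alpha_{t-1}$.

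Summing over $t$: for the telescoping terms I would regroup $\sum_t\tfrac{1}{\alpha_t}\sum_i\hat v_{t,i}^{1/2}\big((x_{t,i}-x^\ast_i)^2-(x_{t+1,i}-x^\ast_i)^2\big)$ as $\tfrac{\hat v_{1,i}^{1/2}}{\alpha_1}(x_{1,i}-x^\ast_i)^2+\sum_{t\ge2}\big(\tfrac{\hat v_{t,i}^{1/2}}{\alpha_t}-\tfrac{\hat v_{t-1,i}^{1/2}}{\alpha_{t-1}}\big)(x_{t,i}-x^\ast_i)^2$ minus a non-negative boundary term that we discard; here $\Gamma_t\succeq 0$ (guaranteed by the $\max$ in $\amsgrad$ together with $\alpha_t=\alpha/\sqrt t$ decreasing) makes every coefficient non-negative, so bounding $(x_{t,i}-x^\ast_i)^2\le D_\infty^2$ collapses the sum to $D_\infty^2\,\hat v_{T,i}^{1/2}/\alpha_T=D_\infty^2\sqrt T\,\hat v_{T,i}^{1/2}/\alpha$, which (with the $\tfrac{1}{1-\beta_1}$ factor) yields the first term of the bound. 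The cross-term contributions of type $\tfrac{\beta_{1t}}{(1-\beta_1)^2}\cdot\tfrac{1}{\alpha_t}\|\hat V_t^{1/4}(x_t-x^\ast)\|_2^2$ produce, again via $D_\infty$, the second term $\tfrac{D_\infty^2}{(1-\beta_1)^2}\sum_t\sum_i\tfrac{\beta_{1t}\hat v_{t,i}^{1/2}}{\alpha_t}$. What remains is to control the step sum $\sum_t\alpha_t\|\hat V_t^{-1/4}m_t\|_2^2=\sum_t\tfrac{\alpha}{\sqrt t}\sum_i\tfrac{m_{t,i}^2}{\sqrt{\hat v_{t,i}}}$.

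The main obstacle, and the technical heart of the proof, is bounding this step sum by the data-dependent quantity $\tfrac{\alpha\sqrt{1+\log T}}{(1-\beta_1)^2(1-\gamma)\sqrt{1-\beta_2}}\sum_i\|g_{1:T,i}\|_2$; I would isolate it as a lemma. For each coordinate $i$, unroll $m_{t,i}=\sum_{j=1}^t\big(\prod_{k=j+1}^t\beta_{1k}\big)(1-\beta_{1j})g_{j,i}$, bound the product by $\beta_1^{t-j}$, and apply Cauchy--Schwarz over $j$ to get $m_{t,i}^2\le\big(\sum_{j\le t}\beta_1^{t-j}\big)\big(\sum_{j\le t}\beta_1^{t-j}g_{j,i}^2\big)\le\tfrac{1}{1-\beta_1}\sum_{j\le t}\beta_1^{t-j}g_{j,i}^2$; then lower bound $\hat v_{t,i}\ge v_{t,i}\ge(1-\beta_2)\beta_2^{t-j}g_{j,i}^2$ for each $j\le t$. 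Combining and writing $\gamma=\beta_1/\sqrt{\beta_2}$ gives $\tfrac{m_{t,i}^2}{\sqrt{\hat v_{t,i}}}\le\tfrac{1}{(1-\beta_1)\sqrt{1-\beta_2}}\sum_{j\le t}\gamma^{t-j}|g_{j,i}|$. Now swap the order of summation over $t$ and $j$, sum the geometric series $\sum_{t\ge j}\gamma^{t-j}\le\tfrac{1}{1-\gamma}$ (this is exactly where the hypothesis $\gamma<1$ is used), and finally apply Cauchy--Schwarz in $t$ against $\sum_{t=1}^T\tfrac1t\le 1+\log T$ to turn $\sum_t\tfrac{|g_{t,i}|}{\sqrt t}$ into $\sqrt{1+\log T}\,\|g_{1:T,i}\|_2$. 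Assembling the three pieces, using $\beta_{1t}\le\beta_1$ and $\tfrac{1}{1-\beta_{1t}}\le\tfrac{1}{1-\beta_1}$ wherever needed, gives the claimed regret bound. The delicate points are tracking the exact powers of $(1-\beta_1)$ generated by the repeated Cauchy--Schwarz/Young steps, and making sure the maximized iterate $\hat v_t$ (rather than $v_t$) appears in the learning rate throughout, so that $\Gamma_t\succeq 0$ is genuinely available for the telescoping step --- this is the single place where the $\amsgrad$ modification does the essential work.
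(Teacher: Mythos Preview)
Your proposal is correct and follows essentially the same route as the paper's proof: the per-step projection inequality, the Young/Cauchy--Schwarz split of the momentum cross term, folding the $m_{t-1}$ piece back via $\hat V_t\succeq\hat V_{t-1}$ and $\alpha_t\le\alpha_{t-1}$, telescoping with $\Gamma_t\succeq 0$, and the key lemma bounding $\sum_t\alpha_t\|\hat V_t^{-1/4}m_t\|^2$ via unrolling $m_t$, Cauchy--Schwarz, the $\hat v_{t,i}\ge(1-\beta_2)\beta_2^{t-j}g_{j,i}^2$ lower bound, the geometric series in $\gamma$, and the harmonic-sum Cauchy--Schwarz. The only detail you gloss over is that the telescoping coefficients carry $\tfrac{1}{1-\beta_{1t}}$ rather than $\tfrac{1}{1-\beta_1}$, and since $\beta_{1t}$ varies these do not telescope directly; the paper handles this by adding and subtracting $\tfrac{\|\hat V_t^{1/4}(x_t-x^\ast)\|^2}{\alpha_t(1-\beta_{1(t-1)})}$ and bounding the residual by $\tfrac{\beta_{1t}}{\alpha_t(1-\beta_1)^2}\|\hat V_t^{1/4}(x_t-x^\ast)\|^2$, which is exactly what produces the extra $(1-\beta_1)^{-1}$ in the second term --- but you already flagged the constant-tracking as the delicate point.
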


\noindent The following result falls as an immediate corollary of the above result.

\begin{corollary} Suppose $\beta_{1t} = \beta_1 \lambda^{t-1}$ in Theorem~\ref{thm:amsgrad-proof}, then we have 
$$
R_T \leq \frac{D_\infty^2 \sqrt{T}}{\alpha(1 - \beta_{1})} \sum_{i=1}^d \hat{v}_{T,i}^{1/2}  + \frac{\beta_1 D_\infty^2 G_{\infty}}{(1 - \beta_{1})^2(1 - \lambda)^2} +  \frac{\alpha \sqrt{1 + \log T}}{(1 - \beta_1)^2(1 - \gamma)\sqrt{(1 - \beta_2)}} \sum_{i=1}^d \|g_{1:T, i}\|_2.
$$ 
\label{cor:t1-cor}
\end{corollary}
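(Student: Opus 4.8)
The plan is to start directly from the regret bound established in Theorem~\ref{thm:amsgrad-proof} and substitute the specific schedule $\beta_{1t} = \beta_1 \lambda^{t-1}$ together with $\alpha_t = \alpha/\sqrt{t}$. The first and third terms of that bound do not depend on the choice of $\beta_{1t}$, so they are carried over verbatim; all the work lies in controlling the middle term $\frac{D_\infty^2}{(1-\beta_1)^2}\sum_{t=1}^T\sum_{i=1}^d \frac{\beta_{1t}\hat v_{t,i}^{1/2}}{\alpha_t}$ and showing it is dominated by the constant $\frac{\beta_1 D_\infty^2 G_\infty}{(1-\beta_1)^2(1-\lambda)^2}$.

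First I would establish a uniform bound on $\hat v_{t,i}^{1/2}$. Since $v_{t,i} = (1-\beta_2)\sum_{j=1}^t \beta_2^{t-j} g_{j,i}^2$ is a sub-convex combination of the quantities $g_{j,i}^2 \le G_\infty^2$, we get $v_{t,i} \le G_\infty^2$, and because $\hat v_{t,i} = \max_{s \le t} v_{s,i}$ is merely the running maximum of the $v_{s,i}$, we also have $\hat v_{t,i} \le G_\infty^2$; hence $\hat v_{t,i}^{1/2} \le G_\infty$ for all $t \in [T]$ and $i \in [d]$.

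Next, plugging $\beta_{1t} = \beta_1\lambda^{t-1}$ and $1/\alpha_t = \sqrt{t}/\alpha$ into the middle term and applying the bound just derived, it is at most (up to the benign coordinate sum, handled exactly as in the statement of Theorem~\ref{thm:amsgrad-proof}) a constant multiple of $\beta_1 G_\infty \sum_{t=1}^T \sqrt{t}\,\lambda^{t-1}$. Using $\sqrt{t} \le t$ and the standard power-series identity $\sum_{t=1}^\infty t\lambda^{t-1} = 1/(1-\lambda)^2$, valid since $\lambda \in (0,1)$, this is bounded by $\frac{\beta_1 D_\infty^2 G_\infty}{(1-\beta_1)^2(1-\lambda)^2}$, which is precisely the second term in the claimed bound. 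Substituting back into Theorem~\ref{thm:amsgrad-proof} gives the corollary.

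I do not anticipate a genuine obstacle: the statement is a direct specialization of the regret bound, resting only on the uniform estimate $\hat v_{t,i}^{1/2}\le G_\infty$ and the summability of $\sum_t t\lambda^{t-1}$. The single point requiring a moment of care is passing from the finite sum $\sum_{t=1}^T t\lambda^{t-1}$ to the infinite sum $\sum_{t=1}^\infty t\lambda^{t-1}$, which is legitimate because every term is nonnegative and only inflates the bound, yielding a $T$-independent constant as desired.
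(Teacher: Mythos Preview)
Your proposal is correct and matches the paper's approach: the paper simply declares the result an immediate corollary of Theorem~\ref{thm:amsgrad-proof} with no further argument, and your derivation---bound $\hat v_{t,i}^{1/2}\le G_\infty$, substitute $\beta_{1t}=\beta_1\lambda^{t-1}$ and $1/\alpha_t=\sqrt t/\alpha$, then control $\sum_t\sqrt t\,\lambda^{t-1}\le\sum_{t\ge1} t\lambda^{t-1}=1/(1-\lambda)^2$---is exactly the natural one. (Note that a careful execution of your bound produces an extra factor $d/\alpha$ in the middle term compared with the corollary as stated; this is a typographical omission in the paper's statement, not a gap in your reasoning.)
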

\noindent The above bound can be considerably better than $O(\sqrt{dT})$ regret of $\sgd$ when $ \sum_{i=1}^d \hat{v}_{T,i}^{1/2} \ll \sqrt{d}$ and  $ \sum_{i=1}^d \|g_{1:T, i}\|_2 \ll \sqrt{d T}$ \citep{Duchi11}.  Furthermore, in Theorem~\ref{thm:amsgrad-proof}, one can use a much more modest momentum decay of $\beta_{1t}= \beta_1/t$ and still ensure a regret of $O(\sqrt{T})$. We would also like to point out that one could consider taking a simple average of all the previous values of $v_{t}$ instead of their maximum. The resulting algorithm is very similar to $\adagrad$ except for normalization with smoothed gradients rather than actual gradients and can be shown to have similar convergence as $\adagrad$. 
 
\section{Experiments}

In this section, we present empirical results on both synthetic and real-world datasets. For our experiments, we study the problem of multiclass classification using logistic regression and neural networks, representing convex and nonconvex settings, respectively.

\begin{figure*}[!t]
\centering
   \begin{minipage}[b]{.30\textwidth}
   \includegraphics[width=\textwidth]{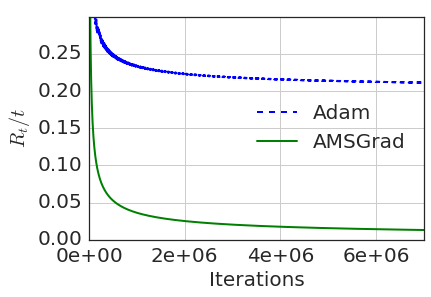}
   \end{minipage} %
   \begin{minipage}[b]{.30\textwidth}
   \includegraphics[width=\textwidth]{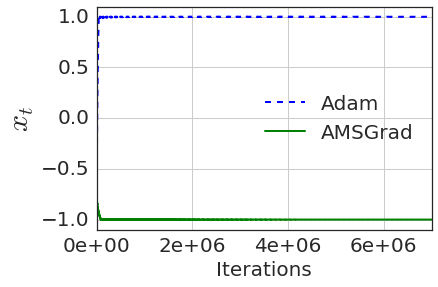}
   \end{minipage}
   \begin{minipage}[b]{.30\textwidth}
   \includegraphics[width=\textwidth]{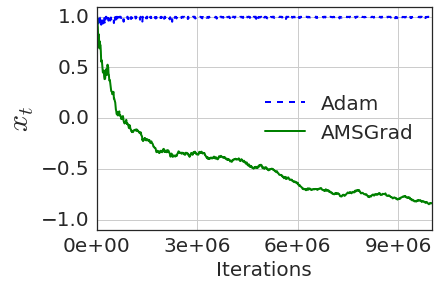}
   \end{minipage}
	\caption{\small Performance comparison of $\adam$ and $\amsgrad$ on synthetic example on a simple one dimensional convex problem inspired by our examples of non-convergence. The first two plots (left and center) are for the online setting and the the last one (right) is for the stochastic setting.}
	\label{fig:results-1}
	\vspace{-5pt}
\end{figure*}

\noindent {\bf Synthetic Experiments}: To demonstrate the convergence issue of $\adam$, we first consider the following simple convex setting inspired from our examples of non-convergence:
\[
f_t(x)= 
\begin{cases}
1010x, & \text{for } t \bmod 101 = 1 \\
-10x, & \text{otherwise},
\end{cases}
\]

 \begin{figure*}[!t]
\centering
   \begin{minipage}[b]{.30\textwidth}
   \includegraphics[width=\textwidth]{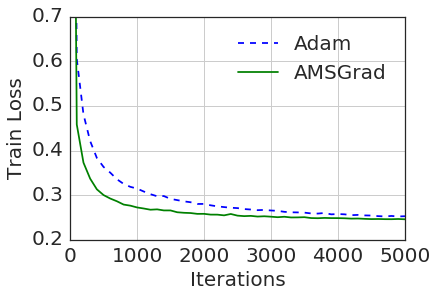}
   \end{minipage} %
   \begin{minipage}[b]{.30\textwidth}
   \includegraphics[width=\textwidth]{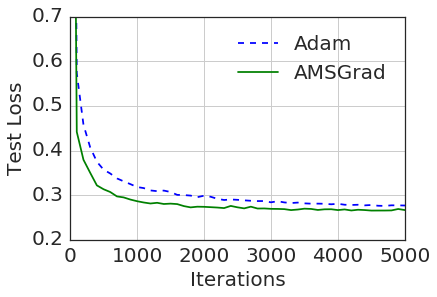}
   \end{minipage}
   \begin{minipage}[b]{.30\textwidth}
   \includegraphics[width=\textwidth]{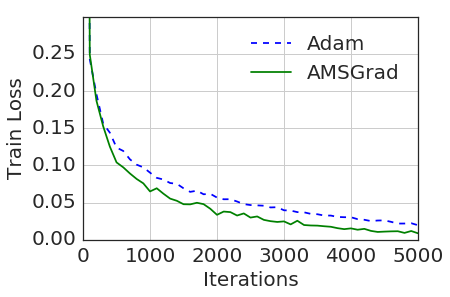}
   \end{minipage}
   \begin{minipage}[b]{.28\textwidth}
   \includegraphics[width=\textwidth]{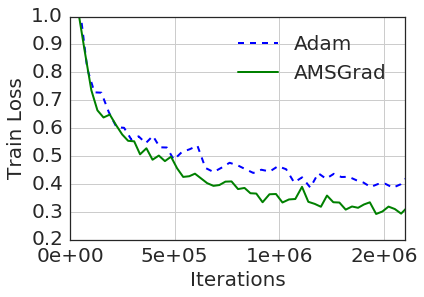}
   \end{minipage} %
   \begin{minipage}[b]{.28\textwidth}
   \includegraphics[width=\textwidth]{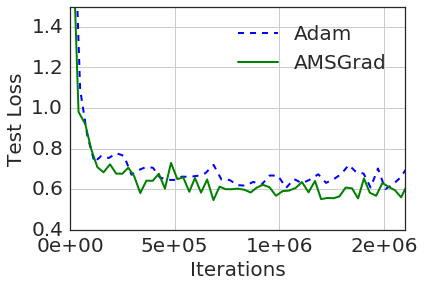}
   \end{minipage}
	\caption{\small Performance comparison of $\adam$ and $\amsgrad$ for logistic regression, feedforward neural network and $\cifarnet$. The top row shows performance of $\adam$ and $\amsgrad$ on logistic regression (left and center) and 1-hidden layer feedforward neural network (right) on MNIST. In the bottom row, the two plots compare the training and test loss of $\adam$ and $\amsgrad$ with respect to the iterations for $\cifarnet$.}
	\vspace{-5pt}
	\label{fig:results-2}
\end{figure*}
with the constraint set $\mathcal{F} = [-1,1]$. We first observe that, similar to the examples of non-convergence we have considered, the optimal solution is $x = -1$; thus, for convergence, we expect the algorithms to converge to $x = -1$.   For this sequence of functions, we investigate the regret and the value of the iterate $x_t$ for $\adam$ and $\amsgrad$. To enable fair comparison, we set $\beta_1 = 0.9$ and $\beta_2 = 0.99$ for $\adam$ and $\amsgrad$ algorithm, which are typically the parameters settings used for $\adam$ in practice. Figure~\ref{fig:results-1} shows the average regret ($R_t/t$) and value of the iterate ($x_t$) for this problem. We first note that the average regret of $\adam$ does not converge to $0$ with increasing $t$. Furthermore, its iterates $x_t$ converge to $x = 1$, which unfortunately has the largest regret amongst all points in the domain. On the other hand, the average regret of $\amsgrad$ converges to $0$ and its iterate converges to the optimal solution. Figure~\ref{fig:results-1} also shows the stochastic optimization setting:
\[
f_t(x)= 
\begin{cases}
1010x, & \text{with probability }  0.01 \\
-10x, & \text{otherwise}.
\end{cases}
\]
Similar to the aforementioned online setting, the optimal solution for this problem is $x = -1$. Again, we see that the iterate $x_t$ of $\adam$ converges to the highly suboptimal solution $x = 1$.

\noindent {\bf Logistic Regression}: To investigate the performance of the algorithm on convex problems, we compare $\amsgrad$ with $\adam$ on logistic regression problem. We use  MNIST dataset for this experiment, the classification is based on 784 dimensional image vector to one of the 10 class labels. The step size parameter $\alpha_t$ is set to  $\alpha/\sqrt{t}$ for both $\adam$ and $\amsgrad$ in for our experiments, consistent with the theory. We use a minibatch version of these algorithms with minibatch size set to $128$.  We set $\beta_1 = 0.9$ and $\beta_2$ is chosen from the set $\{0.99,0.999\}$, but they are fixed throughout the experiment. The parameters $\alpha$ and $\beta_2$ are chosen by grid search. We report the train and test loss with respect to iterations in Figure~\ref{fig:results-2}. We can see that $\amsgrad$ performs better than $\adam$ with respect to both train and test loss. We also observed that $\amsgrad$ is relatively more robust to parameter changes in comparison to $\adam$.
 
 \noindent {\bf Neural Networks}: For our first experiment, we trained a simple 1-hidden fully connected layer neural network for the multiclass classification problem on MNIST. Similar to the previous experiment, we use $\beta_1 = 0.9$ and $\beta_2$ is chosen from $\{0.99,0.999\}$. We use a fully connected 100 rectified linear units (ReLU) as the hidden layer for this experiment. Furthermore, we use constant $\alpha_t = \alpha$ throughout all our experiments on neural networks. Such a parameter setting choice of $\adam$ is consistent with the ones typically used in the deep learning community for training neural networks. A grid search is used to determine parameters that provides the best performance for the algorithm. 
 
 Finally, we consider the multiclass classification problem on the standard CIFAR-10 dataset, which consists of 60,000 labeled examples of $32 \times 32$ images. We use $\cifarnet$, a convolutional neural network (CNN) with several layers of convolution, pooling and non-linear units, for training a multiclass classifer for this problem. In particular, this architecture has 2 convolutional layers with 64 channels and kernel size of $6 \times 6$ followed by 2 fully connected layers of size 384 and 192. The network uses $2 \times 2$ max pooling and layer response normalization between the convolutional layers \citep{Krizhevsky12}. A dropout layer with keep probability of 0.5 is applied in between the fully connected layers \citep{Srivastava14a}.  The minibatch size is also set to 128 similar to previous experiments. The results for this problem are reported in Figure~\ref{fig:results-2}. The parameters for $\adam$ and $\amsgrad$ are selected in a way similar to the previous experiments. We can see that $\amsgrad$ performs considerably better than $\adam$ on train loss and accuracy. Furthermore, this performance gain also translates into good performance on test loss.
 
 \subsection{Extension: $\adamnc$ algorithm}

An alternative approach is to use an increasing schedule of $\beta_2$ in $\adam$. This approach, unlike Algorithm~\ref{alg:amsgrad} does not require changing the structure of $\adam$ but rather uses a non-constant $\beta_1$ and $\beta_2$. The pseudocode for the algorithm, $\adamnc$, is provided in the appendix (Algorithm~\ref{alg:adamnc}). We show that by appropriate selection of $\beta_{1t}$ and $\beta_{2t}$, we can achieve good convergence rates.

\begin{theorem}
	Let $\{x_t\}$ and $\{v_t\}$ be the sequences obtained from Algorithm~\ref{alg:adamnc}, $\alpha_t = \alpha/\sqrt{t}$, $\beta_1 = \beta_{11}$ and $\beta_{1t} \leq \beta_1$ for all $t \in [T]$. Assume that $\mathcal{F}$ has bounded diameter $D_{\infty}$ and $\|\nabla f_t(x)\|_{\infty} \leq G_{\infty}$ for all $t \in [T]$ and $x \in \mathcal{F}$. Furthermore, let $\{\beta_{2t}\}$ be such that the following conditions are satisfied:
	\begin{enumerate}
	\item $\frac{1}{\alpha_T} \sqrt{\sum_{j=1}^t \Pi_{k=1}^{t-j} \beta_{2(t-k+1)}(1 - \beta_{2j}) g_{j,i}^2} \geq   \frac{1}{\zeta} \sqrt{\sum_{j=1}^t g_{j,i}^2}$ for some $\zeta > 0$ and all $t \in [T]$, $j \in [d]$.
	\item $\frac{v_{t, i}^{1/2}}{\alpha_{t}} \geq \frac{v_{t-1, i}^{1/2}}{\alpha_{t-1}}$ for all $t \in \{2, \cdots, T\}$ and $i \in [d]$.
	\end{enumerate}
	Then for $x_t$ generated using the $\adamnc$ (Algorithm~\ref{alg:adamnc}), we have the following  bound on the regret
	$$
	R_T \leq  \frac{D_\infty^2}{2\alpha(1 - \beta_{1})} \sum_{i=1}^d \sqrt{T} v_{T,i}^{1/2}  + \frac{D_\infty^2}{(1 - \beta_{1})^2} \sum_{t=1}^T \sum_{i=1}^d \frac{\beta_{1t}v_{t,i}^{1/2}}{\alpha_t} + \frac{2\zeta}{(1 - \beta_1)^3} \sum_{i=1}^d \|g_{1:T, i}\|_2.
	$$ 
	\label{thm:adamnc-proof}
\end{theorem}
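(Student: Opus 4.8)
The plan is to follow the standard regret analysis for projected online methods in the style of \citet{Zinkevich03}, adapted to the adaptive preconditioner $\sqrt{V_t}$ and the momentum iterate $m_t$, i.e.\ essentially the argument of \citet{Kingma14} with the flawed step repaired. Fix a comparator $x^* \in \arg\min_{x\in\mathcal{F}} \sum_{t=1}^T f_t(x)$ and work coordinate-wise with $V_t = \diag(v_t)$. Starting from $x_{t+1} = \Pi_{\mathcal{F},\sqrt{V_t}}(x_t - \alpha_t m_t/\sqrt{v_t})$ and using the non-expansiveness of the projection in the $\|V_t^{1/4}\,\cdot\,\|$ norm, expand the square $\|V_t^{1/4}(x_{t+1}-x^*)\|^2 \le \|V_t^{1/4}(x_t - \alpha_t m_t/\sqrt{v_t} - x^*)\|^2$ and rearrange to isolate the inner product, obtaining
\[
\ip{m_t}{x_t - x^*} \;\le\; \frac{1}{2\alpha_t}\Big(\|V_t^{1/4}(x_t-x^*)\|^2 - \|V_t^{1/4}(x_{t+1}-x^*)\|^2\Big) \;+\; \frac{\alpha_t}{2}\sum_{i=1}^d \frac{m_{t,i}^2}{\sqrt{v_{t,i}}}.
\]
Next, use the momentum recursion $m_t = \beta_{1t}m_{t-1} + (1-\beta_{1t})g_t$ to write $\ip{g_t}{x_t-x^*} = \tfrac{1}{1-\beta_{1t}}\ip{m_t}{x_t-x^*} - \tfrac{\beta_{1t}}{1-\beta_{1t}}\ip{m_{t-1}}{x_t-x^*}$, bound the second term by Young's inequality (splitting $\ip{m_{t-1}}{x_t-x^*}$ into an $\alpha_{t-1}\sum_i m_{t-1,i}^2/\sqrt{v_{t-1,i}}$ part and a $\|V_{t-1}^{1/4}(x_t-x^*)\|^2$ part), and invoke convexity $f_t(x_t) - f_t(x^*) \le \ip{g_t}{x_t-x^*}$. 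Summing over $t = 1,\dots,T$ then produces $R_T$ on the left and three groups of terms on the right.

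The three summands of the claimed bound come out as follows. First, the telescoping of the distance terms $\frac{1}{2\alpha_t}\|V_t^{1/4}(x_t-x^*)\|^2$: here Condition~2, $v_{t,i}^{1/2}/\alpha_t \ge v_{t-1,i}^{1/2}/\alpha_{t-1}$, is exactly the property $\Gamma_t \succeq 0$ that was wrongly assumed for $\adam$; with it the sum collapses to a single end term bounded, using $\|x_t - x^*\|_\infty \le D_\infty$ and $\alpha_T = \alpha/\sqrt{T}$, by $\frac{D_\infty^2}{2\alpha(1-\beta_1)}\sum_i \sqrt{T}\,v_{T,i}^{1/2}$ (the $(1-\beta_1)$ coming from the momentum rescaling $\tfrac{1}{1-\beta_{1t}} \le \tfrac{1}{1-\beta_1}$). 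Second, the residual terms carrying the factor $\beta_{1t}$ together with $\|x_t - x^*\|$-type quantities: bounding these with the bounded diameter gives the second summand $\frac{D_\infty^2}{(1-\beta_1)^2}\sum_t\sum_i \frac{\beta_{1t}v_{t,i}^{1/2}}{\alpha_t}$. Third, the "gradient" contributions $\sum_t \frac{\alpha_t}{2}\sum_i m_{t,i}^2/\sqrt{v_{t,i}}$ (arising both from the $m_t$ and the $m_{t-1}$ pieces).

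The last group is the crux and needs an auxiliary lemma: for each coordinate $i$, $\sum_{t=1}^T \alpha_t m_{t,i}^2/\sqrt{v_{t,i}} \lesssim \tfrac{\zeta}{1-\beta_1}\|g_{1:T,i}\|_2$ up to constants. To prove it one expands $m_{t,i} = (1-\beta_{1t})\sum_{j\le t}\big(\prod_{k=j+1}^t \beta_{1k}\big)g_{j,i}$, applies Cauchy–Schwarz across the convex weights to get $m_{t,i}^2 \le \tfrac{1}{1-\beta_1}\sum_{j\le t}\beta_1^{t-j}g_{j,i}^2$, exchanges the order of summation over $t$ and $j$, and resums the geometric tail $\sum_{t\ge j}\beta_1^{t-j}$ to another factor $\tfrac{1}{1-\beta_1}$ — this is where the power $(1-\beta_1)^3$ in the final term originates (one from this Cauchy–Schwarz, one from the geometric resummation, one from the $\tfrac{1}{1-\beta_{1t}}$ rescaling). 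Condition~1 is then used precisely to lower-bound the accumulated preconditioner in an $\adagrad$-like way, converting $v$-dependent denominators into $1/\sqrt{\sum_{j\le t} g_{j,i}^2}$, after which the classical telescoping $\sum_t g_{t,i}^2/\sqrt{\sum_{j\le t}g_{j,i}^2} \le 2\sqrt{\sum_t g_{t,i}^2} = 2\|g_{1:T,i}\|_2$ yields the $2\zeta$ constant. I expect this lemma — the double-sum exchange, the correct invocation of Condition~1 to introduce $\zeta$, and careful bookkeeping of the $(1-\beta_1)$ powers and the requirement $\beta_{1t}\le\beta_1<1$ — to be the main obstacle; the rest is the routine telescoping/convexity machinery.
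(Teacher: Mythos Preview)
Your proposal is correct and follows essentially the same route as the paper: the per-step inequality via projection non-expansiveness and the momentum recursion, Condition~2 to make the telescoping of the distance terms go through, and an auxiliary lemma bounding $\sum_t \alpha_t\, m_{t,i}^2/\sqrt{v_{t,i}}$ via Cauchy--Schwarz on the expanded $m_{t,i}$, geometric resummation in $\beta_1$, Condition~1 to pass to $1/\sqrt{\sum_{j\le t} g_{j,i}^2}$, and the classical $\sum_t y_t/\sqrt{\sum_{j\le t} y_j} \le 2\sqrt{\sum_t y_t}$ inequality. Your identification of this lemma as the crux and your accounting of the three $(1-\beta_1)$ factors match the paper's argument.
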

\noindent  The above result assumes selection of $\{(\alpha_t, \beta_{2t})\}$ such that $\Gamma_t \succeq 0$ for all $t \in \{2, \cdots, T\}$. However, one can generalize the result to deal with the case where this constraint is violated as long as the violation is not too large or frequent. Following is an immediate consequence of the above result.
 
 \begin{corollary} Suppose $\beta_{1t} = \beta_1 \lambda^{t-1}$ and $\beta_{2t} = 1 - 1/t$ in Theorem~\ref{thm:adamnc-proof}, then we have 
 	$$
 	\frac{D_\infty^2}{2\alpha(1 - \beta_{1})} \sum_{i=1}^d  \|g_{1:T,i}\|_2  + \frac{\beta_1 D_\infty^2 G_{\infty}}{(1 - \beta_{1})^2(1 - \lambda)^2} + \frac{2\zeta}{(1 - \beta_1)^3} \sum_{i=1}^d \|g_{1:T, i}\|_2.
 	$$
 \end{corollary}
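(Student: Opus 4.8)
The plan is to obtain this corollary as a direct specialization of Theorem~\ref{thm:adamnc-proof}: substitute $\beta_{1t} = \beta_1 \lambda^{t-1}$ and $\beta_{2t} = 1 - 1/t$, verify that hypotheses (1) and (2) of the theorem hold (which also pins down the constant $\zeta$), and then simplify each of the three terms of the regret bound into the claimed upper bound on $R_T$.

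First I would identify what the accumulator $v_{t,i}$ becomes under the schedule $\beta_{2t} = 1 - 1/t$. The coefficient of $g_{j,i}^2$ appearing in hypothesis (1), namely $\prod_{k=1}^{t-j}\beta_{2(t-k+1)}\,(1 - \beta_{2j})$, telescopes: since $\beta_{2(t-k+1)} = \frac{t-k}{t-k+1}$, the product $\prod_{k=1}^{t-j}\beta_{2(t-k+1)}$ collapses to $\frac{j}{t}$, and multiplying by $1 - \beta_{2j} = \frac{1}{j}$ gives exactly $\frac{1}{t}$. Hence $v_{t,i} = \frac{1}{t}\sum_{j=1}^t g_{j,i}^2$, i.e. with this choice $\adamnc$ reduces to an $\adagrad$-style running average of squared gradients. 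Combined with $\alpha_t = \alpha/\sqrt{t}$, this yields $\frac{v_{t,i}^{1/2}}{\alpha_t} = \frac{1}{\alpha}\sqrt{\sum_{j=1}^t g_{j,i}^2} = \frac{1}{\alpha}\|g_{1:t,i}\|_2$, which is nondecreasing in $t$, so hypothesis (2) holds; and since $T \ge t$ we get $\frac{1}{\alpha_T}\sqrt{v_{t,i}} = \frac{1}{\alpha}\sqrt{T/t}\,\|g_{1:t,i}\|_2 \ge \frac{1}{\alpha}\|g_{1:t,i}\|_2$, so hypothesis (1) holds with $\zeta = \alpha$.

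Next I would plug these facts into the three-term bound of Theorem~\ref{thm:adamnc-proof}. The first term simplifies via $\sqrt{T}\,v_{T,i}^{1/2} = \sqrt{T}\cdot\frac{1}{\sqrt{T}}\|g_{1:T,i}\|_2 = \|g_{1:T,i}\|_2$, giving exactly the first term of the corollary. The third term follows by substituting $\zeta = \alpha$ (keeping it written as $\zeta$ in the statement). For the middle term $\sum_{t=1}^T\sum_{i=1}^d \frac{\beta_{1t}v_{t,i}^{1/2}}{\alpha_t}$, I would bound $v_{t,i}^{1/2} \le G_\infty$ (each $|g_{j,i}| \le G_\infty$ and $v_{t,i}$ is a convex average of the $g_{j,i}^2$), so that $\frac{\beta_{1t}v_{t,i}^{1/2}}{\alpha_t} \le \frac{G_\infty}{\alpha}\beta_1\lambda^{t-1}\sqrt{t}$, and then use $\sum_{t=1}^\infty \sqrt{t}\,\lambda^{t-1} \le \sum_{t=1}^\infty t\,\lambda^{t-1} = \frac{1}{(1-\lambda)^2}$ together with the prefactor $\frac{D_\infty^2}{(1-\beta_1)^2}$ to arrive at the $\frac{\beta_1 D_\infty^2 G_\infty}{(1-\beta_1)^2(1-\lambda)^2}$ term (absorbing the $d/\alpha$-type constant, as in the analogous step used to derive Corollary~\ref{cor:t1-cor}).

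Since every step is an algebraic simplification of an already-established bound, I do not expect a real obstacle; the one place needing care is the telescoping identity for $\prod_{k=1}^{t-j}\beta_{2(t-k+1)}$ and the associated reindexing, in particular checking the endpoints $j = 1$ and $j = t$ against the empty-product convention so that the $j = t$ summand correctly contributes $(1 - \beta_{2t})g_{t,i}^2 = \frac{1}{t}g_{t,i}^2$. Note that hypothesis (1) yields $\zeta = \alpha$ whether it is read with $\alpha_t$ or $\alpha_T$, so there is no ambiguity there.
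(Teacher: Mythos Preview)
Your approach is correct and matches the paper's: the paper's entire justification for this corollary is the single observation that $v_{t,i} = \tfrac{1}{t}\sum_{j=1}^t g_{j,i}^2$ when $\beta_{2t} = 1 - 1/t$, which is exactly the telescoping computation you carry out, and your treatment of the middle term (bounding $v_{t,i}^{1/2}\le G_\infty$ and summing $\sqrt{t}\lambda^{t-1}\le t\lambda^{t-1}$) is the same maneuver implicitly used for Corollary~\ref{cor:t1-cor}. Your remark about the stray $d/\alpha$ factor is apt: the stated constant in both corollaries omits it, so this is a cosmetic discrepancy in the paper rather than a flaw in your argument.
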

 The above corollary follows from a trivial fact that $v_{t,i} = \sum_{j=1}^t g_{j,i}^2/t $ for all $i \in [d]$ when $\beta_{2t} = 1 - 1/t$. This corollary is interesting insofar that such a parameter setting effectively yields a momentum based variant of $\adagrad$. Similar to $\adagrad$, the regret is data-dependent and can be considerably better than $O(\sqrt{dT})$ regret of $\sgd$ when $\sum_{i=1}^d \|g_{1:T, i}\|_2 \ll \sqrt{d T}$ \citep{Duchi11}. It is easy to generalize this result for setting similar settings of $\beta_{2t}$. Similar to Corollary~\ref{cor:t1-cor},  one can use a more modest decay of $\beta_{1t}= \beta_1/t$ and still ensure a data-dependent regret of $O(\sqrt{T})$.
 
 \section{Discussion}
 
 In this paper, we study exponential moving variants of $\adagrad$ and identify an important flaw in these algorithms which can lead to undesirable convergence behavior. We demonstrate these problems through carefully constructed examples where $\rmsprop$ and $\adam$ converge to highly suboptimal solutions. In general, any algorithm that relies on an essentially fixed sized window of past gradients to scale the gradient updates will suffer from this problem. 

 We proposed fixes to this problem by slightly modifying the algorithms, essentially endowing the algorithms with a long-term memory of past gradients. These fixes retain the good practical performance of the original algorithms, and in some cases actually show improvements. 

 
 The primary goal of this paper is to highlight the problems with popular exponential moving average  variants of $\adagrad$ from a theoretical perspective. $\rmsprop$ and $\adam$ have been immensely successful in development of several state-of-the-art solutions for a wide range of problems. Thus, it is important to understand their behavior in a rigorous manner and be aware of potential pitfalls while using them in practice. We believe this paper is a first step in this direction and suggests good design principles for faster and better stochastic optimization.
 
\bibliographystyle{iclr2018_conference}
\setlength{\bibsep}{3pt}
\bibliography{bibfile}
\newpage

\appendix

\section*{Appendix}

\section{Proof of Theorem~\ref{thm:counter-example}}
 \begin{proof}
 We consider the setting where $f_t$ are linear functions and  $\mathcal{F} = [-1,1]$. In particular, we define the following function sequence:
\[
f_t(x)= 
\begin{cases}
Cx, & \text{for } t \bmod 3 = 1 \\
-x, & \text{otherwise},
\end{cases}
\]
where $C \geq 2$. For this function sequence, it is easy to see that the point $x = -1$ provides the minimum regret. Without loss of generality, assume that the initial point is $x_1 = 1$. This can be assumed without any loss of generality because for any choice of initial point, we can always translate the coordinate system such that the initial point is $x_1 = 1$ in the new coordinate system and then choose the sequence of functions as above in the new coordinate system.  Also, since the problem is one-dimensional, we drop indices representing coordinates from all quantities in Algorithm~\ref{alg:ada-gen}. Consider the execution of $\adam$ algorithm for this sequence of functions with 
$$ 
\beta_1 = 0,   \beta_2 = \frac{1}{1 + C^2} \text{ and } \alpha_t = \frac{\alpha}{\sqrt{t}}
$$
where $\alpha < \sqrt{1 - \beta_2}$. Note that since gradients of these functions are bounded, $\mathcal{F}$ has bounded $L_\infty$ diameter and $\beta_1^2/\sqrt{\beta_2} < 1$.  Hence, the conditions on the parameters required for $\adam$ are satisfied (refer to \citep{Kingma14} for more details).

Our main claim is that for iterates $\{x_t\}_{t=1}^{\infty}$ arising from the updates of $\adam$, we have  $x_t > 0$ for all $t\in \mathbb{N}$ and furthermore, $x_{3t+1} = 1$ for all $t \in \mathbb{N} \cup \{0\}$. For proving this, we resort to the principle of mathematical induction. Since $x_1 = 1$, both the aforementioned conditions hold for the base case. Suppose for some $t \in \mathbb{N} \cup \{0\}$, we have $x_i > 0$ for all $i \in [3t+1]$ and $x_{3t+1} = 1$. Our aim is to prove that $x_{3t+2}$ and $x_{3t + 3}$ are positive and $x_{3t+4} = 1$. We first observe that the gradients have the following form:
\[
\nabla f_i(x)= 
\begin{cases}
C, & \text{for }  i \bmod 3 =  1 \\
-1, & \text{otherwise }
\end{cases}
\]
From $(3t+1)^{\text{th}}$ update of $\adam$ in Equation~\eqref{eq:adam-recur}, we obtain
$$
\hat{x}_{3t+2} = x_{3t+1} - \frac{\alpha C}{\sqrt{(3t + 1)(\beta_2 v_{3t} + (1 - \beta_2) C^2)}} = 1 - \frac{\alpha C}{\sqrt{(3t+1)(\beta_2 v_{3t} + (1 - \beta_2) C^2)}}.
$$
The equality follows from the induction hypothesis. We observe the following:
\begin{align}
\frac{\alpha C}{\sqrt{(3t + 1)(\beta_2 v_{3t} + (1 - \beta_2) C^2)}} & \leq \frac{\alpha C}{\sqrt{(3t + 1)(1 - \beta_2) C^2}} \nonumber \\
& = \frac{\alpha}{\sqrt{(3t + 1)(1 - \beta_2))}} < 1 .
\label{eq:t1-bound}
\end{align}
The second inequality follows from the step size choice that $\alpha < \sqrt{1 - \beta_2}$. Therefore, we have $0 < \hat{x}_{3t + 2} < 1$ and hence $x_{3t+2} = \hat{x}_{3t+2} > 0$. Furthermore, after the  $(3t+2)^{\text{th}}$ and $(3t+3)^{\text{th}}$ updates of $\adam$ in Equation~\eqref{eq:adam-recur}, we have the following:
\begin{align*}
\hat{x}_{3t+3} = x_{3t+2} + \frac{\alpha}{\sqrt{(3t+2)(\beta_2 v_{3t+1} + (1 - \beta_2))}}, \\
\hat{x}_{3t+4} = x_{3t+3} + \frac{\alpha}{\sqrt{(3t+3)(\beta_2 v_{3t+2} + (1 - \beta_2))}}.
\end{align*}
Since $x_{3t+2} > 0$, it is easy to see that $x_{3t+3} > 0$. To complete the proof, we need to show that $x_{3t + 4} = 1$. In order to prove this claim, we show that $\hat{x}_{3t+4} \geq 1$, which readily translates to $x_{3t + 4} = 1$  because $x_{3t+4} = \Pi_{\mathcal{F}}(\hat{x}_{3t+4})$ and $\mathcal{F} = [-1,1]$  here $\Pi_{\mathcal{F}}$ is the simple Euclidean projection (note that in one-dimension, $\Pi_{\mathcal{F}, \sqrt{V_t}} = \Pi_{\mathcal{F}}$). We observe the following:
\begin{align*}
\hat{x}_{3t+4} = \min(\hat{x}_{3t+3} , 1) + \frac{\alpha}{\sqrt{(3t+3)(\beta_2 v_{3t+2} + (1 - \beta_2))}}.
\end{align*}
The above equality is due to the fact that $\hat{x}_{3t+3} > 0$ and property of projection operation onto the set $\mathcal{F} = [-1,1]$. We consider the following two cases:
\begin{enumerate}
	\item Suppose $\hat{x}_{3t+3} \geq 1$, then it is easy to see from the above equality that $\hat{x}_{3t+4} > 1$.
	\item Suppose $\hat{x}_{3t+3} < 1$, then we have the following:
	\begin{align*}
	\hat{x}_{3t+4} &= \hat{x}_{3t+3} + \frac{\alpha}{\sqrt{(3t+3)(\beta_2 v_{3t+2} + (1 - \beta_2))}} \\
	&= x_{3t+2} + \frac{\alpha}{\sqrt{(3t+2)(\beta_2 v_{3t+1} + (1 - \beta_2))}} + \frac{\alpha}{\sqrt{(3t+3)(\beta_2 v_{3t+2} + (1 - \beta_2))}} \\
	&= 1 - \frac{\alpha C}{\sqrt{(3t+1)(\beta_2 v_{3t} + (1 - \beta_2) C^2)}} + \frac{\alpha}{\sqrt{(3t+2)(\beta_2 v_{3t+1} + (1 - \beta_2))}} \\
	& \qquad \qquad \qquad \qquad \qquad \qquad \qquad \qquad \qquad \quad + \frac{\alpha}{\sqrt{(3t+3)(\beta_2 v_{3t+2} + (1 - \beta_2))}}.
	\end{align*}
	The third equality is due to the fact that $x_{3t+2} = \hat{x}_{3t+2}$. Thus, to prove $\hat{x}_{3t+4} > 1$, it is enough to the prove:
	\begin{align*}
	\underbrace{\frac{\alpha C}{\sqrt{(3t+1)(\beta_2 v_{3t} + (1 - \beta_2) C^2)}}}_{T_1} &\leq \frac{\alpha}{\sqrt{(3t+2)(\beta_2 v_{3t+1} + (1 - \beta_2))}} \\
	& \qquad \underbrace{\qquad +  \frac{\alpha}{\sqrt{(3t+3)(\beta_2 v_{3t+2} + (1 - \beta_2))}}}_{T_2}
	\end{align*}
	We have the following bound on term $T_1$ from Equation~\eqref{eq:t1-bound}:
	\begin{align}
    T_1 \leq  \frac{\alpha}{\sqrt{(3t + 1)(1 - \beta_2))}}.
    \label{eq:t1-bound-alpha}
	\end{align}
	Furthermore, we lower bound $T_2$ in the following manner:
	\begin{align}
	T_2 &= \frac{\alpha}{\sqrt{(3t+2)(\beta_2 v_{3t+1} + (1 - \beta_2))}} + \frac{\alpha}{\sqrt{(3t+3)(\beta_2 v_{3t+2} + (1 - \beta_2))}} \nonumber \\
	&\geq \frac{\alpha}{\sqrt{\beta_2 C^2 + (1 - \beta_2)}} \left(\frac{1}{\sqrt{3t+2}} + \frac{1}{\sqrt{3t+3}} \right) \nonumber \\
	&\geq \frac{\alpha}{\sqrt{\beta_2 C^2 + (1 - \beta_2)}} \left(\frac{1}{\sqrt{2(3t+1)}} + \frac{1}{\sqrt{2(3t+1)}}\right) \nonumber \\
	&= \frac{\sqrt{2}\alpha}{\sqrt{(3t+1)(\beta_2 C^2 + (1 - \beta_2)})} = \frac{\alpha}{\sqrt{(3t+1)(1 - \beta_2)}} \geq T_1.
	\label{eq:t2-bound}
	\end{align}
	The first inequality is due to the fact that $v_{t} \leq C^2$ for all $t \in \mathbb{N}$. The last inequality follows from inequality in Equation~\eqref{eq:t1-bound-alpha}. The last equality is due to following fact:
	\begin{align*}
	\sqrt{\frac{\beta_2 C^2 + (1 - \beta_2)}{2}} = \sqrt{1 - \beta_2}
	\end{align*} 
	for the choice of $\beta_2 = 1/(1 + C^2)$. Therefore, we have $T_2 \geq T_1$ and hence, $\hat{x}_{3t + 4} \geq 1$.
\end{enumerate}
Therefore, from both the cases, we see that $x_{3t+4} = 1$. Therefore, by the principle of mathematical induction it holds for all $t \in \mathbb{N} \cup \{0\}$. Thus, we have
$$
f_{3t+1}(x_{3t+1}) + f_{3t+2}(x_{3t+2}) + f_{3t+2}(x_{3t+2}) - f_{3t+1}(-1) - f_{3t+2}(-1) - f_{3t+3}(-1) \geq 2C - 4 = 2C  - 4. 
$$
Therefore, for every 3 steps, $\adam$ suffers a regret of at least $2C-4$. More specifically, $R_T \geq (2C-4)T/3$. Since $C \geq 2$, this regret can be very large and furthermore, $R_T/T \nrightarrow 0$ as $T \rightarrow \infty$, which completes the proof.
 \end{proof}

\section{Proof of Theorem~\ref{thm:counter-example-gen}}                                                                                                                                                                                                                                                                                                                                                                                                                                                                                                                                                                                                                                                                                                                                                                                                                                                                                                                                                                                                                                                                                                                                                                                                                                                                                                                                                                                                                                                                                                                                                                                                                                                                                                                                                                                                                                                                                                                                                                                                                                                                                                                                                                                                                                                                                                                                                                                                                                                                                                                                                                                                                                                                                                                                                                                                                                                                                                                                                                                                                                                                                                                                                                                                                                                                                                                                                                                                                                                                                                                                                                                                                                                                                                                                                                                                                                                                                                                                                                                                                                                                                                                                                                                                                                                                                                                                                                                                                                                                                                                                                                                                                                                                                                                                                                                                                                                                                                                                                                                                                                                                                                                                                                                                                                                                                                                                                                                                                                                                                                                                                                                                                                                                                                                                                                                                                                                                                                                                                                                                                                                                                                                                                                                                                                                                                                                                                                                                                                                                                                                                                                                                                                                                                                                                                                                                                                                                                                                                                                                                        

\begin{proof}
The proof generalizes the optimization setting used in Theorem~\ref{thm:counter-example}. Throughout the proof, we assume $\beta_1 < \sqrt{\beta_2}$, which is also a condition \citep{Kingma14} assume in their paper. In this proof, we consider the setting where $f_t$ are linear functions and  $\mathcal{F} = [-1,1]$. In particular, we define the following function sequence:
\[
f_t(x)= 
\begin{cases}
Cx, & \text{for } t \bmod C = 1 \\
-x, & \text{otherwise},                                                                                                                                                                                                                                                                                                                                                                                                                                                                                                                                                                                                                                                                                                                                                                                                                                                                                                                                                                                                                                                                                                                                                                                                                                                                                                                                                                                                                                                                                                                                                                                                                                                                                                                                                                                                                                                                                                                                                                                                                                                                                                                                                                                                                                                                                                                                                                                                                                                                                                                                                                                                                                                                                                                                                                                                                                                                                                                                                                                                                                                                                                                                                                                                                                                                                                                                                                                                                                                                                                                                                                                                                                                                                                                                                                                                                                                                                                                                                                                                                                                                                                                                                                                                                                                                                                                                                                                                                                                                                                                                                                                                                                                                                                                                                                                                                                                                                                                                                                                                                                                                                                                                                                                                                                                                                                                                                                                                                                                                                                                                                                                                                                                                                                                                                                                                                                                                                                                                                                                                                                                                                                                                                                                                             
\end{cases}
\]
where $C \in \mathbb{N}$, $C \bmod 2 = 0$ satisfies the following:
\begin{align}
(1 - \beta_1)\beta_1^{C-1}C \leq 1 - \beta_1^{C-1}, \nonumber \\
\beta_2^{(C-2)/2}C^2 \leq 1, \nonumber \\
\frac{3(1 - \beta_1)}{2\sqrt{1 - \beta_2}}\left(1 + \frac{\gamma(1 - \gamma^{C-1})}{1 - \gamma}\right) + \frac{\beta_1^{C/2-1}}{1 - \beta_1} <  \frac{C}{3},
\label{eq:p-condition}
\end{align}
where $\gamma = \beta_1/\sqrt{\beta_2} < 1$. It is not hard to see that these conditions hold for large constant $C$ that depends on $\beta_1$ and $\beta_2$. Since the problem is one-dimensional, we drop indices representing coordinates from all quantities in Algorithm~\ref{alg:ada-gen}. For this function sequence, it is easy to see that the point $x = -1$ provides the minimum regret since $C \geq 2$. Furthermore, the gradients have the following form:
\[
\nabla f_i(x)= 
\begin{cases}
C, & \text{for }  t \bmod C = 1 \\
-1, & \text{otherwise }
\end{cases}
\]
Our first observation is that $m_{kC} \leq 0$ for all $k \in \mathbb{N} \cup \{0\}$. For $k = 0$, this holds trivially due to our initialization. For the general case, observe the following:
\begin{align}
m_{kC+C} &= -(1 - \beta_1) - (1 - \beta_1) \beta_1 - \cdots - (1 - \beta_1) \beta_1^{C-2} + (1 - \beta_1) \beta_1^{C-1} C + \beta_1^{C} m_{kC} \\
&= - (1 - \beta_1^{C-1}) + (1 - \beta_1) \beta_1^{C-1}C  + \beta_1^{C} m_{kC}.
\label{eq:m-relation}
\end{align}
If $m_{kC} \leq 0$, it can be easily shown that $m_{kC+C} \leq 0$ for our selection of $C$ in Equation~\eqref{eq:p-condition} by using the principle of mathematical induction. With this observation we continue to the main part of the proof. Let $T'$ be such that $t + C \leq \tau^2 t$ for all $t \geq T'$ where $\tau \leq 3/2$.  All our analysis focuses on iterations $t \geq T'$. Note that any regret before $T'$ is just a constant because $T'$ is independent of $T$ and thus, the average regret is negligible as $T \rightarrow \infty$. Consider an iterate at time step $t$ of the form $kC$ after $T'$. Our claim is that 
\begin{align}
x_{t+C} \geq \min\{x_{t} + c_{t}, 1\}
\label{eq:gen-claim}
\end{align}
for some $c_{t} > 0$. To see this,  consider the updates of $\adam$ for the particular sequence of functions we considered are:
\begin{align*}
x_{t +1} &= \Pi_{\mathcal{F}}\left(x_{t} -  \frac{\alpha}{\sqrt{t}}\frac{(1 - \beta_1) C + \beta_1 m_{t}}{\sqrt{(1 - \beta_2) C^2 + \beta_2 v_{t}}} \right), \\
x_{t + i} &= \Pi_{\mathcal{F}}\left(x_{t + i- 1} - \frac{\alpha}{\sqrt{t + i - 1}}\frac{-(1 - \beta_1) + \beta_1 m_{t+ i-1}}{\sqrt{(1 - \beta_2) + \beta_2 v_{t + i-1}}} \right) \text{ for } i \in \{2, \cdots, C\}.
\end{align*}
For $i \in \{2, \cdots, C\}$, we use the following notation:
\begin{align*}
\delta_{t}  &= -  \frac{\alpha}{\sqrt{t}}\frac{(1 - \beta_1) C + \beta_1 m_{t}}{\sqrt{(1 - \beta_2) C^2 + \beta_2 v_{t}}}, \\
\delta_{t+i} &=  - \frac{\alpha}{\sqrt{t + i}}\frac{-(1 - \beta_1) + \beta_1 m_{t+ i}}{\sqrt{(1 - \beta_2) + \beta_2 v_{t + i}}} \text{ for } i \in \{1, \cdots, C-1\}.
\end{align*}
Note that if $\delta_{t+j} \geq 0$ for some $j \in \{1, \cdots, C-1\}$ then $\delta_{t+l} \geq 0$ for all $l \in \{j, \cdots, C-1\}$. This follows from the fact that the gradient is negative for all time steps $i \in \{2, \cdots, C\}$. Using Lemma~\ref{lem:1d-proj-prop} for $\{x_{t+1}, \cdots, x_{t+C}\} $ and $\{\delta_{t}, \cdots, \delta_{t+C-1}\}$, we have the following:
\begin{align*}
x_{t+C} \geq \min \left\{1, x_{t} + \sum_{i=t}^{t+C-1} \delta_i \right\}.
\end{align*}
 Let $i' = C/2$. In order to prove our claim in Equation~\eqref{eq:gen-claim}, we need to prove the following:
$$
\delta =  \sum_{i=t}^{t+C-1} \delta_i > 0.
$$
To this end, we observe the following:
\begin{align*}
& \sum_{i=t+1}^{t+C-1} \delta_i =  \sum_{i=1}^{C-1}  - \frac{\alpha}{\sqrt{t + i}}\frac{-(1 - \beta_1) + \beta_1 m_{t+ i}}{\sqrt{(1 - \beta_2) + \beta_2 v_{t + i}}} \\
&=  \sum_{i=2}^{C} - \frac{\alpha}{\sqrt{t + i - 1}}\frac{-(1 - \beta_1) +  (1 - \beta_1) \left[\sum_{j=1}^{i - 2} \beta_1^{j} (-1)\right] + (1 - \beta_1) \beta_1^{i-1} C + \beta_1^{i}m_t}{\sqrt{(1 - \beta_2) + \beta_2 v_{t + i-1}}} \\
&\geq  \sum_{i=2}^{C} \frac{\alpha}{\sqrt{t + i - 1}}\frac{(1 - \beta_1) +  (1 - \beta_1) \left[\sum_{j=1}^{i - 2} \beta_1^{j} \right] - (1 - \beta_1) \beta_1^{i-1} C}{\sqrt{(1 - \beta_2) + \beta_2 v_{t + i-1}}} \\
&\geq \sum_{i=2}^{C}\frac{\alpha}{\tau \sqrt{t}} \frac{(1 - \beta_1) + (1 - \beta_1) \left[\sum_{j=1}^{i - 2} \beta_1^{j} \right]}{\sqrt{(1 - \beta_2) + \beta_2 v_{t + i-1}}} -  \sum_{i=2}^{C}\frac{\alpha}{\sqrt{t}} \frac{(1 - \beta_1) \beta_1^{i-1} C}{\sqrt{(1 - \beta_2) + \beta_2 v_{t + i-1}}} \\
&\geq \sum_{i=2}^{C}\frac{\alpha}{\tau \sqrt{t}} \frac{(1 - \beta_1) + (1 - \beta_1) \left[\sum_{j=1}^{i - 2} \beta_1^{j} \right]}{\sqrt{(1 - \beta_2) + \beta_2 v_{t+i-1}}} -  \sum_{i=2}^{C}\frac{\alpha}{\sqrt{t}} \frac{(1 - \beta_1) \beta_1^{i-1} C}{\sqrt{(1 - \beta_2) + \beta_2^{i-1} (1 - \beta_2) C^2}} \\
&\geq \frac{\alpha}{\tau \sqrt{t}} \sum_{i=i'}^{C} \frac{1 - \beta_1^{i-1}}{\sqrt{(1 - \beta_2) + 2\beta_2}} - \frac{\alpha}{\sqrt{t}} \frac{ \gamma( 1 - \beta_1) ( 1- \gamma^{C-1})}{(1 - \gamma)\sqrt{(1 - \beta_2)}} \\
&\geq \frac{\alpha}{\tau \sqrt{t}\sqrt{1+ \beta_2}} \left( C - i' -  \frac{\beta_1^{i'-1}}{1 - \beta_1}\right) - \frac{\alpha}{\sqrt{t}} \frac{ \gamma( 1 - \beta_1) ( 1- \gamma^{C-1})}{(1 - \gamma)\sqrt{(1 - \beta_2)}} \geq 0.
\end{align*}
The first equality follows from the definition of $m_{t+i+1}$. The first inequality follows from the fact that $m_{t} \leq 0$ when $t \bmod C = 0$ (see Equation~\eqref{eq:m-relation} and arguments based on it). The second inequality follows from the definition of $\tau$ that  $t + C \leq \tau^2 t$ for all $t \geq T'$. The third inequality is due to the fact that $v_{t+i-1} \geq (1 - \beta_2) \beta_2^{i-2} C^2$.  The last inequality follows from our choice of $C$. The fourth inequality is due to the following upper bound that applies for all $i' \leq i \leq C$:
\begin{align*}
v_{t+i-1} &= (1-\beta_2)\sum_{j=1}^{t+i-1}\beta_2^{t+i-1-j}g_j^2 \\
&\leq (1 - \beta_2) \left[\sum_{h=1}^k \beta_2^{t+i-1 - hC}C^2 + \sum_{j=1}^{t+i-1} \beta_2^{t+i-1-j} \ \right] \\
&\leq (1 - \beta_2) \left[ \beta_2^{i'-1} C^2 \sum_{h=0}^{k-1}  \beta_2^{hC} + \frac{1}{1 - \beta_2} \right] \\
&\leq (1 - \beta_2) \left[ \frac{\beta_2^{i'-1} C^2}{1 - \beta_2^C} + \frac{1}{1 - \beta_2} \right] \leq 2.
\end{align*}
The first inequality follows from online problem setting for the counter-example i.e., gradient is $C$ once every $C$ iterations and $-1$ for the rest. The last inequality follows from the fact that $\beta_2^{i'-1} C^2 \leq 1$ and $\beta_2^C \leq \beta_2$. Furthermore, from the above inequality, we have
\begin{align*}
 &\sum_{i=t}^{t+C-1} \delta_i \geq  \delta_t + \frac{\alpha}{\tau \sqrt{t}\sqrt{1 + \beta_2}} \left(C - i' -  \frac{\beta_1^{i'-1}}{1 - \beta_1}\right) - \frac{\alpha}{\sqrt{t}} \frac{ \gamma( 1 - \beta_1) ( 1- \gamma^{C-1})}{(1 - \gamma)\sqrt{(1 - \beta_2)}} \\
 &=  -  \frac{\alpha}{\sqrt{t}}\frac{(1 - \beta_1) C + \beta_1 m_{t}}{\sqrt{(1 - \beta_2) C^2 + \beta_2 v_{t}}} + \frac{\alpha}{\tau \sqrt{t}\sqrt{1 + \beta_2}} \left(C - i' -  \frac{\beta_1^{i'-1}}{1 - \beta_1}\right) \\
& \qquad \qquad \qquad \qquad \qquad \qquad \qquad \qquad \qquad \qquad \qquad - \frac{\alpha}{\sqrt{t}} \frac{ \gamma( 1 - \beta_1) ( 1- \gamma^{C-1})}{(1 - \gamma)\sqrt{(1 - \beta_2)}}\\
&\geq -  \frac{\alpha}{\sqrt{t}}\frac{(1 - \beta_1) C}{\sqrt{(1 - \beta_2) C^2}} + \frac{\alpha}{\tau \sqrt{t}\sqrt{1 + \beta_2}} \left(C - i' -  \frac{\beta_1^{i'-1}}{1 - \beta_1}\right)  - \frac{\alpha}{\sqrt{t}} \frac{ \gamma( 1 - \beta_1) ( 1- \gamma^{C-1})}{(1 - \gamma)\sqrt{(1 - \beta_2)}} \\
&\geq \frac{\alpha}{\tau\sqrt{t}} \left[\frac{C}{3} - \frac{\beta_1^{C/2-1}}{1 - \beta_1} - \frac{3(1 - \beta_1)}{2\sqrt{1 - \beta_2}}\left(1 + \frac{\gamma(1 - \gamma^{C-1})}{1 - \gamma} \right) \right] =  \frac{\alpha}{\sqrt{t}}\lambda
\end{align*}
Note that from our choice of $C$, it is easy to see that $\lambda \geq 0$. Also, observe that $\lambda$ is independent of $t$. Thus, $x_{t+C} \geq \min\{1, x_t + \lambda/\sqrt{t}\}$. From this fact, we also see the following:
\begin{enumerate}
\item If $x_{t}  = 1$, then $x_{t+C} = 1$ for all $t \geq T'$ such that $t \bmod C = 0$. 
\item There exists constant $T_1' \geq T'$ such that $x_{T_1'} = 1$ where $T_1' \bmod C = 0$.
\end{enumerate}
The first point simply follows from the relation  $x_{t+C} \geq \min\{1, x_t + \lambda/\sqrt{t}\}$. The second point is due to divergent nature of the sum $\sum_{t = t'}^{\infty} 1/\sqrt{t}$. Therefore, we have
$$
\sum_{i=1}^{C}f_{(kC + i)}(x_{kC + i}) - \sum_{i=1}^{C}f_{(kC + i)}(-1) \geq 2C - 2(C-1) = 2. 
$$
where $kC \geq T_1'$. Thus, when $t \geq T_1'$, for every C steps, $\adam$ suffers a regret of at least $2$. More specifically, $R_T \geq 2(T - T_1')/C$. Thus, $R_T/T \nrightarrow 0$ as $T \rightarrow \infty$, which completes the proof.
 \end{proof}

 \section{Proof of Theorem~\ref{thm:counter-example-stochastic}}
 
 \begin{proof}
Let $\delta$ be an arbitrary small positive constant, and $C$ be a large enough constant chosen as a function of $\beta_1, \beta_2, \delta$ that will be determined in the proof.

Consider the following one dimensional stochastic optimization setting over the domain $[-1, 1]$. At each time step $t$, the function $f_t(x)$ is chosen i.i.d. as follows:
\[
f_t(x) = \begin{cases}
Cx \text{ with probability } p := \frac{1+\delta}{C+1} \\
-x \text{ with probability } 1-p
\end{cases}
\]
The expected function is $F(x) = \delta x$; thus the optimum point over $[-1, 1]$ is $x^\star = -1$. At each time step $t$ the gradient $g_t$ equals $C$ with probability $p$ and $-1$ with probability $1-p$. Thus, the step taken by $\adam$ is
\[ \Delta_t = \frac{-\alpha_t (\beta_1 m_{t-1} + (1-\beta_1)g_t)}{\sqrt{\beta_2 v_{t-1} + (1-\beta_2)g_t^2}}.\]
We now show that for a large enough constant $C$, $\E[\Delta_t] \geq 0$, which implies that the $\adam$'s steps keep drifting away from the optimal solution $x^\star = -1$.
\begin{lemma}
	For a large enough constant $C$ (as a function of $\beta_1, \beta_2, \delta$), we have $\E[\Delta_t] \geq 0$.
	\end{lemma}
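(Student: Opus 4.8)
The plan is to condition on the realized gradients $g_1,\dots,g_{t-1}$, so that $m_{t-1}$ and $v_{t-1}$ are fixed while $g_t$ is an independent fresh draw equal to $C$ with probability $p=\frac{1+\delta}{C+1}$ and $-1$ otherwise. Taking the expectation over $g_t$ and then over the history gives
\[
\E[\Delta_t] \;=\; -\alpha_t\Big( p\,\E[u_C] \;+\; (1-p)\,\E[u_{-1}]\Big),\qquad
u_C := \frac{\beta_1 m_{t-1}+(1-\beta_1)C}{\sqrt{\beta_2 v_{t-1}+(1-\beta_2)C^2}},\quad
u_{-1} := \frac{\beta_1 m_{t-1}-(1-\beta_1)}{\sqrt{\beta_2 v_{t-1}+(1-\beta_2)}}.
\]
Since $\alpha_t>0$, it suffices to show $p\,\E[u_C]+(1-p)\,\E[u_{-1}]\le 0$ for $C$ large. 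The strategy is: $\E[u_C]=O(1)$, so its contribution is $O(1/C)\to 0$, while $\E[u_{-1}]$ is bounded above by a strictly negative absolute constant (depending only on $\beta_1,\beta_2$); the constant-size negative term then dominates for $C$ large.

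The bound on $u_C$ is immediate and encapsulates the non-convergence phenomenon. Since $m_{t-1}$ is a weighted average of past gradients lying in $\{-1,C\}$ with weights summing to at most $1$, we have $-1\le m_{t-1}\le C$, so the numerator of $u_C$ is at most $C$ while the denominator is at least $C\sqrt{1-\beta_2}$; hence $u_C\le 1/\sqrt{1-\beta_2}$ and $p\,\E[u_C]\le \tfrac{1+\delta}{(C+1)\sqrt{1-\beta_2}}$. The rare large gradient is simply divided away by a factor of order $C$.

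The work is in showing $\E[u_{-1}]\le -c$ for a constant $c>0$. Write $u_{-1}=\tfrac{\beta_1 m_{t-1}}{\sqrt{\beta_2 v_{t-1}+(1-\beta_2)}}-\tfrac{1-\beta_1}{\sqrt{\beta_2 v_{t-1}+(1-\beta_2)}}$; the second term supplies the favorable (negative) drift and the first, the momentum term, is the obstacle. Decompose $m_{t-1}=-(1-\beta_1^{t-1})+(C+1)(1-\beta_1)S_1$ and $v_{t-1}=(1-\beta_2^{t-1})+(C^2-1)(1-\beta_2)S_2$, with $S_1=\sum_{i<t,\,g_i=C}\beta_1^{t-1-i}$ and $S_2=\sum_{i<t,\,g_i=C}\beta_2^{t-1-i}$. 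If $m_{t-1}>0$ then $S_1>0$; bounding $S_1\le \beta_1^{d}/(1-\beta_1)$ and $S_2\ge \beta_2^{d}$, where $d$ is the gap to the most recent $C$-gradient, the hypothesis $\beta_1<\sqrt{\beta_2}$ gives $S_1/\sqrt{S_2}\le (\beta_1^2/\beta_2)^{d/2}/(1-\beta_1)\le 1/(1-\beta_1)$. Hence, using $v_{t-1}\ge (C^2-1)(1-\beta_2)S_2$, the ratio $\tfrac{(m_{t-1})_+}{\sqrt{\beta_2 v_{t-1}}}$ is bounded by an absolute constant (at most $\tfrac{2}{\sqrt{\beta_2(1-\beta_2)}}$ for $C\ge 3$), even though $m_{t-1}$ alone can be of size $\Theta(C)$; therefore $\E\!\big[\tfrac{\beta_1 m_{t-1}}{\sqrt{\beta_2 v_{t-1}+(1-\beta_2)}}\big]\le \tfrac{2\beta_1}{\sqrt{\beta_2(1-\beta_2)}}\,\Pr[m_{t-1}>0]$. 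Moreover the same geometric estimate shows that $m_{t-1}>0$ forces the most recent $C$-gradient to lie within the last $O(\log C)$ steps, so $\Pr[m_{t-1}>0]=O(\log C/C)\to 0$. For the favorable term, fix any constant $K>1$; then $\E\!\big[\tfrac{1-\beta_1}{\sqrt{\beta_2 v_{t-1}+(1-\beta_2)}}\big]\ge \tfrac{1-\beta_1}{\sqrt{\beta_2 K+(1-\beta_2)}}\,\Pr[v_{t-1}\le K]$, and an identical argument applied to $S_2$ gives $\Pr[v_{t-1}>K]=O(\log C/C)\to 0$. Combining, $\E[u_{-1}]\le \tfrac{2\beta_1}{\sqrt{\beta_2(1-\beta_2)}}\Pr[m_{t-1}>0]-\tfrac{1-\beta_1}{\sqrt{\beta_2 K+(1-\beta_2)}}\Pr[v_{t-1}\le K]$, whose first term tends to $0$ and whose second term tends to $-\tfrac{1-\beta_1}{\sqrt{\beta_2 K+(1-\beta_2)}}$ as $C\to\infty$; so $\E[u_{-1}]$ is at most a negative constant once $C$ is large. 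Plugging both estimates into the displayed identity and choosing $C$ large enough as a function of $\beta_1,\beta_2,\delta$ makes the bracket negative, i.e.\ $\E[\Delta_t]\ge 0$.

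The main obstacle, and the only genuinely nontrivial step, is the momentum term $\beta_1 m_{t-1}$ in $u_{-1}$: a priori a recent $C$-gradient could make $m_{t-1}$ of order $C$ and flip the sign of the step. The two facts that defuse it are that a large $m_{t-1}$ forces an equally large $v_{t-1}$, with the trade-off governed precisely by $\beta_1<\sqrt{\beta_2}$ so that $m_{t-1}/\sqrt{v_{t-1}}$ stays $O(1)$, and that $m_{t-1}>0$ is in any case an event of probability $O(\log C/C)$ because exponential averaging renders old $C$-gradients negligible. Everything else is routine manipulation of geometric series.
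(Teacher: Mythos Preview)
Your proof is correct and follows the same overall architecture as the paper: condition on the history, split over the two values of $g_t$, observe that the $g_t=C$ branch contributes only $O(1/C)$ because the large gradient is divided by its own magnitude, and show that the momentum term $\beta_1 m_{t-1}$ in the $g_t=-1$ branch is harmless because (i) it is positive only with probability $O((\log C)/C)$ and (ii) even then $m_{t-1}/\sqrt{v_{t-1}}$ stays bounded thanks to $\beta_1<\sqrt{\beta_2}$.

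The technical execution differs in two places. For the momentum bound, the paper uses a single Cauchy--Schwarz step to get $m_{t-1}\le A\sqrt{v_{t-1}}$ uniformly, with $A=(1-\beta_1)\sqrt{\beta_2/((1-\beta_2)(\beta_2-\beta_1^2))}$; you instead track the gap $d$ to the most recent $C$-gradient and bound $S_1/\sqrt{S_2}\le (\beta_1/\sqrt{\beta_2})^d/(1-\beta_1)$. Both give an $O(1)$ ratio and both hinge on $\beta_1<\sqrt{\beta_2}$; the paper's route is a bit cleaner and works for arbitrary gradient sequences, while yours is more explicit for this two-valued setting. For the favorable term $(1-\beta_1)/\sqrt{\beta_2 v_{t-1}+(1-\beta_2)}$, the paper applies Jensen's inequality to $x\mapsto 1/\sqrt{x}$ and uses $\E[v_{t-1}]\le (1+\delta)C$ to obtain a lower bound of order $1/\sqrt{C}$; you instead show $\Pr[v_{t-1}>K]=O((\log C)/C)$ for any fixed $K>1$, which yields a lower bound of order $1$. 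Your bound here is actually sharper (it reflects that $v_{t-1}$ is typically $O(1)$ even though its mean is $\Theta(C)$), and it makes the final comparison---a constant negative term versus $O((\log C)/C)$ positive terms---slightly cleaner than the paper's $\Theta(1/\sqrt{C})$ versus $O((\log C)/C)$.
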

	\begin{proof}
		Let $\E_t[\cdot]$ denote expectation conditioned on all randomness up to and including time $t-1$. Taking conditional expectation of the step, we have
		\begin{align}
		\frac{1}{\alpha_t}\E_t[\Delta_t] = p\cdot \frac{-(\beta_1 m_{t-1} + (1-\beta_1)C)}{\sqrt{\beta_2 v_{t-1} + (1-\beta_2)C^2}}+ (1-p)\cdot\frac{- (\beta_1 m_{t-1} - (1-\beta_1))}{\sqrt{\beta_2 v_{t-1} + (1-\beta_2)}} \notag \\
		= p\cdot \underbrace{\frac{-(\beta_1 m_{t-1} + (1-\beta_1)C)}{\sqrt{\beta_2 v_{t-1} + (1-\beta_2)C^2}}}_{T_1}  + (1-p)\cdot \underbrace{\frac{-\beta_1 m_{t-1}}{\sqrt{\beta_2 v_{t-1} + (1-\beta_2)}}}_{T_2} + (1-p)\cdot \underbrace{\frac{1-\beta_1}{\sqrt{\beta_2 v_{t-1} + (1-\beta_2)}}}_{T_3} \label{eq:step}
		\end{align}
		We will bound the expectation of the terms $T_1$, $T_2$ and $T_3$ above separately.
		
		First, for $T_1$, we have
		\begin{equation} \label{eq:T_1-bound}
		T_1 \geq \frac{-(\beta_1 C + (1-\beta_1)C)}{\sqrt{(1-\beta_2)C^2}} \geq -\frac{1}{\sqrt{1-\beta_2}}.  
		\end{equation}

		Next, we bound $\E[T_2]$. Define $k = \lceil \frac{\log(C+1)}{\log(1/\beta_1)}\rceil$. This choice of $k$ ensures that $\beta_1^k C \leq 1 - \beta_1^k$. Now, note that
		\[m_{t-1} = (1-\beta_1) \sum_{i=1}^{t-1} \beta_1^{t-1-i}g_i.\]
		Let $E$ denote the event that for every $i = t-1, t-2, \ldots, \max\{t-k, 1\}$, $g_i = -1$. Note that $\Pr[E] \geq 1-kp$. Assuming $E$ happens, we can bound $m_{t-1}$ as follows:
		\[m_{t-1} \leq (1-\beta_1) \sum_{i=\max\{t-k, 1\}}^{t-1} \beta_1^{t-1-i}\cdot -1 + (1-\beta_1) \sum_{i=1}^{\max\{t-k, 1\}-1} \beta_1^{t-1-i}\cdot C \leq -(1-\beta_1^k) + \beta_1^k C \leq 0,\]
		and so $T_2 \geq 0$.
		
		With probability at most $kp$, the event $E$ doesn't happen. In this case, we bound $T_2$ as follows. We first bound $m_{t-1}$ in terms of $v_{t-1}$ using the Cauchy-Schwarz inequality as follows:
		\begin{align*}
		m_{t-1} = (1-\beta_1) \sum_{i=1}^{t-1} \beta_1^{t-1-i}g_i &\leq (1-\beta_1) \sqrt{\left(\textstyle\sum_{i=1}^{t-1} \beta_2^{t-1-i}g_i^2\right)\left(\textstyle\sum_{i=1}^{t-1}(\tfrac{\beta_1^2}{\beta_2})^{t-1-i}\right)} \\
		&\leq \underbrace{(1-\beta_1)\sqrt{\frac{\beta_2}{(1-\beta_2)(\beta_2 -\beta_1^2)}}}_{A} \cdot \sqrt{v_{t-1}}.
		\end{align*}
		Thus, $v_{t-1} \geq m_{t-1}^2/A^2$. Thus, we have
		\[T_2 = \frac{-\beta_1 m_{t-1}}{\sqrt{\beta_2 v_{t-1} + (1-\beta_2)}} \geq \frac{-\beta_1 |m_{t-1}|}{\sqrt{\beta_2(m_{t-1}^2/A^2)}} = \frac{-\beta_1(1-\beta_1)}{\sqrt{(1-\beta_2)(\beta_2 - \beta_1^2)}}.\]
		Hence, we have
		\begin{equation} \label{eq:T_2-bound}
		\E[T_2] \geq 0\cdot (1-kp) + \frac{-\beta_1(1-\beta_1)}{\sqrt{(1-\beta_2)(\beta_2 - \beta_1^2)}}\cdot kp = \frac{-\beta_1(1-\beta_1)kp}{\sqrt{(1-\beta_2)(\beta_2 - \beta_1^2)}}
		\end{equation}
		
		Finally, we lower bound $\E[T_3]$ using Jensen's inequality applied to the convex function $\frac{1}{\sqrt{x}}$:
		\[\E[T_3] \geq \frac{(1-\beta_1)}{\sqrt{\beta_2 \E[v_{t-1}] + (1-\beta_2)}} \geq  \frac{(1-\beta_1)}{\sqrt{\beta_2 (1+\delta)C^2 + (1-\beta_2)}}.\]
		The last inequality follows by using the facts $v_{t-1} = (1-\beta_2)\sum_{i=1}^{t-1}\beta_2^{t-1-i}g_i^2$, and the random variables $g_1^2, g_2^2, \ldots, g_{t-1}^2$ are i.i.d., and so
		\begin{equation} \label{eq:T_3-bound}
		\E[v_{t-1}] = (1-\beta_2^{t-1})\E[g_1^2] = (1-\beta_2^{t-1})(C^2 p + (1-p)) = (1-\beta_2^{t-1})(1+\delta)C - \delta \leq (1+\delta)C.  
		\end{equation}
		
		Combining the bounds in \eqref{eq:T_1-bound}, \eqref{eq:T_2-bound}, and \eqref{eq:T_3-bound} in the expression for $\adam$'s step, \eqref{eq:step}, and plugging in the values of the parameters $k$ and $p$ we get the following lower bound on $\E[\Delta_t]$:
		\[ -\frac{1+\delta}{C+1}\cdot \left(\frac{1}{\sqrt{1-\beta_2}} + \frac{-\beta_1(1-\beta_1)\lceil \frac{\log(C+1)}{\log(1/\beta_1)}\rceil}{\sqrt{(1-\beta_2)(\beta_2 - \beta_1^2)}}\right) + \left(1-\frac{1+\delta}{C+1}\right)\cdot\frac{(1-\beta_1)}{\sqrt{\beta_2 (1+\delta)C + (1-\beta_2)}}.\]
		It is evident that for $C$ large enough (as a function of $\delta, \beta_1, \beta_2$), the above expression can be made non-negative.
		\end{proof}
		For the sake of simplicity, let us assume, as is routinely done in practice, that we are using a version of $\adam$ that doesn't perform any projection steps\footnote{Projections can be easily handled with a little bit of work but the analysis becomes more messy.}. Then the lemma implies that $\E[x_{t+1}] \geq \E[x_t]$. Via a simple induction, we conclude that $\E[x_t] \geq x_1$ for all $t$. Thus, if we assume that the starting point $x_1 \geq 0$, then $\E[x_t] \geq 0$. Since $F$ is a monotonically increasing function, we have $\E[F(x_t)] \geq F(0) = 0$, whereas $F(-1) = -\delta$. Thus the expected suboptimality gap is always $\delta > 0$, which implies that $\adam$ doesn't converge to the optimal solution.
		\end{proof}

 \section{Proof of Theorem~\ref{thm:amsgrad-proof}}
 \label{sec:amsgrad-proof-sec}
The proof of Theorem~\ref{thm:amsgrad-proof} presented below is along the lines of the Theorem 4.1 in \citep{Kingma14} which provides a claim of convergence for $\adam$. As our examples showing non-convergence of $\adam$ indicate, the proof in \citep{Kingma14} has problems. The main issue in their proof is the incorrect assumption that $\Gamma_t$ defined in their equation (3) is positive semidefinite, and we also identified problems in lemmas 10.3 and 10.4 in their paper. The following proof fixes these issues and provides a proof of convergence for $\amsgrad$.
 
\begin{proof}
We begin with the following observation:
\begin{align*}
x_{t+1} = \Pi_{\mathcal{F}, \sqrt{\hat{V}_t}}(x_t - \alpha_t \hat{V}_t^{-1/2} m_t) = \min_{x \in \mathcal{F}} \|\hat{V}_t^{1/4}(x - (x_t - \alpha_t  \hat{V}_t^{-1/2}m_t))\|.
\end{align*}
Furthermore, $\Pi_{\mathcal{F}, \sqrt{\hat{V}_t}}(x^*) = x^*$ for all $x^* \in \mathcal{F}$. In this proof, we will use $x_i^*$ to denote the $i^{\text{th}}$ coordinate of $x^*$. Using Lemma~\ref{lem:proj-lemma} with $u_1 = x_{t+1}$ and $u_2 = x^*$, we have the following:
\begin{align*}
\|\hat{V}_t^{1/4}(x_{t+1} - x^*) \|^2 &\leq \|\hat{V}_t^{1/4}(x_{t} -  \alpha_t \hat{V}_t^{-1/2}m_t  - x^*)\|^2 \\
&= \|\hat{V}_t^{1/4}(x_{t} - x^*)\|^2 +  \alpha_t^2 \|\hat{V}_t^{-1/4}m_t\|^2  - 2\alpha_t \langle m_t, x_t - x^* \rangle \\
&=  \|\hat{V}_t^{1/4}(x_{t} - x^*)\|^2 +  \alpha_t^2 \|\hat{V}_t^{-1/4}m_t\|^2  - 2\alpha_t \langle \beta_{1t} m_{t-1} + (1 - \beta_{1t}) g_{t} , x_t - x^* \rangle
\end{align*}
Rearranging the above inequality, we have
\begin{align}
\label{eq:grad-inner-prod-bound}
\langle g_t, x_t - x^* \rangle &\leq \frac{1}{2\alpha_t(1 - \beta_{1t})}\left[ \|\hat{V}_t^{1/4}(x_{t} - x^*)\|^2 - \|\hat{V}_t^{1/4}(x_{t+1} - x^*) \|^2 \right] + \frac{\alpha_t}{2(1 - \beta_{1t})} \|\hat{V}_t^{-1/4}m_t\|^2 \nonumber \\
& \qquad \qquad \qquad \qquad \qquad + \frac{\beta_{1t}}{1 - \beta_{1t}}\langle m_{t-1}, x_t - x^*  \rangle \nonumber \\
&\leq \frac{1}{2\alpha_t(1 - \beta_{1t})}\left[ \|\hat{V}_t^{1/4}(x_{t} - x^*)\|^2 - \|\hat{V}_t^{1/4}(x_{t+1} - x^*) \|^2 \right] + \frac{\alpha_t}{2(1 - \beta_{1t})} \|\hat{V}_t^{-1/4}m_t\|^2 \nonumber \\
& \qquad \qquad \qquad \qquad \qquad + \frac{\beta_{1t}}{2(1 - \beta_{1t})}\alpha_t \|\hat{V}_t^{-1/4}m_{t-1}\|^2 + \frac{\beta_{1t}}{2\alpha_t(1 - \beta_{1t})}\|\hat{V}_t^{1/4}(x_t - x^*)\|^2.
\end{align}
The second inequality follows from simple application of Cauchy-Schwarz and Young's inequality. We now use the standard approach of bounding the regret at each step using convexity of the function $f_t$ in the following manner:
\begin{align}
&\sum_{t=1}^T f_t(x_t) - f_t(x^*) \leq \sum_{t=1}^T \langle g_t, x_t - x^* \rangle \nonumber \\
&\leq  \sum_{t=1}^T \Bigg[ \frac{1}{2\alpha_t(1 - \beta_{1t})}\left[ \|\hat{V}_t^{1/4}(x_{t} - x^*)\|^2 - \|\hat{V}_t^{1/4}(x_{t+1} - x^*) \|^2 \right] + \frac{\alpha_t}{2(1 - \beta_{1t})} \|\hat{V}_t^{-1/4}m_t\|^2 \nonumber \\
& \qquad \qquad \qquad \qquad \qquad + \frac{\beta_{1t}}{2(1 - \beta_{1t})}\alpha_{t-1} \|\hat{V}_{t-1}^{-1/4}m_{t-1}\|^2 + \frac{\beta_{1t}}{2\alpha_t(1 - \beta_{1t})}\|\hat{V}_t^{1/4}(x_t - x^*)\|^2 \Bigg] \nonumber \\
&\leq  \sum_{t=1}^T \Bigg[ \frac{1}{2\alpha_t(1 - \beta_{1t})}\left[ \|\hat{V}_t^{1/4}(x_{t} - x^*)\|^2 - \|\hat{V}_t^{1/4}(x_{t+1} - x^*) \|^2 \right] + \frac{\alpha_t}{(1 - \beta_{1})} \|\hat{V}_t^{-1/4}m_t\|^2 \nonumber \\
& \qquad \qquad \qquad \qquad \qquad  + \frac{\beta_{1t}}{2\alpha_t(1 - \beta_{1t})}\|\hat{V}_t^{1/4}(x_t - x^*)\|^2 \Bigg].
\label{eq:amsgrad-func-bound}
\end{align}
The first inequality is due to convexity of function $f_t$. The second inequality follows from the bound in Equation~\eqref{eq:grad-inner-prod-bound} and the fact that $\hat{v}_{t-1, i} \leq \hat{v}_{t,i}$. For further bounding this inequality, we need the following intermediate result.
\begin{lemma}
For the parameter settings and conditions assumed in Theorem~\ref{thm:amsgrad-proof}, we have
$$
\sum_{t=1}^{T} \alpha_t \|\hat{V}_t^{-1/4} m_{t}\| ^ 2 \leq \frac{\alpha \sqrt{1 + \log T}}{(1 - \beta_1)(1 - \gamma)\sqrt{(1 - \beta_2)}} \sum_{i=1}^d\|g_{1:T, i}\|_2
$$
\begin{proof}\renewcommand{\qedsymbol}{}
We start with the following:
\begin{align*}
\sum_{t=1}^{T} \alpha_t \|\hat{V}_t^{-1/4} m_{t}\| ^ 2 &= \sum_{t=1}^{T-1} \alpha_t \|\hat{V}_t^{-1/4} m_{t}\| ^ 2 + \alpha_T \sum_{i=1}^d \frac{m_{T,i} ^ 2}{\sqrt{\hat{v}_{T,i}}} \\
&\leq \sum_{t=1}^{T-1} \alpha_t \|\hat{V}_t^{-1/4} m_{t}\| ^ 2 + \alpha_T \sum_{i=1}^d \frac{m_{T,i} ^ 2}{\sqrt{v_{T,i}}} \\
&\leq \sum_{t=1}^{T-1} \alpha_t \|\hat{V}_t^{-1/4} m_{t}\| ^ 2 +  \alpha \sum_{i=1}^d \frac{( \sum_{j=1}^T (1 - \beta_{1j}) \Pi_{k=1}^{T - j} \beta_{1(T-k + 1)} g_{j,i}) ^ 2}{\sqrt{T((1 - \beta_2) \sum_{j=1}^T \beta_{2}^{T-j} g_{j,i}^2)}}
\end{align*}
The first inequality follows from the definition of $\hat{v}_{T,i}$, which is maximum of all $v_{T,i}$ until the current time step. The second inequality follows from the update rule of Algorithm~\ref{alg:amsgrad}. We further bound the above inequality in the following manner:
\begin{align}
\sum_{t=1}^{T} \alpha_t \|\hat{V}_t^{-1/4} m_{t}\| ^ 2 &\leq \sum_{t=1}^{T-1} \alpha_t \|\hat{V}_t^{-1/4} m_{t}\| ^ 2 +  \alpha  \sum_{i=1}^d \frac{(\sum_{j=1}^T \Pi_{k=1}^{T - j} \beta_{1(T-k + 1)}) (\sum_{j=1}^T \Pi_{k=1}^{T - j}\beta_{1(T-k+1)} g_{j,i}^2)}{\sqrt{T((1 - \beta_2) \sum_{j=1}^T \beta_{2}^{T-j} g_{j,i}^2)}}\nonumber \\
&\leq \sum_{t=1}^{T-1} \alpha_t \|\hat{V}_t^{-1/4} m_{t}\| ^ 2 +  \alpha \sum_{i=1}^d \frac{(\sum_{j=1}^T \beta_1^{T - j}) (\sum_{j=1}^T \beta_{1}^{T - j} g_{j,i}^2)}{\sqrt{T((1 - \beta_2) \sum_{j=1}^T \beta_{2}^{T-j} g_{j,i}^2)}}\nonumber \\
&\leq \sum_{t=1}^{T-1} \alpha_t \|\hat{V}_t^{-1/4} m_{t}\| ^ 2 +  \frac{\alpha}{1 - \beta_1} \sum_{i=1}^d \frac{\sum_{j=1}^T \beta_{1}^{T - j} g_{j,i}^2}{\sqrt{T((1 - \beta_2) \sum_{j=1}^T \beta_{2}^{T-j} g_{j,i}^2)}}\nonumber \\
&\leq \sum_{t=1}^{T-1} \alpha_t \|\hat{V}_t^{-1/4} m_{t}\| ^ 2 + \frac{\alpha}{(1 - \beta_1)\sqrt{T(1 - \beta_2)}} \sum_{i=1}^d \sum_{j=1}^T \frac{\beta_{1}^{T - j} g_{j,i}^2}{\sqrt{ \beta_{2}^{T-j} g_{j,i}^2}} \nonumber \\
&\leq \sum_{t=1}^{T-1} \alpha_t \|\hat{V}_t^{-1/4} m_{t}\| ^ 2 + \frac{\alpha}{(1 - \beta_1)\sqrt{T(1 - \beta_2)}} \sum_{i=1}^d \sum_{j=1}^T \gamma^{T - j} |g_{j,i}| 
\label{eq:lem-inter--bound}
\end{align}
The first inequality follows from Cauchy-Schwarz inequality. The second inequality is due to the fact that $\beta_{1k} \leq \beta_1$ for all $k \in [T]$. The third inequality follows from the inequality  $\sum_{j=1}^T \beta_1^{T - j} \leq 1/(1 - \beta_1)$. By using similar upper bounds for all time steps, the quantity in Equation~\eqref{eq:lem-inter--bound} can further be bounded as follows:
\begin{align*}
&\sum_{t=1}^{T} \alpha_t \|\hat{V}_t^{-1/4} m_{t}\| ^ 2 \leq   \sum_{t=1}^T \frac{\alpha}{(1 - \beta_1)\sqrt{t(1 - \beta_2)}} \sum_{i=1}^d \sum_{j=1}^t \gamma^{t - j} |g_{j,i}| \\
&= \frac{\alpha}{(1 - \beta_1)\sqrt{(1 - \beta_2)}} \sum_{i=1}^d \sum_{t=1}^T \frac{1}{\sqrt{t}} \sum_{j=1}^t \gamma^{t - j} |g_{j,i}| =  \frac{\alpha}{(1 - \beta_1)\sqrt{(1 - \beta_2)}} \sum_{i=1}^d \sum_{t=1}^T |g_{t,i}| \sum_{j=t}^T\frac{\gamma^{j - t}}{\sqrt{j}} \\
&\leq  \frac{\alpha}{(1 - \beta_1)\sqrt{(1 - \beta_2)}} \sum_{i=1}^d \sum_{t=1}^T |g_{t,i}| \sum_{j=t}^T\frac{\gamma^{j - t}}{\sqrt{t}} \leq  \frac{\alpha}{(1 - \beta_1)\sqrt{(1 - \beta_2)}} \sum_{i=1}^d \sum_{t=1}^T |g_{t,i}| \frac{1}{(1 - \gamma)\sqrt{t}} \\
&\leq  \frac{\alpha}{(1 - \beta_1)(1 - \gamma)\sqrt{(1 - \beta_2)}} \sum_{i=1}^d\|g_{1:T, i}\|_2 \sqrt{\sum_{t=1}^T \frac{1}{t}} \leq  \frac{\alpha \sqrt{1 + \log T}}{(1 - \beta_1)(1 - \gamma)\sqrt{(1 - \beta_2)}} \sum_{i=1}^d \|g_{1:T, i}\|_2
\end{align*}
The third inequality follows from the fact that $ \sum_{j=t}^T \gamma^{j - t} \leq 1/(1 - \gamma)$.   The fourth inequality is due to simple application of Cauchy-Schwarz inequality. The final inequality is due to the following bound on harmonic sum: $\sum_{t=1}^T 1/t \leq (1 + \log T)$. This completes the proof of the lemma.
\end{proof}
\end{lemma}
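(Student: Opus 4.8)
The plan is to bound the left-hand side summand by summand in $t$ and then add up, controlling each term $\alpha_t \|\hat{V}_t^{-1/4} m_t\|^2 = \alpha_t \sum_{i=1}^d m_{t,i}^2/\sqrt{\hat{v}_{t,i}}$ by a quantity that is \emph{linear} in the gradient magnitudes $|g_{j,i}|$. First I would discard the running maximum using $\hat{v}_{t,i} \geq v_{t,i}$, so that $m_{t,i}^2/\sqrt{\hat{v}_{t,i}} \leq m_{t,i}^2/\sqrt{v_{t,i}}$. This replaces the awkward $\hat{v}$ in the denominator by the explicit exponential moving average $v_{t,i} = (1-\beta_2)\sum_{j=1}^t \beta_2^{t-j} g_{j,i}^2$, which is far easier to manipulate.

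Next I would expand the numerator. Writing $m_{t,i} = \sum_{j=1}^t (1-\beta_{1j}) \big(\Pi_{k=1}^{t-j}\beta_{1(t-k+1)}\big) g_{j,i}$ and applying Cauchy-Schwarz with weights equal to the product coefficients (absorbing $(1-\beta_{1j})\leq 1$), I would bound $m_{t,i}^2 \leq \big(\sum_{j} \Pi_{k=1}^{t-j}\beta_{1(t-k+1)}\big)\big(\sum_j \Pi_{k=1}^{t-j}\beta_{1(t-k+1)}\, g_{j,i}^2\big)$. Using $\beta_{1k}\leq\beta_1$ coordinatewise to replace each product by $\beta_1^{t-j}$, together with the geometric bound $\sum_j \beta_1^{t-j}\leq 1/(1-\beta_1)$, this yields $m_{t,i}^2 \leq \tfrac{1}{1-\beta_1}\sum_{j=1}^t \beta_1^{t-j} g_{j,i}^2$. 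Substituting $\alpha_t=\alpha/\sqrt t$ and the form of $v_{t,i}$, each summand is then controlled by $\tfrac{\alpha}{(1-\beta_1)\sqrt{t(1-\beta_2)}}\sum_{i=1}^d \frac{\sum_j \beta_1^{t-j} g_{j,i}^2}{\sqrt{\sum_j \beta_2^{t-j} g_{j,i}^2}}$.

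The crux — and the step I expect to be the main obstacle — is passing from this ratio of sums to a single tractable sum. The key observation is the termwise lower bound $\sqrt{\sum_{j} \beta_2^{t-j}g_{j,i}^2} \geq \sqrt{\beta_2^{t-j}g_{j,i}^2}$, valid for every fixed $j$, which lets me bound the ratio of sums by the sum of ratios: $\frac{\sum_j \beta_1^{t-j}g_{j,i}^2}{\sqrt{\sum_j \beta_2^{t-j}g_{j,i}^2}} \leq \sum_{j=1}^t \frac{\beta_1^{t-j}}{\sqrt{\beta_2^{t-j}}}|g_{j,i}| = \sum_{j=1}^t \gamma^{t-j}|g_{j,i}|$, where the definition $\gamma=\beta_1/\sqrt{\beta_2}$ emerges naturally and the hypothesis $\gamma<1$ is exactly what makes the resulting geometric series summable. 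This is the delicate point: it is where one squared-gradient factor in the numerator cancels against the $|g_{j,i}|$ coming out of the denominator, and one must check the bound holds termwise (in particular it is harmless when some $g_{j,i}=0$).

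Finally I would sum over $t$ and exchange the order of summation, rewriting $\sum_{t=1}^T \tfrac{1}{\sqrt t}\sum_{j=1}^t \gamma^{t-j}|g_{j,i}| = \sum_{t=1}^T |g_{t,i}| \sum_{j=t}^T \tfrac{\gamma^{j-t}}{\sqrt j}$. Bounding $1/\sqrt j \leq 1/\sqrt t$ for $j\geq t$ and $\sum_{j\geq t}\gamma^{j-t}\leq 1/(1-\gamma)$ collapses the inner sum to $\tfrac{1}{(1-\gamma)\sqrt t}$, leaving $\tfrac{1}{1-\gamma}\sum_t |g_{t,i}|/\sqrt t$. A last Cauchy-Schwarz gives $\sum_{t=1}^T |g_{t,i}|/\sqrt t \leq \|g_{1:T,i}\|_2 \sqrt{\sum_{t=1}^T 1/t} \leq \|g_{1:T,i}\|_2\sqrt{1+\log T}$, using the harmonic-sum bound $\sum_{t=1}^T 1/t \leq 1+\log T$. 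Collecting the constant $\tfrac{\alpha}{(1-\beta_1)(1-\gamma)\sqrt{1-\beta_2}}$ and summing over coordinates $i$ then yields the claimed inequality.
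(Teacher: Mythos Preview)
Your proposal is correct and follows essentially the same route as the paper: discard the running maximum via $\hat v_{t,i}\geq v_{t,i}$, apply Cauchy--Schwarz to $m_{t,i}^2$ with the $\beta_1$-product weights, reduce to $\sum_j \gamma^{t-j}|g_{j,i}|$ via the termwise denominator bound, then swap summation order and finish with the geometric, $1/\sqrt{j}\leq 1/\sqrt{t}$, Cauchy--Schwarz, and harmonic-sum estimates. The only cosmetic difference is that the paper peels off the $t=T$ term and argues the general step by ``similar upper bounds for all time steps,'' whereas you bound each $t$ directly and sum; the content is identical.
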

We now return to the proof of Theorem~\ref{thm:amsgrad-proof}. Using the above lemma in Equation~\eqref{eq:amsgrad-func-bound}\footnote{For the sake of clarity, we provide more details of the proof here compared to the original version. Also, the original proof had a missing constant factor of $\tfrac{2}{1 - \beta_1}$ in the regret bound, which has been addressed here.} , we have:
\begin{align}
& \sum_{t=1}^Tf_t(x_t) - f_t(x^*) \leq \sum_{t=1}^T \Bigg[ \frac{1}{2\alpha_t(1 - \beta_{1t})}\left[ \|\hat{V}_t^{1/4}(x_{t} - x^*)\|^2 - \|\hat{V}_t^{1/4}(x_{t+1} - x^*) \|^2 \right] \nonumber \\
& \qquad \qquad \qquad \qquad \qquad + \frac{\beta_{1t}}{2\alpha_t(1 - \beta_{1t})}\|\hat{V}_t^{1/4}(x_t - x^*)\|^2 \Bigg] +  \frac{\alpha \sqrt{1 + \log T}}{(1 - \beta_1)^2(1 - \gamma)\sqrt{(1 - \beta_2)}} \sum_{i=1}^d \|g_{1:T, i}\|_2  \nonumber \\
&= \frac{1}{2\alpha_1(1 - \beta_{1})} \|\hat{V}_1^{1/4}(x_{1} - x^*)\|^2 + \frac{1}{2} \sum_{t=2}^T \left[\frac{\|\hat{V}_{t}^{1/4}(x_{t} - x^*)\|^2}{\alpha_{t}(1 - \beta_{1t})} - \frac{\|\hat{V}_{t-1}^{1/4}(x_{t} - x^*) \|^2}{\alpha_{t-1}(1 - \beta_{1(t-1)})} \right] \nonumber \\
& \qquad \qquad \qquad \qquad  + \sum_{t=1}^T \Bigg[ \frac{\beta_{1t}}{2\alpha_t(1 - \beta_{1t})}\|\hat{V}_t^{1/4}(x_t - x^*)\|^2 \Bigg] +  \frac{\alpha \sqrt{1 + \log T}}{(1 - \beta_1)^2(1 - \gamma)\sqrt{(1 - \beta_2)}} \sum_{i=1}^d\|g_{1:T, i}\|_2 \nonumber \\
&=  \frac{1}{2\alpha_1(1 - \beta_{1})} \|\hat{V}_1^{1/4}(x_{1} - x^*)\|^2 + \frac{1}{2} \sum_{t=2}^T \Bigg[\frac{\|\hat{V}_{t}^{1/4}(x_{t} - x^*)\|^2}{\alpha_{t}(1 - \beta_{1(t-1)})} - \frac{\|\hat{V}_{t}^{1/4}(x_{t} - x^*)\|^2}{\alpha_{t}(1 - \beta_{1(t-1)})}+ \frac{\|\hat{V}_{t}^{1/4}(x_{t} - x^*)\|^2}{\alpha_{t}(1 - \beta_{1t})} \nonumber \\ 
& \qquad - \frac{\|\hat{V}_{t-1}^{1/4}(x_{t} - x^*) \|^2}{\alpha_{t-1}(1 - \beta_{1(t-1)})} \Bigg]  + \sum_{t=1}^T \Bigg[ \frac{\beta_{1t}}{2\alpha_t(1 - \beta_{1t})}\|\hat{V}_t^{1/4}(x_t - x^*)\|^2 \Bigg] +  \frac{\alpha \sqrt{1 + \log T}}{(1 - \beta_1)^2(1 - \gamma)\sqrt{(1 - \beta_2)}} \sum_{i=1}^d\|g_{1:T, i}\|_2 \nonumber \\
&\leq  \frac{1}{2\alpha_1(1 - \beta_{1})} \|\hat{V}_1^{1/4}(x_{1} - x^*)\|^2 + \frac{1}{2(1 - \beta_{1})} \sum_{t=2}^T \left[\frac{\|\hat{V}_{t}^{1/4}(x_{t} - x^*)\|^2}{\alpha_{t}} - \frac{\|\hat{V}_{t-1}^{1/4}(x_{t} - x^*) \|^2}{\alpha_{t-1}} \right] \nonumber \\
& \qquad \qquad \qquad \qquad  + \sum_{t=1}^T \Bigg[ \frac{\beta_{1t}}{\alpha_t(1 - \beta_{1})^2}\|\hat{V}_t^{1/4}(x_t - x^*)\|^2 \Bigg] +  \frac{\alpha \sqrt{1 + \log T}}{(1 - \beta_1)^2(1 - \gamma)\sqrt{(1 - \beta_2)}} \sum_{i=1}^d\|g_{1:T, i}\|_2 \nonumber \\
&=  \frac{1}{2\alpha_1(1 - \beta_{1})} \sum_{i=1}^d \hat{v}_{1,i}^{1/2}(x_{1,i} - x_i^*)^2 + \frac{1}{2(1 - \beta_{1})} \sum_{t=2}^T \sum_{i=1}^d (x_{t,i} - x_i^*)^2 \left[\frac{\hat{v}_{t, i}^{1/2}}{\alpha_{t}} - \frac{\hat{v}_{t-1, i}^{1/2}}{\alpha_{t-1}} \right] \nonumber \\
& \qquad \qquad \qquad \qquad  + \frac{1}{(1 - \beta_{1})^2} \sum_{t=1}^T \sum_{i=1}^d \frac{\beta_{1t} (x_{t,i} - x_i^*)^2 \hat{v}_{t,i}^{1/2}}{\alpha_t} +  \frac{\alpha \sqrt{1 + \log T}}{(1 - \beta_1)^2(1 - \gamma)\sqrt{(1 - \beta_2)}} \sum_{i=1}^d \|g_{1:T, i}\|_2.
\label{eq:func-bound-final}
\end{align}
The second inequality uses the fact that $\beta_{1t} \leq \beta_1$, and the  observations that $\tfrac{\hat{v}_{t, i}^{1/2}}{\alpha_{t}} \geq \tfrac{\hat{v}_{t-1, i}^{1/2}}{\alpha_{t-1}}$ and 
$$
 \frac{\|\hat{V}_{t}^{1/4}(x_{t} - x^*)\|^2}{\alpha_{t}(1 - \beta_{1t})}- \frac{\|\hat{V}_{t}^{1/4}(x_{t} - x^*)\|^2}{\alpha_{t}(1 - \beta_{1(t-1)})} \leq  \frac{\beta_{1t}}{\alpha_t(1 - \beta_{1})^2}\|\hat{V}_t^{1/4}(x_t - x^*)\|^2.
$$
In order to further simplify the bound in Equation~\eqref{eq:func-bound-final}, we need to use telescopic sum.  Using the above fact and $L_\infty$ bound on the feasible region and making use of the above property in Equation~\eqref{eq:func-bound-final}, we have:
\begin{align*}
& \sum_{t=1}^T f_t(x_t) - f_t(x^*) \leq \frac{1}{2\alpha_1(1 - \beta_{1})} \sum_{i=1}^d \hat{v}_{1,i}^{1/2}D_\infty^2 + \frac{1}{2(1 - \beta_{1})} \sum_{t=2}^T \sum_{i=1}^d D_\infty^2  \left[\frac{\hat{v}_{t, i}^{1/2}}{\alpha_{t}} - \frac{\hat{v}_{t-1, i}^{1/2}}{\alpha_{t-1}} \right] \nonumber \\
& \qquad \qquad \qquad \qquad  + \frac{1}{(1 - \beta_{1})^2} \sum_{t=1}^T \sum_{i=1}^d \frac{D_\infty^2 \beta_{1t} \hat{v}_{t,i}^{1/2}}{\alpha_t} +  \frac{\alpha \sqrt{1 + \log T}}{(1 - \beta_1)^2(1 - \gamma)\sqrt{(1 - \beta_2)}} \sum_{i=1}^d \|g_{1:T, i}\|_2 \\
&= \frac{D_\infty^2}{2\alpha_T(1 - \beta_{1})} \sum_{i=1}^d \hat{v}_{T,i}^{1/2}  + \frac{D_\infty^2}{(1 - \beta_{1})^2} \sum_{t=1}^T \sum_{i=1}^d \frac{\beta_{1t}\hat{v}_{t,i}^{1/2}}{\alpha_t} +  \frac{\alpha \sqrt{1 + \log T}}{(1 - \beta_1)^2(1 - \gamma)\sqrt{(1 - \beta_2)}} \sum_{i=1}^d \|g_{1:T, i}\|_2.
\end{align*}
The equality follows from simple telescopic sum, which yields the desired result. One important point to note here is that the regret of $\amsgrad$ can be bounded by $O(G_{\infty} \sqrt{T})$. This can be easily seen from the proof of the aforementioned lemma where in the analysis the term $\sum_{t=1}^T  |g_{t,i}|/\sqrt{t}$ can also be bounded by $O(G_{\infty}\sqrt{T})$. Thus, the regret of $\amsgrad$ is upper bounded by minimum of $O(G_{\infty} \sqrt{T})$ and the bound in the Theorem~\ref{thm:amsgrad-proof} and therefore, the worst case dependence of regret on $T$ in our case is $O(\sqrt{T})$.
\end{proof}

\section{Proof of Theorem~\ref{thm:adamnc-proof}}

\begin{algorithm}[t]\small
	\caption{$\adamnc$}
	\label{alg:adamnc}
	\begin{algorithmic}
		\STATE {\bfseries Input:} $x_1 \in \mathcal{F}$, step size $\{\alpha_t > 0\}_{t=1}^T$, $\{(\beta_{1t}, \beta_{2t})\}_{t=1}^T$
		\STATE Set $m_{0} = 0$ and $v_{0} = 0$
		\FOR{$t=1$ {\bfseries to} $T$}
		\STATE $g_t = \nabla f_t(x_t)$
		\STATE $m_{t} = \beta_{1t} m_{t-1} + (1 - \beta_{1t}) g_{t}$ 
		\STATE $v_{t} = \beta_{2t} v_{t-1} + (1 - \beta_{2t}) g_{t}^2$ and $V_t = \diag(v_t)$
		\STATE $x_{t+1} = \Pi_{\mathcal{F},\sqrt{V_t}}(x_{t} - \alpha_t m_{t}/ \sqrt{v_t})$
		\ENDFOR
	\end{algorithmic}
\end{algorithm}

\begin{proof}
Using similar argument to proof of Theorem~\ref{thm:amsgrad-proof} until Equation~\eqref{eq:grad-inner-prod-bound}, we have the following
	\begin{align}
	\label{eq:nc-grad-inner-prod-bound}
	\langle g_t, x_t - x^* \rangle &\leq \frac{1}{2\alpha_t(1 - \beta_{1t})}\left[ \|V_t^{1/4}(x_{t} - x^*)\|^2 - \|V_t^{1/4}(x_{t+1} - x^*) \|^2 \right] + \frac{\alpha_t}{2(1 - \beta_{1t})} \|V_t^{-1/4}m_t\|^2 \nonumber \\
	& \qquad \qquad \qquad + \frac{\beta_{1t}}{2(1 - \beta_{1t})}\alpha_t \|V_t^{-1/4}m_{t-1}\|^2 + \frac{\beta_{1t}}{2\alpha_t(1 - \beta_{1t})}\|V_t^{1/4}(x_t - x^*)\|^2.
	\end{align}
	The second inequality follows from simple application of Cauchy-Schwarz and Young's inequality. We now use the standard approach of bounding the regret at each step using convexity of the function $f_t$ in the following manner:
	\begin{align}
	&\sum_{t=1}^T f_t(x_t) - f_t(x^*) \leq \sum_{t=1}^T \langle g_t, x_t - x^* \rangle \nonumber \\
	&\leq  \sum_{t=1}^T \Bigg[ \frac{1}{2\alpha_t(1 - \beta_{1t})}\left[ \|V_t^{1/4}(x_{t} - x^*)\|^2 - \|V_t^{1/4}(x_{t+1} - x^*) \|^2 \right] + \frac{\alpha_t}{2(1 - \beta_{1t})} \|V_t^{-1/4}m_t\|^2 \nonumber \\
	& \qquad \qquad \qquad \qquad \qquad + \frac{\beta_{1t}}{2(1 - \beta_{1t})}\alpha_t \|V_t^{-1/4}m_{t-1}\|^2 + \frac{\beta_{1t}}{2\alpha_t(1 - \beta_{1t})}\|V_t^{1/4}(x_t - x^*)\|^2 \Bigg] \nonumber \\
          &\leq  \sum_{t=1}^T \Bigg[ \frac{1}{2\alpha_t(1 - \beta_{1t})}\left[ \|V_t^{1/4}(x_{t} - x^*)\|^2 - \|V_t^{1/4}(x_{t+1} - x^*) \|^2 \right] + \frac{\alpha_t}{2(1 - \beta_{1t})} \|V_t^{-1/4}m_t\|^2 \nonumber \\
	& \qquad \qquad \qquad \qquad \qquad + \frac{\beta_{1t}}{2(1 - \beta_{1t})}\alpha_{t-1} \|V_{t-1}^{-1/4}m_{t-1}\|^2 + \frac{\beta_{1t}}{2\alpha_t(1 - \beta_{1t})}\|V_t^{1/4}(x_t - x^*)\|^2 \Bigg].
	\label{eq:adamnc-func-bound}
	\end{align}
	The inequalities follow due to convexity of function $f_t$ and the fact that $\tfrac{v_{t, i}^{1/2}}{\alpha_t} \geq \tfrac{v_{t-1, i}^{1/2}}{\alpha_{t-1}}$ . For further bounding this inequality, we need the following intermediate result.
	\begin{lemma}
		For the parameter settings and conditions assumed in Theorem~\ref{thm:adamnc-proof}, we have
		$$
		\sum_{t=1}^{T} \alpha_t \|V_t^{-1/4} m_{t}\| ^ 2 \leq \frac{2\zeta}{(1 - \beta_1)^2} \sum_{i=1}^d \|g_{1:T, i}\|_2.
		$$
		\begin{proof}\renewcommand{\qedsymbol}{}
			We start with the following:
			\begin{align*}
			\sum_{t=1}^{T} \alpha_t \|V_t^{-1/4} m_{t}\| ^ 2 &= \sum_{t=1}^{T-1} \alpha_t \|V_t^{-1/4} m_{t}\| ^ 2 + \alpha_T \sum_{i=1}^d \frac{m_{T,i} ^ 2}{\sqrt{v_{T,i}}} \\
			&\leq \sum_{t=1}^{T-1} \alpha_t \|\hat{V}_t^{-1/4} m_{t}\| ^ 2 +  \alpha_T \sum_{i=1}^d \frac{( \sum_{j=1}^T (1 - \beta_{1j}) \Pi_{k=1}^{T - j} \beta_{1(T-k + 1)} g_{j,i}) ^ 2}{\sqrt{(\sum_{j=1}^T \Pi_{k=1}^{T-j} \beta_{2(T-k+1)}(1 - \beta_{2j}) g_{j,i}^2)}}
			\end{align*}
			The first inequality follows from the update rule of Algorithm~\ref{alg:amsgrad}. We further bound the above inequality in the following manner:
			\begin{align}
			\sum_{t=1}^{T} \alpha_t \|V_t^{-1/4} m_{t}\| ^ 2 &\leq \sum_{t=1}^{T-1} \alpha_t \|V_t^{-1/4} m_{t}\| ^ 2 +  \alpha_T  \sum_{i=1}^d \frac{(\sum_{j=1}^T \Pi_{k=1}^{T - j} \beta_{1(T-k+1)}) (\sum_{j=1}^T \Pi_{k=1}^{T - j}\beta_{1(T-k+1)} g_{j,i}^2)}{\sqrt{\sum_{j=1}^T \Pi_{k=1}^{T-j} \beta_{2(T-k+1)}(1 - \beta_{2j}) g_{j,i}^2}}\nonumber \\
			&\leq \sum_{t=1}^{T-1} \alpha_t \|V_t^{-1/4} m_{t}\| ^ 2 +  \alpha_T \sum_{i=1}^d \frac{(\sum_{j=1}^T \beta_1^{T - j}) (\sum_{j=1}^T \beta_{1}^{T - j} g_{j,i}^2)}{\sqrt{\sum_{j=1}^T \Pi_{k=1}^{T-j} \beta_{2(T-k+1)}(1 - \beta_{2j}) g_{j,i}^2}}\nonumber \\
			&\leq \sum_{t=1}^{T-1} \alpha_t \|V_t^{-1/4} m_{t}\| ^ 2 +  \frac{\alpha_T}{1 - \beta_1} \sum_{i=1}^d \frac{\sum_{j=1}^T \beta_{1}^{T - j} g_{j,i}^2}{\sqrt{\sum_{j=1}^T \Pi_{k=1}^{T-j} \beta_{2(T-k+1)}(1 - \beta_{2j}) g_{j,i}^2}}\nonumber \\
			&\leq \sum_{t=1}^{T-1} \alpha_t \|V_t^{-1/4} m_{t}\| ^ 2 +  \frac{\zeta}{1 - \beta_1} \sum_{i=1}^d \frac{\sum_{j=1}^T \beta_{1}^{T - j} g_{j,i}^2}{\sqrt{\sum_{j=1}^T g_{j,i}^2}}\nonumber \\
			&\leq \sum_{t=1}^{T-1} \alpha_t \|V_t^{-1/4} m_{t}\| ^ 2 +  \frac{\zeta}{1 - \beta_1} \sum_{i=1}^d \sum_{j=1}^T \frac{\beta_{1}^{T - j} g_{j,i}^2}{\sqrt{\sum_{k=1}^j g_{k,i}^2}}
			\label{eq:nc-lem-inter--bound}
			\end{align}
			The first inequality follows from Cauchy-Schwarz inequality. The second inequality is due to the fact that $\beta_{1k} \leq \beta_1$ for all $k \in [T]$. The third inequality follows from the inequality  $\sum_{j=1}^T \beta_1^{T - j} \leq 1/(1 - \beta_1)$. Using similar argument for all time steps, the quantity in Equation~\eqref{eq:nc-lem-inter--bound} can be bounded as follows:
			\begin{align*}
			\sum_{t=1}^{T} \alpha_t \|V_t^{-1/4} m_{t}\| ^ 2 &\leq  \frac{\zeta}{1 - \beta_1} \sum_{i=1}^d \sum_{j=1}^T \frac{\sum_{l=0}^{T-j} \beta_{1}^l g_{j,i}^2}{\sqrt{\sum_{k=1}^j g_{k,i}^2}} \\
			&\leq  \frac{\zeta}{(1 - \beta_1)^2} \sum_{i=1}^d \sum_{j=1}^T \frac{g_{j,i}^2}{\sqrt{\sum_{k=1}^j g_{k,i}^2}} \leq \frac{2\zeta}{(1 - \beta_1)^2} \sum_{i=1}^d \|g_{1:T, i}\|_2.
			\end{align*}
			The final inequality is due to Lemma~\ref{lem:simple-grad-bound}. This completes the proof of the lemma.
		\end{proof}
	\end{lemma}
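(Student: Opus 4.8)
The plan is to bound the sum one time step at a time, reducing each summand $\alpha_t \|V_t^{-1/4} m_t\|^2 = \alpha_t \sum_{i=1}^d m_{t,i}^2 / \sqrt{v_{t,i}}$ to a quantity involving only the raw squared gradients, and then to telescope across $t$. First I would write out the exponential-moving-average recursions of Algorithm~\ref{alg:adamnc} in closed form, expressing $m_{t,i} = \sum_{j=1}^t (1-\beta_{1j}) \bigl(\prod_{k=1}^{t-j}\beta_{1(t-k+1)}\bigr) g_{j,i}$ and, analogously, $v_{t,i} = \sum_{j=1}^t \bigl(\prod_{k=1}^{t-j}\beta_{2(t-k+1)}\bigr)(1-\beta_{2j}) g_{j,i}^2$. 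This makes both $m_{t,i}$ and $v_{t,i}$ explicit functions of the past gradients so that Cauchy--Schwarz can be applied to the numerator and Condition~1 of Theorem~\ref{thm:adamnc-proof} to the denominator.

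The central estimate is to control $m_{t,i}^2$. I would apply Cauchy--Schwarz to the closed form of $m_{t,i}$, splitting each product weight into two equal halves, so that $m_{t,i}^2 \leq \bigl(\sum_{j=1}^t \prod_{k=1}^{t-j}\beta_{1(t-k+1)}\bigr)\bigl(\sum_{j=1}^t \prod_{k=1}^{t-j}\beta_{1(t-k+1)} g_{j,i}^2\bigr)$, after discarding the harmless factors $(1-\beta_{1j})\le 1$. Since $\beta_{1k}\le\beta_1$ for every $k$, each product is at most $\beta_1^{t-j}$, and the first factor is bounded by $\sum_{j}\beta_1^{t-j}\le 1/(1-\beta_1)$, while the second factor remains $\sum_j \beta_1^{t-j} g_{j,i}^2$. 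For the denominator I would invoke Condition~1, which, once the EMA structure of $v_{t,i}$ is substituted, is exactly the statement that $\alpha_t/\sqrt{v_{t,i}}\le \zeta/\sqrt{\sum_{j=1}^t g_{j,i}^2}$. Combining these gives, for each fixed $t$, the per-step bound $\alpha_t m_{t,i}^2/\sqrt{v_{t,i}} \leq \frac{\zeta}{1-\beta_1}\sum_{j=1}^t \beta_1^{t-j} g_{j,i}^2 / \sqrt{\sum_{k=1}^t g_{k,i}^2}$; replacing $\sqrt{\sum_{k=1}^t g_{k,i}^2}$ by the smaller $\sqrt{\sum_{k=1}^j g_{k,i}^2}$ (which only loosens the bound, since $j\le t$) aligns each summand with the form that Lemma~\ref{lem:simple-grad-bound} requires.

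Summing over $t$ and swapping the order of the $t$- and $j$-summations is the key bookkeeping step: for a fixed $j$, the time index $t$ ranges from $j$ to $T$, so the inner geometric sum $\sum_{t=j}^T \beta_1^{t-j} = \sum_{l=0}^{T-j}\beta_1^{l}$ is again at most $1/(1-\beta_1)$, producing the second factor of $1/(1-\beta_1)$ and hence the $(1-\beta_1)^{-2}$ in the statement. Everything then collapses to $\frac{\zeta}{(1-\beta_1)^2}\sum_{i=1}^d \sum_{j=1}^T g_{j,i}^2 / \sqrt{\sum_{k=1}^j g_{k,i}^2}$, and the final move is to apply Lemma~\ref{lem:simple-grad-bound}, a standard \adagrad-style telescoping estimate giving $\sum_{j=1}^T g_{j,i}^2/\sqrt{\sum_{k=1}^j g_{k,i}^2}\le 2\|g_{1:T,i}\|_2$, which supplies the factor of $2$ and completes the bound.

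I expect the main obstacle to be the denominator step: the momentum numerator mixes gradients with $\beta_1$-weights while the preconditioner $v_{t,i}$ mixes squared gradients with $\beta_2$-weights, so the two are not directly comparable. The purpose of Condition~1 (and the constant $\zeta$) is precisely to bridge this mismatch by lower-bounding the $\beta_2$-weighted root-sum by a multiple of the unweighted root-sum $\sqrt{\sum_j g_{j,i}^2}$; the subtle point to watch is getting the index ranges to line up so that Lemma~\ref{lem:simple-grad-bound} applies cleanly, i.e. with denominator $\sqrt{\sum_{k=1}^j g_{k,i}^2}$ rather than $\sqrt{\sum_{k=1}^t g_{k,i}^2}$. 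Condition~2 ensures the monotonicity used when peeling off terms, but the quantitative content lies entirely in the Cauchy--Schwarz/Condition~1 estimate followed by the double-sum exchange.
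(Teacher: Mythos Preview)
Your proposal is correct and follows essentially the same route as the paper: closed-form expansion of $m_{t,i}$ and $v_{t,i}$, Cauchy--Schwarz on the numerator, the bound $\beta_{1k}\le\beta_1$ and the geometric-series estimate for one factor of $(1-\beta_1)^{-1}$, Condition~1 for the denominator, the index-range loosening $\sqrt{\sum_{k\le t}}\to\sqrt{\sum_{k\le j}}$, the double-sum swap yielding the second $(1-\beta_1)^{-1}$, and finally Lemma~\ref{lem:simple-grad-bound}. One small remark: Condition~2 is not actually used anywhere in this lemma (it enters only in the telescoping argument of the main theorem), so you can drop that parenthetical.
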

Using the above lemma in Equation~\eqref{eq:adamnc-func-bound} , we have:
	\begin{align}
	& \sum_{t=1}^Tf_t(x_t) - f_t(x^*) \leq \sum_{t=1}^T \Bigg[ \frac{1}{2\alpha_t(1 - \beta_{1t})}\left[ \|V_t^{1/4}(x_{t} - x^*)\|^2 - \|V_t^{1/4}(x_{t+1} - x^*) \|^2 \right] \nonumber \\
	& \qquad \qquad \qquad \qquad \qquad + \frac{\beta_{1t}}{2\alpha_t(1 - \beta_{1t})}\|V_t^{1/4}(x_t - x^*)\|^2 \Bigg] + \frac{2\zeta}{(1 - \beta_1)^3} \sum_{i=1}^d \|g_{1:T, i}\|_2  \nonumber \\
	&\leq  \frac{1}{2\alpha_1(1 - \beta_{1})} \|V_1^{1/4}(x_{1} - x^*)\|^2 + \frac{1}{2(1 - \beta_{1})} \sum_{t=2}^T \left[\frac{\|V_{t}^{1/4}(x_{t} - x^*)\|^2}{\alpha_{t}} - \frac{\|V_{t-1}^{1/4}(x_{t-1} - x^*) \|^2}{\alpha_t} \right] \nonumber \\
	& \qquad \qquad \qquad \qquad  + \sum_{t=1}^T \Bigg[ \frac{\beta_{1t}}{\alpha_t(1 - \beta_{1})^2}\|V_t^{1/4}(x_t - x^*)\|^2 \Bigg] + \frac{2\zeta}{(1 - \beta_1)^3} \sum_{i=1}^d \|g_{1:T, i}\|_2 \nonumber \\
	&=  \frac{1}{2\alpha_1(1 - \beta_{1})} \sum_{i=1}^d v_{1,i}^{1/2}(x_{1,i} - x_i^*)^2 + \frac{1}{2(1 - \beta_{1})} \sum_{t=2}^T \sum_{i=1}^d (x_{t,i} - x_i^*)^2 \left[\frac{v_{t, i}^{1/2}}{\alpha_{t}} - \frac{v_{t-1, i}^{1/2}}{\alpha_{t-1}} \right] \nonumber \\
	& \qquad \qquad \qquad \qquad  + \frac{1}{(1 - \beta_{1})^2} \sum_{t=1}^T \sum_{i=1}^d \frac{\beta_{1t} (x_{t,i} - x_i^*)^2 v_{t,i}^{1/2}}{\alpha_t}+ \frac{2\zeta}{(1 - \beta_1)^3} \sum_{i=1}^d \|g_{1:T, i}\|_2.
	\label{eq:func-bound-final-nc}
	\end{align}
	The first inequality and second inequality use the fact that $\beta_{1t} \leq \beta_1$ and argument similar to that in Theorem~\ref{thm:amsgrad-proof}. Furthermore, from the theorem statement, we know that that $\{(\alpha_t. \beta_{2t})\}$ are selected such that the following holds:
	$$
	\frac{v_{t, i}^{1/2}}{\alpha_{t}} \geq \frac{v_{t-1, i}^{1/2}}{\alpha_{t-1}}.
	$$
	Using the $L_\infty$ bound on the feasible region and making use of the above property in Equation~\eqref{eq:func-bound-final-nc}, we have:
	\begin{align*}
	& \sum_{t=1}^T f_t(x_t) - f_t(x^*) \leq \frac{1}{2\alpha_1(1 - \beta_{1})} \sum_{i=1}^d v_{1,i}^{1/2}D_\infty^2 + \frac{1}{2(1 - \beta_{1})} \sum_{t=2}^T \sum_{i=1}^d D_\infty^2  \left[\frac{v_{t, i}^{1/2}}{\alpha_{t}} - \frac{v_{t-1, i}^{1/2}}{\alpha_{t-1}} \right] \nonumber \\
	& \qquad \qquad \qquad \qquad  + \frac{1}{(1 - \beta_{1})^2} \sum_{t=1}^T \sum_{i=1}^d \frac{D_\infty^2 \beta_{1t} v_{t,i}^{1/2}}{\alpha_t} + \frac{2\zeta}{(1 - \beta_1)^3} \sum_{i=1}^d \|g_{1:T, i}\|_2 \\
	&= \frac{D_\infty^2}{2\alpha_T(1 - \beta_{1})} \sum_{i=1}^d v_{T,i}^{1/2}  + \frac{D_\infty^2}{(1 - \beta_{1})^2} \sum_{t=1}^T \sum_{i=1}^d \frac{\beta_{1t}v_{t,i}^{1/2}}{\alpha_t} + \frac{2\zeta}{(1 - \beta_1)^3} \sum_{i=1}^d \|g_{1:T, i}\|_2.
	\end{align*}
	The equality follows from simple telescopic sum, which yields the desired result.
\end{proof}

\section{Proof of Theorem~\ref{thm:counter-example-epsilon}}
\label{sec:eps-counter}
\begin{theorem}
	\label{thm:counter-example-epsilon}
	For any $\epsilon > 0$, $\adam$ with the modified update in Equation~\eqref{eq:mod-update} and with parameter setting such that all the conditions in \citep{Kingma14} are satisfied can have non-zero average regret i.e., $R_T/T \nrightarrow 0$ as $T \rightarrow \infty$ for convex $\{f_i\}_{i=1}^{\infty}$ with bounded gradients on a feasible set $\mathcal{F}$ having bounded $D_{\infty}$ diameter.
\end{theorem}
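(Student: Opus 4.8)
The plan is to re-run, almost verbatim, the construction and analysis behind Theorem~\ref{thm:counter-example-gen}. Fix $\beta_1,\beta_2\in[0,1)$ with $\beta_1<\sqrt{\beta_2}$ (and, as permitted by the remark at the end of Section~3, keep $\beta_1$ constant), with step sizes $\alpha_t=\alpha/\sqrt t$, so that all the conditions of \citet{Kingma14} hold; take $\mathcal{F}=[-1,1]$ and the linear functions $f_t(x)=Cx$ when $t\bmod C=1$ and $f_t(x)=-x$ otherwise, for an even integer $C$ now chosen large as a function of $\epsilon$ as well as of $\beta_1,\beta_2$. The key point is that the $f_t$ are linear, so the gradient $g_t\in\{C,-1\}$ is determined by $t\bmod C$ alone, independently of the iterate; consequently the sequences $\{m_t\}$ and $\{v_t\}$ produced by $\adam$ are \emph{exactly} those of the proof of Theorem~\ref{thm:counter-example-gen}, irrespective of the $\epsilon$ in the denominator. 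Hence all the structural facts of that proof carry over untouched: $m_{kC}\le 0$ for every $k$, the fact that once an increment $\delta_{t+j}$ along the ``$-1$'' part of a block is nonnegative all subsequent ones are too, the bound $v_{t+i-1}\le 2$, and the reduction (via Lemma~\ref{lem:1d-proj-prop}) to $x_{t+C}\ge\min\{1,\ x_t+\sum_{i=t}^{t+C-1}\delta_i\}$.

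What changes is only that every denominator of the form $\sqrt{(1-\beta_2)(\cdot)+\beta_2 v+\epsilon}$ now carries an extra $+\epsilon$. I would walk down the displayed chain lower-bounding $\delta:=\sum_{i=t}^{t+C-1}\delta_i$ and note that the additive $\epsilon$ only ever \emph{enlarges a denominator that is being subtracted}, which helps, except at the single step where $v_{t+i-1}\le 2$ is used inside a denominator that appears with a $+$ sign: there $\sqrt{1+\beta_2}$ gets replaced by $\sqrt{1+\beta_2+\epsilon}$. Carrying that one change through, the bound on $\delta$ becomes, schematically,
$$
\delta \;\ge\; \frac{\alpha}{\sqrt t}\left[\frac{a\,C}{\sqrt{1+\beta_2+\epsilon}}\;-\;b\right]\;=\;\frac{\alpha}{\sqrt t}\,\lambda,
$$
with $a=a(\beta_1,\beta_2)>0$ and $b=b(\beta_1,\beta_2)$ staying bounded as $C\to\infty$; so choosing $C$ large enough in terms of $\epsilon,\beta_1,\beta_2$ (together with the obvious $\epsilon$-perturbed analogues of the conditions in Equation~\eqref{eq:p-condition}, each of which still holds for all large $C$ since its left-hand side is bounded while $C$ on the right grows) makes $\lambda>0$. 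From here the argument concludes exactly as in Theorem~\ref{thm:counter-example-gen}: $\lambda>0$ forces $x_t=1$ for all large $t\equiv 0\ (\mathrm{mod}\ C)$ because $\sum 1/\sqrt t$ diverges, whence $R_T\ge 2(T-T_1')/C$ and $R_T/T\not\to 0$. The $f_t$ are convex with gradients bounded by $C$ on an interval of diameter $2$, as required.

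As a sanity check, and an alternative way of seeing that only small $\epsilon$ is really at stake: running the $\epsilon$-modified $\adam$ on the scaled functions $\kappa f_t$ produces literally the same iterate sequence as running the $(\epsilon/\kappa^2)$-modified $\adam$ on $f_t$, since scaling the iterate-independent gradients by $\kappa$ scales $m_t$ by $\kappa$ and $v_t$ by $\kappa^2$, so $m_t/\sqrt{v_t+\epsilon}$ on the scaled problem equals the same quantity with $\epsilon$ replaced by $\epsilon/\kappa^2$ on the unscaled one (in one dimension the weighted projection onto $[-1,1]$ is just clipping), and the regret merely scales by $\kappa$. Thus it suffices to defeat an arbitrarily small positive regularization parameter, which is exactly what the robustness of the inequalities above provides.

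The main obstacle is precisely the bookkeeping in the second paragraph: checking inequality by inequality (there are on the order of half a dozen, plus the perturbed analogue of Equation~\eqref{eq:p-condition}) that the extra $+\epsilon$ does no harm, and confirming that the single place where it does hurt --- inside $\sqrt{1+\beta_2+\epsilon}$ --- is swamped by taking $C$ large. No genuinely new idea is needed beyond the proof of Theorem~\ref{thm:counter-example-gen}; it is a matter of re-running that argument carefully with the extra constant carried along.
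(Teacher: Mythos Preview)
Your proposal is correct but takes a genuinely different route from the paper's own proof. The paper does \emph{not} build on Theorem~\ref{thm:counter-example-gen}; instead it re-runs the simpler period-$3$ construction from Theorem~\ref{thm:counter-example} with $\beta_1=0$ and a specific $\beta_2=\tfrac{2}{(1+C^2)C^2}$ (chosen so that the key equality $\sqrt{(\beta_2 C^2+(1-\beta_2)+\epsilon)/2}=\sqrt{(1-\beta_2)+\epsilon/C^2}$ goes through when $\epsilon=1$), and then handles general $\epsilon$ by the very rescaling you describe in your final paragraph. By contrast, you keep $\beta_1,\beta_2$ fixed and arbitrary (subject to $\beta_1<\sqrt{\beta_2}$), absorb $\epsilon$ by enlarging the period $C$, and use the rescaling only as a sanity check. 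Your route proves a slightly stronger statement --- for every admissible $(\beta_1,\beta_2)$ and every $\epsilon>0$ there is a bad problem --- whereas the paper only exhibits one parameter setting that fails; on the other hand, the paper's argument is considerably shorter, since the induction over a period of length $3$ avoids the half-dozen perturbed inequalities you would need to re-verify. Your identification of the single harmful occurrence of $\epsilon$ (inside $\sqrt{1+\beta_2+\epsilon}$) and the observation that it is swamped by the linear-in-$C$ term are correct, and the structural facts ($m_{kC}\le 0$, monotonicity of the sign of $\delta_{t+j}$, and $v_{t+i-1}\le 2$) indeed depend only on the iterate-independent gradient sequence and not on the denominator, so they carry over unchanged.
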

\begin{proof}
Let us first consider the case where $\epsilon = 1$ (in fact, the same setting works for any $\epsilon \leq 1$). The general $\epsilon$ case can be proved by simply rescaling the sequence of functions by a factor of $\sqrt{\epsilon}$. We show that the same optimization setting in Theorem~\ref{thm:counter-example} where $f_t$ are linear functions and  $\mathcal{F} = [-1,1]$, hence, we only discuss the details that differ from the proof of Theorem~\ref{thm:counter-example}. In particular, we define the following function sequence:
\[
f_t(x)= 
\begin{cases}
Cx, & \text{for } t \bmod 3 = 1 \\
-x, & \text{otherwise},
\end{cases}
\]
where $C \geq 2$. Similar to the proof of Theorem~\ref{thm:counter-example}, we assume that the initial point is $x_1 = 1$ and the parameters are:
$$ 
\beta_1 = 0,   \beta_2 = \frac{2}{(1 + C^2)C^2} \text{ and } \alpha_t = \frac{\alpha}{\sqrt{t}}
$$
where $\alpha < \sqrt{1 - \beta_2}$. The proof essentially follows along the lines of that of Theorem~\ref{thm:counter-example} and is through principle of mathematical induction. Our aim is to prove that $x_{3t+2}$ and $x_{3t + 3}$ are positive and $x_{3t+4} = 1$. The base case holds trivially.  Suppose for some $t \in \mathbb{N} \cup \{0\}$, we have $x_i > 0$ for all $i \in [3t+1]$ and $x_{3t+1} = 1$. For $(3t+1)^{\text{th}}$ update, the only change from the update of in Equation~\eqref{eq:adam-recur} is the additional $\epsilon$ in the denominator i.e.,  we have
\begin{align*}
\hat{x}_{3t+2} &= x_{3t+1} - \frac{\alpha C}{\sqrt{(3t + 1)(\beta_2 v_{3t} + (1 - \beta_2) C^2 + \epsilon)}} \\
&\geq 1 - \frac{\alpha C}{\sqrt{(3t+1)(\beta_2 v_{3t} + (1 - \beta_2) C^2)}} \geq 0.
\end{align*}
The last inequality follows by simply dropping $v_{3t}$ term and using the relation that $\alpha < \sqrt{1 - \beta_2}$. Therefore, we have $0 < \hat{x}_{3t + 2} < 1$ and hence $x_{3t+2} = \hat{x}_{3t+2} > 0$. Furthermore, after the  $(3t+2)^{\text{th}}$ and $(3t+3)^{\text{th}}$ updates of $\adam$ in Equation~\eqref{eq:adam-recur}, we have the following:
\begin{align*}
\hat{x}_{3t+3} = x_{3t+2} + \frac{\alpha}{\sqrt{(3t+2)(\beta_2 v_{3t+1} + (1 - \beta_2) + \epsilon)}}, \\
\hat{x}_{3t+4} = x_{3t+3} + \frac{\alpha}{\sqrt{(3t+3)(\beta_2 v_{3t+2} + (1 - \beta_2) + \epsilon)}}.
\end{align*}
Since $x_{3t+2} > 0$, it is easy to see that $x_{3t+3} > 0$. To complete the proof, we need to show that $x_{3t + 4} = 1$. The only change here from the proof of Theorem~\ref{thm:counter-example} is that we need to show the following: 
\begin{align}
 &\frac{\alpha}{\sqrt{(3t+2)(\beta_2 v_{3t+1} + (1 - \beta_2) + \epsilon)}} + \frac{\alpha}{\sqrt{(3t+3)(\beta_2 v_{3t+2} + (1 - \beta_2) + \epsilon)}} \nonumber \\
&\geq \frac{\alpha}{\sqrt{\beta_2 C^2 + (1 - \beta_2) + \epsilon}} \left(\frac{1}{\sqrt{3t+2}} + \frac{1}{\sqrt{3t+3}} \right) \nonumber \\
&\geq \frac{\alpha}{\sqrt{\beta_2 C^2 + (1 - \beta_2) + \epsilon}} \left(\frac{1}{\sqrt{2(3t+1)}} + \frac{1}{\sqrt{2(3t+1)}}\right) \nonumber \\
&= \frac{\sqrt{2}\alpha}{\sqrt{(3t+1)(\beta_2 C^2 + (1 - \beta_2) + \epsilon})} = \frac{\alpha C}{\sqrt{(3t+1)((1 - \beta_2)C^2 + \epsilon)}} \nonumber \\
&\geq \frac{\alpha C}{\sqrt{(3t + 1)(\beta_2 v_{3t} + (1 - \beta_2) C^2 + \epsilon)}}.
\label{eq:t2-bound-eps}
\end{align}
The first inequality is due to the fact that $v_{t} \leq C^2$ for all $t \in \mathbb{N}$. The last equality is due to following fact:
\begin{align*}
\sqrt{\frac{\beta_2 C^2 + (1 - \beta_2)}{2}} = \sqrt{1 - \beta_2 + \frac{\epsilon}{C^2}}.
\end{align*} 
for the choice of $\beta_2 = 2/[(1 + C^2)C^2]$ and $\epsilon = 1$. Therefore, we see that $x_{3t+4} = 1$. Therefore, by the principle of mathematical induction it holds for all $t \in \mathbb{N} \cup \{0\}$. Thus, we have
$$
f_{3t+1}(x_{3t+1}) + f_{3t+2}(x_{3t+2}) + f_{3t+2}(x_{3t+2}) - f_{3t+1}(-1) - f_{3t+2}(-1) - f_{3t+3}(-1) \geq 2C - 4. 
$$
Therefore, for every 3 steps, $\adam$ suffers a regret of at least $2C-4$. More specifically, $R_T \geq (2C-4)T/3$. Since $C \geq 2$, this regret can be very large and furthermore, $R_T/T \nrightarrow 0$ as $T \rightarrow \infty$, which completes the proof of the case where $\epsilon = 1$. For the general $\epsilon$ case, we consider the following sequence of functions:
\[
f_t(x)= 
\begin{cases}
C\sqrt{\epsilon}x, & \text{for } t \bmod 3 = 1 \\
-\sqrt{\epsilon}x, & \text{otherwise},
\end{cases}
\]
The functions are essentially rescaled in a manner so that the resultant updates of $\adam$ correspond to the one in the optimization setting described above. Using essentially the same argument as above, it is easy to show that the regret  $R_T \geq (2C-4)\sqrt{\epsilon}T/3$ and thus, the average regret is non-zero asymptotically, which completes the proof.
 \end{proof}

\section{Auxiliary Lemma}

\begin{lemma}[\citep{McMahan10}]
\label{lem:proj-lemma}
For any $Q \in \mathcal{S}_+^d$ and convex feasible set $\mathcal{F} \subset \mathbb{R}^d$, suppose $u_1 = \min_{x \in \mathcal{F}} \|Q^{1/2}(x - z_1 )\|$  and $u_2 = \min_{x \in \mathcal{F}} \|Q^{1/2}(x - z_2)\|$ then we have $\|Q^{1/2}(u_1 - u_2)\| \leq \|Q^{1/2}(z_1 - z_2)\|$.
\end{lemma}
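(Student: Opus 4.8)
The plan is to reduce the statement to the standard fact that Euclidean projection onto a closed convex set is non-expansive, via the linear change of coordinates induced by $Q^{1/2}$. Since $Q \in \mathcal{S}_+^d$, the map $x \mapsto Q^{1/2}x$ is an invertible linear bijection, so setting $w_j = Q^{1/2}z_j$ for $j \in \{1,2\}$ and $\mathcal{G} = Q^{1/2}\mathcal{F} := \{Q^{1/2}x : x \in \mathcal{F}\}$, the set $\mathcal{G}$ is again convex (and closed/bounded, inheriting these from $\mathcal{F}$), and the identity $\|Q^{1/2}(x - z_j)\| = \|Q^{1/2}x - w_j\|$ shows that $Q^{1/2}u_j = \arg\min_{y \in \mathcal{G}}\|y - w_j\| =: \Pi_{\mathcal{G}}(w_j)$ is exactly the ordinary Euclidean projection of $w_j$ onto $\mathcal{G}$. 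Thus the claim $\|Q^{1/2}(u_1 - u_2)\| \le \|Q^{1/2}(z_1 - z_2)\|$ is precisely $\|\Pi_{\mathcal{G}}(w_1) - \Pi_{\mathcal{G}}(w_2)\| \le \|w_1 - w_2\|$.

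The second step is to prove (or cite) this non-expansiveness. I would include the short self-contained argument based on the first-order optimality condition: for each $j$, the point $p_j := Q^{1/2}u_j$ minimizes the differentiable convex function $y \mapsto \tfrac12\|y - w_j\|^2$ over the convex set $\mathcal{G}$, so $\langle w_j - p_j,\, v - p_j\rangle \le 0$ for every $v \in \mathcal{G}$. Taking $v = p_2$ in the $j=1$ inequality and $v = p_1$ in the $j=2$ inequality and adding yields $\langle (w_1 - w_2) - (p_1 - p_2),\, p_2 - p_1\rangle \le 0$, which rearranges to $\|p_1 - p_2\|^2 \le \langle w_1 - w_2,\, p_1 - p_2\rangle$. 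Applying Cauchy--Schwarz on the right-hand side and cancelling a factor of $\|p_1 - p_2\|$ (the case $p_1 = p_2$, i.e.\ $u_1 = u_2$, being trivial) gives $\|p_1 - p_2\| \le \|w_1 - w_2\|$, i.e.\ $\|Q^{1/2}(u_1 - u_2)\| \le \|Q^{1/2}(z_1 - z_2)\|$, as desired.

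There is no genuinely hard step here; this is a classical lemma. The only points meriting explicit care are (i) well-definedness of $u_1, u_2$ as unique minimizers, which follows from strict convexity of $y \mapsto \|y - w_j\|^2$ together with closedness and convexity of $\mathcal{G}$ (closedness coming from the bounded-diameter assumption on $\mathcal{F}$ in the paper's setup and continuity of $Q^{1/2}$), and (ii) the justification of the variational inequality, which is the standard necessary-and-sufficient first-order condition for minimizing a differentiable convex function over a convex set. I would state these briefly and then finish with the two-line computation above.
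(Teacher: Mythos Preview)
Your proof is correct and is essentially the same argument as the paper's: both rely on the first-order optimality (variational) inequality for the projection and then finish with an elementary inequality. The only cosmetic differences are that you first change coordinates via $Q^{1/2}$ to reduce to the standard Euclidean projection (whereas the paper works directly in the $Q$-weighted inner product), and you close with Cauchy--Schwarz while the paper uses the AM--GM/Young inequality $\langle a, Qb\rangle \le \tfrac12(\langle a,Qa\rangle + \langle b,Qb\rangle)$; neither difference is substantive.
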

\begin{proof}
We provide the proof here for completeness. Since $u_1 = \min_{x \in \mathcal{F}} \|Q^{1/2}(x - z_1 )\|$  and $u_2 = \min_{x \in \mathcal{F}} \|Q^{1/2}(x - z_2)\|$ and from the property of projection operator we have the following:
\begin{align*}
\langle z_1 - u_1, Q(z_2 - z_1) \rangle \geq 0 \text{ and } \langle z_2 - u_2, Q(z_1 - z_2) \rangle \geq 0.
\end{align*}
Combining the above inequalities, we have 
\begin{align}
\langle u_2 - u_1, Q(z_2 - z_1) \rangle \geq \langle z_2 - z_1, Q(z_2 - z_1) \rangle.
\label{eq:lemma-cross-term-ineq}
\end{align}
 Also, observe the following:
\begin{align*}
\langle u_2 - u_1, Q(z_2 - z_1) \rangle \leq \frac{1}{2}[\langle u_2 - u_1, Q(u_2 - u_1) \rangle + \langle z_2 - z_1, Q(z_2 - z_1) \rangle]
\end{align*}
The above inequality can be obtained from the fact that $\langle (u_2 - u_1) - (z_2 - z_1), Q((u_2 - u_1) - (z_2 - z_1)) \rangle \geq 0$ as $Q \in \mathcal{S}_+^d$ and rearranging the terms. Combining the above inequality with Equation~\eqref{eq:lemma-cross-term-ineq}, we have the required result.
\end{proof}

\begin{lemma}[\citep{Auer00}]
\label{lem:simple-grad-bound}
For any non-negative real numbers $y_1, \cdots, y_t$, the following holds:
$$
\sum_{i=1}^t \frac{y_i}{\sqrt{\sum_{j=1}^i y_j}} \leq 2 \sqrt{\sum_{i=1}^t y_i}.
$$
\end{lemma}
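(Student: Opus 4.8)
The plan is to prove the bound by induction on $t$, exploiting the telescoping structure of the prefix sums $S_i := \sum_{j=1}^i y_j$. First I would dispose of degeneracies: if some $y_i = 0$ the corresponding summand is taken to be $0$, so without loss of generality every prefix sum that appears in a denominator is strictly positive (alternatively, one can run the argument on $y_i + \varepsilon$ and let $\varepsilon \downarrow 0$). The base case $t = 1$ is immediate: the left-hand side is $y_1/\sqrt{y_1} = \sqrt{y_1} \le 2\sqrt{y_1}$.

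For the inductive step, assume the inequality for $t - 1$. Then
$$
\sum_{i=1}^t \frac{y_i}{\sqrt{S_i}} \;=\; \sum_{i=1}^{t-1} \frac{y_i}{\sqrt{S_i}} + \frac{y_t}{\sqrt{S_t}} \;\le\; 2\sqrt{S_{t-1}} + \frac{y_t}{\sqrt{S_t}},
$$
so it suffices to establish $\frac{y_t}{\sqrt{S_t}} \le 2\bigl(\sqrt{S_t} - \sqrt{S_{t-1}}\bigr)$. The key algebraic fact is obtained by rationalizing the right-hand side:
$$
2\bigl(\sqrt{S_t} - \sqrt{S_{t-1}}\bigr) \;=\; \frac{2(S_t - S_{t-1})}{\sqrt{S_t} + \sqrt{S_{t-1}}} \;=\; \frac{2 y_t}{\sqrt{S_t} + \sqrt{S_{t-1}}} \;\ge\; \frac{2 y_t}{2\sqrt{S_t}} \;=\; \frac{y_t}{\sqrt{S_t}},
$$
where the inequality uses $S_{t-1} \le S_t$. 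Adding $2\sqrt{S_{t-1}}$ to both sides of this bound closes the induction.

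An essentially equivalent route, which I would mention as a sanity check, is the integral comparison: since $x \mapsto x^{-1/2}$ is decreasing, $\frac{y_i}{\sqrt{S_i}} = \frac{S_i - S_{i-1}}{\sqrt{S_i}} \le \int_{S_{i-1}}^{S_i} \frac{dx}{\sqrt{x}}$, and summing over $i$ telescopes to $\int_0^{S_t} \frac{dx}{\sqrt{x}} = 2\sqrt{S_t}$. There is no genuine obstacle here — the proof is a one-line telescoping argument; the only points requiring a word of care are the treatment of vanishing $y_i$ and the monotonicity $S_{t-1} \le S_t$, which is exactly what makes the rationalization step valid.
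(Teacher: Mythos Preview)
Your proof is correct and is the standard argument for this well-known inequality. Note, however, that the paper does not actually supply its own proof of this lemma: it is stated as an auxiliary lemma and attributed to \citet{Auer00}, so there is nothing to compare against directly. Your induction via the telescoping bound $\frac{y_t}{\sqrt{S_t}} \le 2(\sqrt{S_t} - \sqrt{S_{t-1}})$, obtained by rationalizing $\sqrt{S_t} - \sqrt{S_{t-1}}$ and using $S_{t-1} \le S_t$, is precisely the classical proof; the integral-comparison alternative you sketch is likewise standard and equivalent.
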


\begin{lemma}
\label{lem:1d-proj-prop}
Suppose $\mathcal{F} = [a, b]$ for $a, b \in \mathbb{R}$ and
$$
y_{t+1} = \Pi_{\mathcal{F}}(y_{t} + \delta_t)
$$
for all the $t \in [T]$, $y_1 \in \mathcal{F}$ and furthermore, there exists $ i \in [T]$ such that $ \delta_j \leq 0$ for all $j \leq i$ and $\delta_j > 0$ for all $j > i$. Then we have,
$$
y_{T+1} \geq \min \{b, y_1 + \sum_{j=1}^T \delta_j\}.
$$
\end{lemma}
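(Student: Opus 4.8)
The plan is to split the iteration into two phases according to the sign of $\delta_j$: a \emph{descent phase} $j = 1, \dots, i$ where every step is non-positive, and an \emph{ascent phase} $j = i+1, \dots, T$ where every step is strictly positive. In one dimension the projection onto $\mathcal{F} = [a,b]$ is just clipping, so $y_{j+1} = \max\{a, \min\{b, y_j + \delta_j\}\}$, and $y_j \in [a,b]$ for all $j$. The structural facts I would record first are: (i) in the descent phase no clipping at the \emph{upper} endpoint $b$ can occur, because $y_j \le b$ and $\delta_j \le 0$ force $y_j + \delta_j \le b$; hence $y_{j+1} = \max\{a, y_j + \delta_j\} \ge y_j + \delta_j$; and (ii) in the ascent phase no clipping at the \emph{lower} endpoint $a$ can occur, because $y_j \ge a$ and $\delta_j > 0$ force $y_j + \delta_j > a$; hence $y_{j+1} = \min\{b, y_j + \delta_j\}$.

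From (i), telescoping the inequality $y_{j+1} \ge y_j + \delta_j$ over $j = 1, \dots, i$ immediately gives $y_{i+1} \ge y_1 + \sum_{j=1}^i \delta_j$ (and $y_{i+1} \ge a$, which is all I need to retain about the lower endpoint). Next I would handle the ascent phase by a case split. If at some step $j^\star \in \{i+1, \dots, T\}$ one has $y_{j^\star} + \delta_{j^\star} \ge b$, then $y_{j^\star+1} = b$ by (ii), and since the remaining increments are strictly positive a one-line induction shows $y_j = b$ for all $j \ge j^\star+1$; in particular $y_{T+1} = b \ge \min\{b,\ y_1 + \sum_{j=1}^T \delta_j\}$, and we are done. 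Otherwise no clipping ever occurs in the ascent phase, so (ii) gives $y_{j+1} = y_j + \delta_j$ for every $j = i+1, \dots, T$; telescoping and combining with the descent-phase bound yields $y_{T+1} = y_{i+1} + \sum_{j=i+1}^T \delta_j \ge y_1 + \sum_{j=1}^T \delta_j \ge \min\{b,\ y_1 + \sum_{j=1}^T \delta_j\}$.

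The only mildly delicate point — the one I would be careful to spell out rather than assert — is the ``absorbing'' behavior of the upper endpoint during the ascent phase: once the iterate reaches $b$ it cannot leave, precisely because every subsequent increment is strictly positive and gets clipped back to $b$. Everything else is bookkeeping with the clipping formula and two telescoping sums, so I do not expect any genuine obstacle. (If one wishes to also cover the degenerate situation where every $\delta_j > 0$, one takes the descent phase to be empty with $y_{i+1} = y_1$, and the argument goes through verbatim.)
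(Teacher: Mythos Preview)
Your proposal is correct and follows essentially the same two-phase approach as the paper: both first establish $y_{i+1} \ge y_1 + \sum_{j=1}^i \delta_j$ from the descent phase, then argue $y_{T+1} \ge \min\{b,\ y_{i+1} + \sum_{j=i+1}^T \delta_j\}$ from the ascent phase, and combine. The paper's proof is a terse three-line sketch asserting these two facts directly, while you spell out the clipping formula, the reasons each endpoint is inactive in its respective phase, and the absorbing property of $b$ via an explicit case split --- but the structure and ideas are the same.
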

\begin{proof}
It is first easy to see that 
$y_{i+1} \geq y_1 + \sum_{j=1}^i \delta_j$ since $ \delta_j \leq 0$ for all $j \leq i$. Furthermore, also observe that $y_{T+1} \geq \min\{b, y_{i+1} + \sum_{j=i+1}^T \delta_j\}$ since $ \delta_j \geq 0$ for all $j > i$. Combining the above two inequalities gives us the desired result.
\end{proof}

\end{document}